\DeclareMathOperator{\Id}{I}
\DeclareMathOperator{\Borel}{Borel}
\DeclareMathOperator{\argmax}{argmax}
\newcommand{\Dists}{\mathscr{P}}
\newcommand{\tran}{\mathsf{T}}
\newcommand{\subalign}[1]{%
  \vcenter{%
    \Let@ \restore@math@cr \default@tag
    \baselineskip\fontdimen10 \scriptfont\tw@
    \advance\baselineskip\fontdimen12 \scriptfont\tw@
    \lineskip\thr@@\fontdimen8 \scriptfont\thr@@
    \lineskiplimit\lineskip
    \ialign{\hfil$\m@th\scriptstyle##$&$\m@th\scriptstyle{}##$\hfil\crcr
      #1\crcr
    }%
  }%
}
\newcommand\numberthis{\addtocounter{equation}{1}\tag{\theequation}}
\renewcommand{\S}{\mathcal{S}}
\newcommand{\X}{\mathcal{X}}
\renewcommand{\P}{\mathcal{P}}
\newcommand{\A}{\mathcal{A}}
\newcommand{\M}{\mathcal{M}}
\newcommand{\R}{\mathcal{R}}
\newcommand{\U}{\mathcal{U}}
\newcommand{\T}{\mathcal{T}}
\newcommand{\C}{\mathcal{C}}
\newcommand{\F}{\mathcal{F}}
\newcommand{\B}{\mathcal{B}}
\newcommand{\W}{\mathcal{W}}
\newcommand{\G}{\mathcal{G}}
\renewcommand{\d}{\mathtt{d}}
\newcommand{\bE}{\mathbb{E}}
\newcommand{\bN}{\mathbb{N}}
\newcommand{\bP}{\mathbb{P}}
\newcommand{\bPc}[1]{\mathbb{P}\left\{#1\right\}}
\newcommand{\bR}{\mathbb{R}}
\renewcommand{\norm}[1]{\left\lVert#1\right\rVert}
\newcommand{\smallnorm}[1]{\lVert#1\rVert}
\newtheorem{theorem}{Theorem}[section]
\newtheorem{lemma}{Lemma}[section]
\newtheorem{corollary}{Corollary}[theorem]
\newtheorem{prop}{Proposition}[section]
\theoremstyle{definition}
\newtheorem{definition}{Definition}[section]
\begin{document}

\runningauthor{ Philip Amortila, Doina Precup, Prakash Panangaden, Marc G. Bellemare}

\twocolumn[

\aistatstitle{A Distributional Analysis of Sampling-Based Reinforcement Learning Algorithms}

\aistatsauthor{  Philip Amortila$^{\alpha}$ \And Doina Precup$^{\alpha,\beta}$ \And  Prakash Panangaden$^\alpha$ \And Marc G. Bellemare$^{\alpha,\beta,\gamma}$ }
\aistatsaddress{ $^\alpha$McGill University;\ \ $^\beta$CIFAR Fellow;\ \ $^\gamma$Google Research}]

\begin{abstract}
We present a distributional approach to theoretical analyses of reinforcement learning algorithms for constant step-sizes. We demonstrate its effectiveness by presenting simple and unified proofs of convergence for a variety of commonly-used methods. 
We show that value-based methods such as TD($\lambda$) and $Q$-Learning have update rules which are contractive in the space of distributions of functions, thus establishing 
their exponentially fast convergence to a stationary distribution. 
We demonstrate that the stationary distribution obtained by any algorithm whose target is an expected Bellman update has a mean which is equal to the true value function.
Furthermore, we establish that the distributions concentrate around their mean as the step-size shrinks.
We further analyse the optimistic policy iteration algorithm, for which the contraction property does not hold, and formulate a probabilistic policy improvement property which entails the convergence of the algorithm. 

\end{abstract}

\section{Introduction}
\label{sec:intro}
Basic results in the theory of Markov decision processes (MDPs) and dynamic programming (DP) rely on the two fundamental properties of the Bellman operator: contraction and monotonicity. For instance, proofs of convergence for value iteration and policy iteration follow immediately from the contractive properties of the Bellman operators and the Banach fixed point theorem \citep{szepesvari2010algorithms}.

However, proving the convergence of sample-based algorithms such as TD-learning \citep{sutton1988learning} or optimistic policy iteration \citep{tsitsiklis2002convergence} requires substantially more effort. The typical stochastic approximation approach relies on hitting-time or martingale arguments
to bound the sequence of value function iterates within progressively smaller regions (see, e.g., \cite[Section 4.3]{bertsekas1996neuro}).

In this work we present a distributional framework for analyzing sample-based reinforcement learning algorithms. Rather than consider the evolution of the random point estimate produced by the learning process, we study the dynamics of the \emph{distribution} of these point estimates. As a concrete example, we view the TD($0$) algorithm as defining a sequence of random iterates $(V_n)_{n \in \bN}$ whose distributions are recursively defined by the distributional equation
\begin{equation}\label{eq:td0}
V_{n+1}(s) \stackrel{D}{=} (1-\alpha) V_n(s) + \alpha \left(R(s,A) + \gamma V_n(S')\right) ,
\end{equation}
where $s$ is the initial state and $(A, R, S')$ is the random action-reward-next-state transition sampled from the underlying Markov Decision Process. The equation recursively describes the distribution of a random variable undergoing the TD learning process.

We study the constant step-size case. Our main contribution is to show that, for a variety of algorithms, the random iterates converge in distribution to a fixed point of the corresponding distributional equation, even though the random point estimate may not converge. We further characterize this fixed point, showing that it depends on both the step-size and the specific Markov Decision Process under consideration. 
Our framework views the learning process as defining a time-homogeneous Markov chain over the space of value functions. We prove convergence by establishing the existence of a stationary distribution for this Markov chain and demonstrating that the sequence of random iterates generated by a sample-based learning rule must converge to this stationary distribution, using tools from optimal transport \citep{Villani08}.

We first analyze sample-based algorithms whose corresponding distributional operator is a contraction mapping in the infinity norm, including TD$(\lambda)$, $Q$-learning, and double $Q$-learning. Following a proof technique of \citep{bach}, we lift these stochastic algorithms to the distributional setting. We show that this lifting recovers contractive guarantees, now in the Wasserstein metric using the infinity norm as a cost function. The contraction coefficient depends on the discount factor, as usual, but also on the step-size: updates with smaller step-sizes converge more slowly to their distributional fixed point. TD($0$), for example, is a contraction mapping with coefficient $1 - \alpha + \alpha \gamma$.

We also analyze the sample-based equivalent of policy iteration, called optimistic policy iteration  \parencite{tsitsiklis2002convergence} or Monte Carlo control \citep{sutton1998introduction}. The convergence of policy iteration is not driven by a contraction mapping, but rather by the monotonicity of the policy iteration operator \citep{puterman1994markov}. We derive a similar, weaker property for the sample-based setting which we call probabilistic policy improvement. We use this property to show that optimistic policy improvement also converges to a distributional fixed point.

By recovering the contraction mapping that underlies many dynamic programming algorithms, our distributional analysis significantly simplifies existing proofs of convergence for stochastic algorithms, at least for constant step-sizes. Our approach easily allows us to quantify the limiting behaviour of these algorithms; the same tool even provides us with confidence bounds over the true value function. We believe this type of analysis should prove useful going forward, including for the study of reinforcement learning with function approximation.

\section{Background}
\label{sec:background}

We write $\Dists(\X)$ for the set of probability distributions on a space $\X$. We consider an agent interacting with an environment modelled as a finite Markov decision process $\left(\S,\A,\R,\P,\gamma\right)$. As usual, $\S$ is a finite state space, $\A$ is a finite set of actions, $\R: \S \times \A \rightarrow \Dists([0,\textsc{Rmax}])$ is a bounded reward distribution function, $\P : \S \times \A \rightarrow \Dists(\S)$ is a transition distribution function, and $\gamma \in [0,1)$ is a discount factor. The strategy of the agent is captured by a policy $\pi : \S \rightarrow \Dists(\A)$. The value function $v^\pi : \S \rightarrow \bR$ of a policy $\pi$ is the expected discounted sum of rewards observed when starting at state $s$ and following policy $\pi$. The value function is the fixed point of the Bellman operator $\T^\pi$ defined by 
\begin{equation}\label{eq:bell-pi}
\T^\pi v(s) \coloneqq \bE_{\subalign{&a\sim\pi(\cdot|s)\\&r\sim\R(\cdot|s,a)}}\left[r + \gamma \bE_{s' \sim \P(\cdot|s,a)}\left[v(s')\right]\right].
\end{equation}
The value function of the optimal policy $\pi^\star$ is also the fixed point of the Bellman optimality operator $\T^\star$, defined by 
\begin{equation}\label{eq:bell-opt}
\T^\star v(s) \coloneqq \max_a\Big\{\bE_{\subalign{&r\sim\R(\cdot|s,a)\\&s'\sim\P(\cdot|s,a)}}\left[r + \gamma v(s')\right]\Big\}.
\end{equation}
A closely-related object is the \textit{action-value function} $q^\pi$, the expected discounted return of first taking action $a$ and thereafter following policy $\pi$. The action-value function satisfies the Bellman equations $ q^\pi(s,a) = \T^\pi q^\pi(s,a)$ and $q^\star(s,a) = \T^\star q^\star(s,a) $, where $\T^\pi$ and $\T^\star$ are defined analogously to Equations \eqref{eq:bell-pi} and \eqref{eq:bell-opt} \parencite{sutton1998introduction}.
The Bellman operators for value functions (resp. action-value functions) are contractions on $\bR^{|\S|}$ (resp. $\bR^{|\S|\times|\A|}$) with respect to the infinity norm $\norm{v} := \norm{v}_\infty = \max_i |v_i|$  \parencite{puterman1994markov}. A policy $\pi$ is called greedy with respect to an action-value function $Q \in \bR^{|\S|\times|\A|}$ if $\pi(s) \in \argmax_a{Q(s,a)}$ for each $s \in \S$.

\subsection{Couplings and the Wasserstein Metric}

To establish convergence in distribution, we will use the Wasserstein metric $\W$ between distributions \parencite{Villani08}. As a cost function, we use the infinity norm. For two distributions $\mu,\nu \in \Dists(\bR^\d)$, a pair of random vectors $(X,Y)$ is a coupling of $(\mu,\nu)$ if $X \sim \mu$ and $Y \sim \nu$. We write $\Xi(\mu,\nu)$ for the set of such couplings. The Wasserstein metric on $\Dists(\bR^\d)$ with the infinity norm as a cost function is defined as: 
\begin{equation}
\W(\mu,\nu) = \inf_{(X,Y) \in \Xi(\mu,\nu)} \bE\left[\norm{X-Y}_\infty\right]. \label{eq:wasserstein}
\end{equation}
The metric is defined over the set $\M(\bR^\d) = \left\{ \mu \in \Dists(\bR^\d): \int \norm{x}_\infty \mu(\dd{x}) < + \infty \right\}$ of measures with finite first moment. Assuming a bounded reward function, we will always be dealing with finite-moment measures. 
The Wasserstein metric characterizes the \textit{weak convergence} of measures \parencite[Theorem 6.9]{Villani08}, or equivalently the convergence in distribution of the associated random variables.

\section{Markov Chains on the Space of Functions}\label{sec:markov-processes}
With many value-based RL algorithms, the stochasticity of the algorithm depends only on the sampled transition and the random current estimate. For example, recalling the update rule for TD($0$) (Equation \eqref{eq:td0}), the value of $V_{n+1}(s)$ for a particular state is fully determined by knowledge of $V_n$ and the action, reward, and successor state which was sampled from $s$:
\begin{equation*}
    \bP\left\{V_{n+1} \mid V_n, V_{n-1},...,V_1,V_0\right\} = \bP\left\{V_{n+1} \mid V_n\right\}.
\end{equation*}
We therefore view these methods as inducing Markov chains on the space of value functions. We note that their state space is continuous rather than discrete -- we take it to be $\bR^{|\S|}$ when modelling value functions or $\bR^{|\S|\times|\A|}$ when modelling action-value functions. When results hold for both cases, we will write the discussion in terms of $\bR^{\d}$, $\d \in \bN$. Whenever needed, we may also restrict ourselves to the subset of realizable functions $[0, \frac{\textsc{Rmax}}{1 - \gamma}]^{\d} \subset \bR^{\d}$.

For a given update rule $\U$ and step-size $\alpha$, the transition function for an induced Markov chain is as follows. Given $f_k \in \bR^\d$, let $f_{k+1}$ be the random function obtained by applying $\U$ with step-size $\alpha$. For a Borel set $\B \in \texttt{Borel}(\bR^{\d})$, we define the Markov kernel $K_{\U,\alpha}$ as:
$$K_{\U,\alpha}(f_{k},\mathcal{B}) = \bP\left\{f_{k+1} \in \mathcal{B} | f_k \right\}.$$
 This Markov kernel describes the probability of transitioning from $f_{k}$ to some function in the set $\B$ under the update rule. In the sequel, we omit the subscripts on the kernel when the update rule is clear from context. 
 For a given probability measure $\mu \in \Dists(\bR^{\d})$, the distribution of functions after one transition of the Markov chain is given by $$\mu K (\B) = \int_{\bR^\d} K(\theta,\B)\mu(\dd{\theta}).$$

The distribution of functions after $n$ transitions is given by $K^n$, which is defined inductively as: $$K^{n}(\theta, \B) = \int_{\bR^{\d}} K(\theta',\B)K^{n-1}(\theta,\dd{\theta'}).$$ A probability measure $\psi$ is a \textit{stationary distribution} for a Markov chain with kernel $K$ if $$\psi= \psi K.$$
An algorithm updates synchronously when all states or state-actions pairs are updated at every iteration. In the regime of constant step-sizes and synchronous updates, the Markov kernels are \textit{time-homogeneous} (or time-independent). Thus, the law $\mu_n(\B) = \bPc{f_n \in \B}$ of the random variable $f_n$ is given by: $$\mu_{n} \leftarrow \mu_0 (K)^n.$$

\subsection{Stochastic operators}
In this section, we introduce the notion of a stochastic operator and provide a general formalism for the analysis of stochastic update rules. We will distinguish two classes of stochastic operators which will require different analyses.

We model the sampling space as a probability space $(\Omega, \F,\eta)$. A stochastic operator is a map between functions which depends on a randomly sampled event $\omega \in \Omega$. 
\begin{definition}[Stochastic operator]
A stochastic operator is a function $\widehat{\T} : \bR^\d \times \Omega \rightarrow \bR^\d$.
\end{definition}
When operating on functions, a stochastic operator $\widehat{\T}$ outputs a random function. We will write a number of stochastic value-based algorithms as
\begin{equation}\label{eq:general-alg}
f_{n+1} = (1-\alpha) f_n + \alpha \widehat{\T}(f_n,\omega ),
\end{equation}
where $f_n,f_{n+1} \in \bR^\d$ are functions, $\alpha$ is a step-size, and $\widehat{\T}$ is some algorithm-dependent stochastic operator. In this notation, the operator $\widehat{\T}$ is the \textit{target} of the algorithm. We say that $\widehat{\T}$ is an \textit{empirical Bellman operator} if it behaves like a Bellman operator in expectation.
\begin{definition}[Empirical Bellman Operator]
The stochastic operator $\widehat{\T}$ is an empirical Bellman operator for a policy $\pi$ if 
$$
\bE_{\omega \sim \eta}[\widehat{\T}(f,\omega)] = \T^\pi f \quad \forall f \in \bR^\d.
$$
Similarly,  $\widehat{\T}$ is an empirical Bellman \textit{optimality} operator if $\bE_{\omega \sim \eta}[\widehat{\T}(f,\omega)] = \T^\star f$.
\end{definition}

In general, the sampling distribution of the stochastic operator may depend on the function which it is acting on. 
Two examples of methods for which the sampling distribution is independent of the current function estimates are TD($0$), which applies an empirical Bellman operator, and $Q$-Learning, which applies an empirical Bellman optimality operator. TD($0$) is defined by the stochastic operator 
\begin{equation}\label{eq:td(0)_stochastic_op}
    \widehat{\T}(V,(a_s,r_s,s'_s)_{s \in \S})(s) = r_s + \gamma V(s'_s),
\end{equation} where $(a_s,r_s,s'_s)$ is a transition sampled for every state, and $Q$-Learning is defined by the operator
\begin{equation*}
\widehat{\T}(Q,(r_{s,a},s'_{s,a})_{(s,a)})(s,a) = r_{s,a} + \gamma \max_{a'}Q(s'_{s,a},a'),
\end{equation*}
where $(r_{s,a},s'_{s,a})$ is a transition sampled for every state-action pair. Convergence of these methods is covered in Section \ref{sec:contraction}. On the other hand, methods for which the sampling of the update rule depends on the function being updated are more akin to policy iteration, which applies Bellman operators that depend on the current greedy policy. We will see an example of such a method in Section \ref{sec:mcc}. 

\section{Convergence via Contraction to a Stationary Distribution}\label{sec:contraction}
In this section we demonstrate that common value-based algorithms converge to a stationary distribution when updated synchronously and with constant step-sizes. The convergence follows by showing that their Markov kernels are contractive with respect to the Wasserstein metric. To illustrate our approach, we provide a proof of convergence for TD($0$). With the same proof method, we also establish convergence and give convergence rates for Monte Carlo evaluation, $Q$-Learning, TD($\lambda$), SARSA, Expected SARSA \parencite{van2009theoretical}, and Double Q-Learning \parencite{hasselt2010double}. The proofs for these other algorithms are given in Appendix \ref{sec:laundry}.

Recall the update rule of the synchronous TD($0$) algorithm given by Equation \eqref{eq:td0}.
\begin{prop}\label{thm:contract}
For any step size $0<\alpha\leq 1$, the TD($0$) algorithm has a contractive Markov kernel $K_\alpha$:
\begin{equation}\label{eq:contract}
\W(\mu K_\alpha, \nu K_\alpha) \leq (1-\alpha+\alpha\gamma)\W(\mu,\nu),
\end{equation}
for all $\mu,\nu \in \M(\bR^{|\S|})$.
\end{prop}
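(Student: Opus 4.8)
The plan is to prove the contraction by exhibiting an explicit \emph{synchronous coupling} of the two Markov chains and bounding the resulting transport cost. First I would take an optimal coupling $(V,W) \in \Xi(\mu,\nu)$ realizing $\bE[\norm{V-W}_\infty] = \W(\mu,\nu)$, which exists since $\norm{\cdot}_\infty$ is continuous on $\bR^{|\S|}$ and $\mu,\nu$ have finite first moments \parencite[Theorem 4.1]{Villani08}; failing that, an $\varepsilon$-optimal coupling is available unconditionally and suffices by letting $\varepsilon \downarrow 0$ at the end.

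Next I would couple the two updates by driving them with the \emph{same} sampled randomness. Concretely, draw a single family of transitions $\omega = (A_s,R_s,S'_s)_{s\in\S}$ from the sampling distribution $\eta$, independently of $(V,W)$, and set
\begin{align*}
V'(s) &= (1-\alpha)V(s) + \alpha\big(R_s + \gamma V(S'_s)\big),\\
W'(s) &= (1-\alpha)W(s) + \alpha\big(R_s + \gamma W(S'_s)\big).
\end{align*}
Since the sampling distribution for TD($0$) does not depend on the current iterate and $\omega$ is drawn independently of $(V,W)$, the marginals satisfy $V' \sim \mu K_\alpha$ and $W' \sim \nu K_\alpha$, so $(V',W') \in \Xi(\mu K_\alpha, \nu K_\alpha)$ and therefore $\W(\mu K_\alpha,\nu K_\alpha) \le \bE[\norm{V'-W'}_\infty]$.

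It then remains to do the pointwise estimate: for each $s$ the reward terms cancel, and using $1-\alpha \ge 0$ (this is where $\alpha \le 1$ enters) together with the triangle inequality,
\[
|V'(s)-W'(s)| = \big|(1-\alpha)(V(s)-W(s)) + \alpha\gamma\big(V(S'_s)-W(S'_s)\big)\big| \le (1-\alpha)\norm{V-W}_\infty + \alpha\gamma\norm{V-W}_\infty .
\]
Taking the maximum over $s\in\S$ shows $\norm{V'-W'}_\infty \le (1-\alpha+\alpha\gamma)\norm{V-W}_\infty$ for every realization of the randomness, and taking expectations gives $\bE[\norm{V'-W'}_\infty] \le (1-\alpha+\alpha\gamma)\,\bE[\norm{V-W}_\infty] = (1-\alpha+\alpha\gamma)\W(\mu,\nu)$, which combined with the previous display yields \eqref{eq:contract}.

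I do not anticipate a serious obstacle. The one point that needs care is the \emph{validity of the coupling} — verifying that applying the TD($0$) update with shared randomness to each marginal really produces a coupling of $\mu K_\alpha$ and $\nu K_\alpha$ — which hinges on the sampling distribution being independent of the current function (true for TD($0$), but \emph{not} for the optimistic policy iteration setting considered later) and on $\omega$ being drawn independently of $(V,W)$. Everything else is the triangle inequality together with the definition of $\W$.
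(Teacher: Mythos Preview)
Your proposal is correct and follows essentially the same approach as the paper: take an optimal coupling of $(\mu,\nu)$, drive both updates with shared sampled transitions so the rewards cancel, and bound the remaining difference by $(1-\alpha+\alpha\gamma)\norm{V-W}_\infty$. The only cosmetic difference is that you obtain an almost-sure pointwise bound before taking expectations, whereas the paper splits $\bE[\norm{V_1^{(1)}-V_1^{(2)}}]$ into two expectation terms and bounds each separately; both routes are equivalent.
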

\begin{proof}
Let $\mu^{(1)}, \mu^{(2)} \in \M(\bR^{|\S|})$ be two distributions of function estimates. Let $V^{(1)}_0 \sim \mu^{(1)}, V^{(2)}_0 \sim \mu^{(2)}$ be the coupling which minimizes the Wasserstein metric, i.e.:
 $$\W(\mu^{(1)},\mu^{(2)}) = \inf_{(X,Y)} \bE[ \norm{X-Y} ] = \bE\left[\smallnorm{V^{(1)}_0-V^{(2)}_0}\right].$$ 
 Such an optimal coupling always exists \parencite[Theorem 4.1]{Villani08}. We couple the updates $(V^{(1)}_1,V^{(2)}_1)$ to sample identical transitions at each state, i.e.:
\begin{equation}\label{eq:td-coupling}
\begin{aligned}
V^{(1)}_{1}(s) &= (1-\alpha)V^{(1)}_0(s) + \alpha\left(r_s + \gamma V^{(1)}_{0}(s'_s)\right) \\
V^{(2)}_{1}(s) &= (1-\alpha)V^{(2)}_0(s) + \alpha\left(r_s + \gamma V^{(2)}_{0}(s'_s)\right), 
\end{aligned}
\end{equation}
for the \textit{same} $a\  \sim \pi(\cdot|s),$ $r_s\  \sim \R(\cdot|s,a),$ and $s'_s \sim \P(\cdot|s,a)$. Note that this is a valid coupling of $(\mu^{(1)}K_\alpha,\mu^{(2)}K_\alpha)$ since $V_1^{(1)}$ and $V_1^{(2)}$ sample transitions from the same distributions. 
We upper-bound $\W( \mu^{(1)}K_\alpha,  \mu^{(2)}K_\alpha)$ with the coupling above. 
\begin{align}\label{eq:to-expand}
\W(\mu^{(1)} K_\alpha,\mu^{(2)} K_\alpha) &\leq \mathbb{E}\left[\smallnorm{V^{(1)}_1-V^{(2)}_1}\right] \nonumber \\ 
 &\leq (1-\alpha)\bE\left[ \smallnorm{V^{(1)}_0-V^{(2)}_0}\right] \nonumber \\ 
&\hspace{-3.2cm}+\alpha\bE\left[\max_s \big\lvert \left(r_s-r_s\right) + \gamma \big( V^{(1)}_0(s'_{s}) - V^{(2)}_0(s'_{s})\big) \big\rvert \right] 
\end{align}
We note that the expectation is over the pair $(V^{(1)}_0,V^{(2)}_0)$ as well as the random samples $a_s,r_s,s'_s$. By our coupling construction,
\begin{align*} \label{eq:emp-contraction}
&\hspace{-0.05cm}\bE\left[\max_s \big\lvert \left(r_s-r_s\right) + \gamma \big( V^{(1)}_0(s'_{s}) - V^{(2)}_0(s'_{s})\big) \big\rvert \right] \\
&\hspace{-0.05cm}=\gamma \bE \left[ \max_s \lvert V^{(1)}_0(s'_s) - V^{(2)}_0(s'_{s}) \rvert \right] \\
&\hspace{-0.05cm}\leq \gamma \bE \left[ \max_s  \lvert V^{(1)}_0(s) - V^{(2)}_0(s) \rvert \right] = \gamma\W(\mu^{(1)},\mu^{(2)}) \numberthis
\end{align*}
The inequality follows since $V_0^{(1)}$ and $V_0^{(2)}$ sample the same set of successor states -- the maximum is the same if each $s$ samples a different $s'_s$ and is lesser otherwise. Using Equation \eqref{eq:emp-contraction} in Equation \eqref{eq:to-expand} gives:
\begin{equation*}
\W(\mu^{(1)} K_\alpha,\mu^{(2)} K_\alpha) \leq (1-\alpha + \alpha\gamma)\W(\mu^{(1)},\mu^{(2)}). 
\end{equation*} 
Since $1-\alpha + \alpha\gamma< 1$, the kernel $K_\alpha$  is a contraction mapping.
\end{proof}

The contraction property readily entails the convergence to a stationary distribution. We initialize with any $V_0$ drawn from an arbitrary distribution of finite first moment. 
\begin{theorem}\label{thm:stat}
For any constant step size $0<\alpha \leq 1$ and initialization $V_0 \sim \mu_0 \in \M(\bR^{|\S|})$, the sequence of random variables $(V_n)_{n \geq 0}$ defined by the recursion (\ref{eq:td0}) converges in the Wasserstein metric to a unique stationary distribution $\psi^\text{TD(0)}_\alpha \in \M(\bR^{|\S|})$.
\end{theorem}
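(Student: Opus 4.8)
The plan is to invoke the Banach fixed point theorem on the metric space $(\M(\bR^{|\S|}), \W)$ using the contraction established in Proposition \ref{thm:contract}. The argument proceeds in three steps. \emph{Step 1: completeness.} First I would verify that $(\M(\bR^{|\S|}), \W)$ is a complete metric space. This is a standard fact about Wasserstein spaces over a Polish space with finite first moments (see \citet{Villani08}), so I would simply cite it; the only point to note is that $\bR^{|\S|}$ equipped with the $\infty$-norm is Polish, and the $\W$ here is exactly the $1$-Wasserstein distance for this cost, so the general theory applies verbatim. \emph{Step 2: the contraction is a self-map with a fixed point.} Proposition \ref{thm:contract} shows $K_\alpha$ acts on $\M(\bR^{|\S|})$ as a contraction with coefficient $1-\alpha+\alpha\gamma < 1$ (here one should note that $K_\alpha$ maps $\M$ into itself, since the reward is bounded so the update preserves finite first moments — indeed if $V_0$ has finite first moment then so does $(1-\alpha)V_0(s) + \alpha(r_s + \gamma V_0(s'_s))$). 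Banach's fixed point theorem then yields a unique $\psi \coloneqq \psi^{\text{TD}(0)}_\alpha \in \M(\bR^{|\S|})$ with $\psi K_\alpha = \psi$, i.e.\ a unique stationary distribution, and moreover $\W(\mu_0 K_\alpha^n, \psi) \leq (1-\alpha+\alpha\gamma)^n \W(\mu_0, \psi) \to 0$ for any starting $\mu_0 \in \M(\bR^{|\S|})$.

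\emph{Step 3: translating convergence of laws into convergence in the Wasserstein metric of the random variables.} The law of $V_n$ is $\mu_n = \mu_0 K_\alpha^n$ by the time-homogeneity observed in Section \ref{sec:markov-processes}, so the displayed geometric bound gives $\W(\mu_n, \psi) \to 0$; this is precisely the assertion that $(V_n)$ converges to $\psi^{\text{TD}(0)}_\alpha$ in the Wasserstein metric. If one wants to phrase it as convergence in distribution, one additionally invokes the fact quoted in Section \ref{sec:background} that $\W$ metrizes weak convergence on finite-first-moment measures.

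I do not expect any serious obstacle here — the theorem is essentially a corollary of Proposition \ref{thm:contract} plus completeness of the Wasserstein space. The only mildly delicate points, worth a sentence each, are (i) confirming that $K_\alpha$ genuinely maps $\M(\bR^{|\S|})$ into $\M(\bR^{|\S|})$ rather than out of it (needed for Banach to apply on the right space), which follows from boundedness of the rewards and the triangle inequality on first moments, and (ii) making sure the contraction inequality of Proposition \ref{thm:contract}, which is stated for the one-step kernel, is what feeds into the iteration — it does, by composing the bound $n$ times. Everything else is a direct citation of the Banach fixed point theorem and of standard facts about optimal transport from \citet{Villani08}.
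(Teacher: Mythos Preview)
Your proposal is correct and follows essentially the same approach as the paper: cite completeness of $(\M(\bR^{|\S|}),\W)$ from \citet{Villani08}, apply Banach's fixed point theorem using the contraction of Proposition~\ref{thm:contract}, and read off the unique stationary distribution and geometric convergence. The paper's proof is in fact terser than yours, omitting the explicit self-map check and the iteration of the one-step bound, so your extra care on points (i) and (ii) only adds rigor without changing the argument.
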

\begin{proof}
 The space of probability measures $\M(\bR^{|\S|})$ metrized with $\W$ is a complete metric space (\cite[Theorem 6.16]{Villani08}), and therefore it follows from Banach's fixed point theorem  that $(\mu_0 K_\alpha^n)_{n \geq 0}$ converges exponentially quickly to a unique fixed point $\psi^\text{TD(0)}_\alpha$. The distribution $\psi^\text{TD(0)}_\alpha$ is a stationary distribution by the fixed point property: $$\psi^\text{TD(0)}_\alpha K_\alpha = \psi^\text{TD(0)}_\alpha. \qedhere$$ 
\end{proof}

As evidenced by the above, lifting the analysis to distributions over value functions greatly simplifies the proof. The key is in the choice of a proper coupling. The same technique extends to a broad class of algorithms, with relatively few modifications. This avoids, for example, the additional hurdles caused by the greedy probability kernel in Q-learning \citep{tsitsiklis1994asynchronous}. We further note some surprising connections with distributional reinforcement learning \citep{bellemare2017distributional}. For $\alpha = 1$, the fixed point of TD($0)$ is in fact \citeauthor{bellemare2017distributional}'s return distribution. The same coupling, which forces two processes to sample the same transitions, has also been implicitly used to study the behaviour of distributional algorithms \citep{lyle2019comparative}.

\begin{table*}[t]
\begin{center}
\addtolength{\tabcolsep}{-0.8pt}
\small{
\begin{tabular}{@{}lccccccc@{}}
\toprule
& MC Evaluation & TD($\lambda)$ & SARSA & Expected SARSA & QL & Double QL \\ 
\midrule
Contraction factor & $ 1 - \alpha$ & $1 - \alpha + \alpha \gamma \frac{1 - \lambda}{1 - \lambda \gamma}$ & $1 - \alpha + \alpha \gamma$ & $1-\alpha+\alpha\gamma$ & $1 - \alpha + \alpha \gamma$ &  $\frac{1}{2}(2 - \alpha + \alpha \gamma$)\\
\bottomrule
\end{tabular}
}
\caption{Different sample-based algorithms which imply a contraction mapping in the Wasserstein metric over distributions on value functions. For each method, we also provide the corresponding contraction factor. Acronyms: Monte Carlo (MC), Q-Learning (QL). \label{table:laundry_list}} 
\end{center}
\end{table*}

To demonstrate the power of the approach, we summarize in Table \ref{table:laundry_list} a series of results regarding common sampling-based RL algorithms. Under similar conditions to Theorem \ref{thm:stat}, each algorithm listed in Table \ref{table:laundry_list} converges to a stationary distribution (which is in general different for different algorithms, as we show in the next section). Each proof only requires small adjustments to the basic proof template, for example an extended state space (Double Q-Learning). Full details, along with the proof template, are given in the appendix.

\section{The Stationary Distributions}\label{sec:stationary}
In this section, we characterize the stationary distributions which are attained by any algorithm whose target is a Bellman operator or Bellman optimality operator in expectation. In our notation, these algorithms are defined in terms of \textit{empirical} Bellman (optimality) operators. As before, we write the discussion in terms of $\bR^\d$ since results will hold for both value functions and action-value functions.

What do these distributions look like? We first consider the case of policy evaluation algorithms, which have as expected operator $\T^\pi$. In that case, their mean corresponds to the fixed point of $\T^\pi$, i.e. the value functions $v^\pi$ or $q^\pi$. Second, they concentrate around this mean in inverse proportion to the step-size $\alpha$. Hence, as expected, small step sizes lead to a more accurate distribution at the cost of a larger contraction factor. The full distributions are not symmetric or easily described, however; as a simple example, take $\alpha = 1$ in TD($0$), corresponding to the return distribution \citep{bellemare2017distributional}. In the case of optimality operators, we show that the mean of the stationary distributions is in fact greater than the fixed points $v^\star$ or $q^\star$. 

\subsection{Sample-based Evaluation Algorithms}

\begin{theorem}\label{thm:mean}
Suppose $\widehat{\T}^\pi$ is an empirical Bellman operator for some policy $\pi$ and that the updates \eqref{eq:general-alg} with step-size $\alpha$ converge to a stationary distribution $\psi_\alpha$. Let $f_\alpha \sim \psi_\alpha$ and $f^\pi$ be the fixed point of $\T^\pi$. %
Then $\bE[f_\alpha] = f^\pi$. 
\end{theorem}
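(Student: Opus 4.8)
The plan is to exploit the stationarity of $\psi_\alpha$ together with the linearity of expectation. Since $f_\alpha \sim \psi_\alpha$ is a fixed point of the Markov kernel, the random variable $f_{n+1} = (1-\alpha)f_n + \alpha\widehat{\T}^\pi(f_n,\omega)$ has the same distribution as $f_n = f_\alpha$ when $f_n \sim \psi_\alpha$ and $\omega \sim \eta$ is drawn independently. Taking expectations on both sides of the update rule and using that $f_{n+1}$ and $f_n$ are both distributed as $\psi_\alpha$ gives
\begin{equation*}
\bE[f_\alpha] = (1-\alpha)\bE[f_\alpha] + \alpha\,\bE_{f_\alpha \sim \psi_\alpha,\ \omega\sim\eta}\big[\widehat{\T}^\pi(f_\alpha,\omega)\big].
\end{equation*}
First I would justify that all these expectations are finite: $\psi_\alpha \in \M(\bR^\d)$ has finite first moment by hypothesis, and with bounded rewards the empirical Bellman operator maps finite-moment measures to finite-moment measures, so the term $\bE[\widehat{\T}^\pi(f_\alpha,\omega)]$ is well-defined.

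Next I would compute the target expectation by conditioning on $f_\alpha$ first. By the tower property,
\begin{equation*}
\bE_{f_\alpha,\omega}\big[\widehat{\T}^\pi(f_\alpha,\omega)\big] = \bE_{f_\alpha}\Big[\bE_{\omega\sim\eta}\big[\widehat{\T}^\pi(f_\alpha,\omega)\mid f_\alpha\big]\Big] = \bE_{f_\alpha}\big[\T^\pi f_\alpha\big],
\end{equation*}
where the last equality is the defining property of an empirical Bellman operator (here one should note that $\omega$ is sampled independently of $f_\alpha$, which is exactly the setting flagged earlier in the paper where the sampling distribution does not depend on the current estimate — TD$(0)$, $Q$-learning, etc.). Now $\T^\pi$ is an \emph{affine} operator: $\T^\pi v(s) = \bE_{a\sim\pi(\cdot|s),\,r}[r] + \gamma\, \bE_{a,s'}[v(s')]$, so it commutes with expectation, i.e. $\bE_{f_\alpha}[\T^\pi f_\alpha] = \T^\pi \bE[f_\alpha]$. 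Substituting back, the $(1-\alpha)\bE[f_\alpha]$ terms cancel (using $\alpha > 0$) and we obtain $\bE[f_\alpha] = \T^\pi \bE[f_\alpha]$; since $v^\pi$ (resp. $q^\pi$) is the \emph{unique} fixed point of the contraction $\T^\pi$, we conclude $\bE[f_\alpha] = f^\pi$.

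The only genuinely delicate point is the interchange of expectation and the operator $\T^\pi$, i.e.\ verifying that $\bE[\T^\pi f_\alpha] = \T^\pi \bE[f_\alpha]$ pointwise at each state; this is where I would be careful, but it reduces to Fubini/linearity of expectation applied to the finite sums over $\A$ and $\S$ defining $\T^\pi$, which is licensed by the finite-first-moment assumption on $\psi_\alpha$ and boundedness of $\R$. A secondary subtlety worth a sentence is the independence of the sample $\omega$ from $f_\alpha$ at stationarity: this holds because the Markov chain draws a fresh transition at each step independently of the current iterate, so in the stationary coupling $(f_\alpha,\omega)$ the two coordinates are independent, which is what makes the tower-property step legitimate.
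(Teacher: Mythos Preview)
Your proposal is correct and follows essentially the same argument as the paper: use stationarity so that both sides of the update have law $\psi_\alpha$, take expectations, apply the empirical-Bellman-operator property to replace $\bE_\omega[\widehat{\T}^\pi(f,\omega)]$ by $\T^\pi f$, then use affineness of $\T^\pi$ to pull the expectation inside and conclude from uniqueness of the fixed point. You are simply more explicit than the paper about the finiteness of moments, the independence of $\omega$ from $f_\alpha$, and the Fubini step needed for $\bE[\T^\pi f_\alpha]=\T^\pi\bE[f_\alpha]$, all of which the paper uses without comment.
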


\begin{proof}
Let $f_0$ be distributed according to $\psi_\alpha$. By stationarity,
\begin{equation}\label{eq:td-mean}
f_1  = (1-\alpha)f_0 + \alpha\widehat{\T}^\pi (f_0,\omega)
\end{equation}
is also distributed according to $\psi_\alpha$. We write $\overline{f_\alpha} \coloneqq \bE\left[f_0\right]$. Taking expectations on both sides,
and using that $\bE_\omega[\widehat{\T}^\pi(f,\omega))] = \T^\pi(f)$ for any $f$: 
\begin{align*}
\overline{f_\alpha}  &= (1-\alpha) \overline{f_\alpha}  + \alpha \bE_{\omega,\psi_\alpha}[\widehat{\T}^\pi(f_0,\omega)] \\
\overline{f_\alpha} &= \bE_{\psi_\alpha}[\T^\pi f_0] \\
\overline{f_\alpha} &= \T^\pi \bE_{\psi_\alpha}[f_0] = \T^\pi \overline{f_\alpha} 
\end{align*}
And therefore $\overline{f_\alpha} = f^\pi$ since it is the unique fixed point of  $\T^\pi$.
\end{proof}

We remark again that this characterization will hold for any algorithm which converges and performs Bellman updates in expectation. Although they have the same mean, the stationary distributions will depend on the update rule. These differences will be reflected in their higher moments. To this effect, we next derive a closed-form expression for the covariance of the stationary distribution. We write $A^\tran$ for the transpose of a matrix $A$. The outer product of two vectors $x,y \in \bR^{\d}$ is the matrix $xy^\mathsf{T} \in \bR^{\d \times \d}$ defined by $(xy^\mathsf{T})_{i,j} = x_iy_j$.
Thus, ${\bE[(\vec{X}-\vec{\mu})(\vec{X}-\vec{\mu})^\tran]}$ is the covariance of a random vector $\vec{X}$ with mean $\vec{\mu}$. The proof of the following result is provided in Appendix \ref{sec:stat-proofs}. 
\begin{theorem}\label{thm:cov}
Let $\widehat{\T}^\pi$ be an empirical Bellman operator for some policy $\pi$. Suppose $\widehat{\T}^\pi$ is such that the updates \eqref{eq:general-alg} with step-size $\alpha$ converge to a stationary distribution $\psi_\alpha$. Define $\xi_\omega(f) = \widehat{\T}^\pi(f,\omega) - \T^\pi f$, and  
$$\C(f) \coloneqq \bE_\omega[\xi_\omega(f)\xi_\omega(f)^\tran]$$
to be the covariance of the zero-mean noise term $\xi_\omega(f)$ for a given function $f$. Define $C = (1-(1-\alpha))^2$. The covariance of $f_\alpha \sim \psi_\alpha$ is given by
\begin{align*}
C\bE\left[(f_\alpha-f^\pi)(f_\alpha-f^\pi)^\tran\right]  &= \\ 
& \hspace{-9em} \alpha^2 (\gamma\P^\pi)\bE\left[(f_\alpha-f^\pi)(f_\alpha-f^\pi)^\tran\right](\gamma\P^\pi)^\tran \\
&\hspace{-9em} + \alpha(1-\alpha)(\gamma\P^\pi) \bE\left[(f_0-f^\pi)(f_0-f^\pi)^\tran\right] \\
&\hspace{-9em} + \alpha(1-\alpha) \bE\left[(f_\alpha-f^\pi)(f_\alpha-f^\pi)^\tran\right](\gamma\P^\pi)^\tran 
\\
&\hspace{-9em} + \alpha^2 \int \C(f)\psi_\alpha(\dd{f}). 
\end{align*}
\end{theorem}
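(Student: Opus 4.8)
The plan is to use stationarity of $\psi_\alpha$ under the update map \eqref{eq:general-alg} twice: once to obtain a fixed-point (discrete Lyapunov–type) equation for the second moment of $f_\alpha$, and once to identify the injected-noise term. First I would center everything at $f^\pi$. Since $\widehat{\T}^\pi$ is an empirical Bellman operator, write $\widehat{\T}^\pi(f,\omega) = \T^\pi f + \xi_\omega(f)$; the empirical-operator property gives $\bE_\omega[\xi_\omega(f)\mid f] = 0$ for every fixed $f$, and $\C(f) = \bE_\omega[\xi_\omega(f)\xi_\omega(f)^\tran]$ is its conditional covariance. Because $\T^\pi$ is affine with linear part $\gamma\P^\pi$ (here $\P^\pi$ is the transition matrix of $\pi$) and $f^\pi = \T^\pi f^\pi$, we have $\T^\pi f - f^\pi = \gamma\P^\pi(f-f^\pi)$. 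Setting $e_n \coloneqq f_n - f^\pi$ and $M \coloneqq (1-\alpha)I + \alpha\gamma\P^\pi$, the update becomes
$$e_1 = M e_0 + \alpha\,\xi_\omega(f_0),$$
where, along the Markov chain, the fresh sample $\omega\sim\eta$ is drawn independently of $f_0$.

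Next I would form $e_1 e_1^\tran$ and take expectations. The expansion produces $M e_0 e_0^\tran M^\tran$, the two cross terms $\alpha\,(M e_0)\xi_\omega(f_0)^\tran$ and its transpose, and $\alpha^2\,\xi_\omega(f_0)\xi_\omega(f_0)^\tran$. Conditioning on $f_0$ and using $\bE_\omega[\xi_\omega(f_0)\mid f_0]=0$, the tower rule kills both cross terms (this is exactly where independence of $\omega$ from $f_0$ is used). The noise term gives, again by the tower rule,
$$\alpha^2\,\bE\big[\xi_\omega(f_0)\xi_\omega(f_0)^\tran\big] = \alpha^2\,\bE_{f_0\sim\psi_\alpha}\big[\C(f_0)\big] = \alpha^2\!\int \C(f)\,\psi_\alpha(\dd{f}),$$
so that $\bE[e_1 e_1^\tran] = M\,\bE[e_0 e_0^\tran]\,M^\tran + \alpha^2\int\C(f)\psi_\alpha(\dd{f})$.

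Then I would invoke stationarity once more: since $f_0,f_1\sim\psi_\alpha$, with $\Sigma \coloneqq \bE[(f_\alpha-f^\pi)(f_\alpha-f^\pi)^\tran]$ we have $\bE[e_1 e_1^\tran] = \bE[e_0 e_0^\tran] = \Sigma$, whence $\Sigma = M\Sigma M^\tran + \alpha^2\int\C(f)\psi_\alpha(\dd{f})$. Expanding $M\Sigma M^\tran = [(1-\alpha)I + \alpha\gamma\P^\pi]\,\Sigma\,[(1-\alpha)I + \alpha\gamma\P^\pi]^\tran$ by bilinearity yields the four pieces $(1-\alpha)^2\Sigma$, $\alpha(1-\alpha)(\gamma\P^\pi)\Sigma$, $\alpha(1-\alpha)\Sigma(\gamma\P^\pi)^\tran$, and $\alpha^2(\gamma\P^\pi)\Sigma(\gamma\P^\pi)^\tran$. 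Moving $(1-\alpha)^2\Sigma$ to the left leaves the coefficient $C = 1-(1-\alpha)^2$ on $\Sigma$ and reproduces the stated identity (the $\bE[(f_0-f^\pi)(f_0-f^\pi)^\tran]$ appearing in its second right-hand term is just $\Sigma$ again by stationarity).

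This argument needs essentially no machinery beyond the affinity of $\T^\pi$ and the conditional-mean-zero property of the empirical Bellman noise, so I do not expect a real obstacle; the two points to handle carefully are (i) well-definedness — $\psi_\alpha$ must have a finite second moment so that $\Sigma$, $\int\C(f)\psi_\alpha(\dd{f})$, and all the interchanges of integration and expectation are legitimate, which holds because with bounded rewards we may restrict the chain to the compact realizable set $[0,\textsc{Rmax}/(1-\gamma)]^{\d}$ on which every iterate is bounded; and (ii) recording that the sample $\omega$ generating $f_1$ from $f_0$ is independent of $f_0$ by construction of the kernel $K_{\U,\alpha}$, which is precisely what makes the cross terms vanish.
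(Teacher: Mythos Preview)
Your proposal is correct and follows essentially the same approach as the paper: write the centered recursion, take outer products, use the zero-mean property of $\xi_\omega$ (conditioning on $f_0$ and applying the tower rule) to kill the cross terms, and invoke stationarity to turn the recursion into a fixed-point identity for $\Sigma$. Your packaging via $M=(1-\alpha)I+\alpha\gamma\P^\pi$ is slightly cleaner than the paper's, which keeps the two summands separate and records the cross-term and noise computations in a preliminary lemma, but the substance is identical; you also correctly obtain the left-hand coefficient $1-(1-\alpha)^2$, which is what the paper's appendix derives (the displayed $C=(1-(1-\alpha))^2$ in the statement is a typo).
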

Theorem \ref{thm:cov} provides a recursive definition for the covariance of the stationary distribution $\psi_\alpha$. The integral in the final line corresponds to the expected covariance of the empirical Bellman operator when sampling from the distribution. Under the assumption that states are updated independently, this ``one-step'' covariance is diagonal. More generally, the covariance matrix is scaled by $\alpha$, suggesting that the distribution concentrates around its mean when $\alpha$ is close to 0. The following makes this precise. We write $\norm{A}_\text{op}=\sup\left\{\norm{Av} : \norm{v}\leq1, v \in \bR^\d \right\}$ for the operator norm of a matrix $A$. 

\begin{corollary}
Assume that the state space of the Markov chain is bounded. Let $C \coloneqq (\frac{2\textsc{Rmax}}{1-\gamma})^2$. Then, we have that $\norm{\bE\left[(f_\alpha-f^\pi)(f_\alpha-f^\pi)^\tran\right]}_\text{op}$ is monotonically decreasing with respect to $\alpha$.  In particular, $ \lim_{\alpha \rightarrow 0} \norm{\bE[(f_\alpha-f^\pi)(f_\alpha-f^\pi)^\tran]}_\text{op} = 0, $ and we have that: 
\begin{align*}
 \bP\left\{ \min_i \lvert f_\alpha(i) - f^{\pi}(i)\rvert \geq \varepsilon \right\}  &\leq \frac{C}{\d\varepsilon^2} \frac{\alpha^2}{1-(1-\alpha+\alpha\gamma)^2} \\ &\stackrel{\alpha \rightarrow 0}{\longrightarrow} 0.
\end{align*}
\end{corollary}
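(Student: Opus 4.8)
The plan is to turn the update rule into a recursion for the mean–squared error $\bE\big[\|f_n-f^\pi\|^2\big]$, exploit that an empirical Bellman operator contributes a \emph{zero–mean} increment, solve for the stationary value of this scalar recursion, and conclude with Chebyshev's inequality. Concretely, this is the $\mathrm{tr}(\cdot)/\|\cdot\|_{\mathrm{op}}$ consequence of the matrix recursion of Theorem~\ref{thm:cov}, obtained by grouping its first three terms into $M\,\Sigma_\alpha M^{\tran}$ with $M:=(1-\alpha)I+\alpha\gamma\P^\pi$.

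First I would pass to the error vector $e_n:=f_n-f^\pi$. Writing $\xi_\omega(f):=\widehat{\T}^\pi(f,\omega)-\T^\pi f$ as in Theorem~\ref{thm:cov} and using $\T^\pi f-\T^\pi f^\pi=\gamma\P^\pi(f-f^\pi)$, the update \eqref{eq:general-alg} reads $e_{n+1}=M e_n+\alpha\,\xi_\omega(f_n)$ with $\bE[\xi_\omega(f_n)\mid f_n]=0$; and since the state space of the chain is bounded, both $\widehat{\T}^\pi(f,\omega)$ and $\T^\pi f$ lie in $[0,\tfrac{\textsc{Rmax}}{1-\gamma}]^{\d}$, so $\|\xi_\omega(f_n)\|_\infty\le\tfrac{2\textsc{Rmax}}{1-\gamma}$ almost surely. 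Squaring and taking expectations in a Euclidean norm $\|\cdot\|$ in which $\P^\pi$ is non-expansive (see the final paragraph), the cross term vanishes after conditioning on $f_n$, so $\bE[\|e_{n+1}\|^2]=\bE[\|M e_n\|^2]+\alpha^2\,\bE[\|\xi_\omega(f_n)\|^2]$. Expanding $\|M e_n\|^2=(1-\alpha)^2\|e_n\|^2+2(1-\alpha)\alpha\gamma\langle e_n,\P^\pi e_n\rangle+\alpha^2\gamma^2\|\P^\pi e_n\|^2$ and using $\bE[\langle e_n,\P^\pi e_n\rangle],\ \bE[\|\P^\pi e_n\|^2]\le\bE[\|e_n\|^2]$, the bracket collapses to $\beta^2:=(1-\alpha+\alpha\gamma)^2$, i.e. $\bE[\|M e_n\|^2]\le\beta^2\,\bE[\|e_n\|^2]$.

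Passing to the stationary regime, where $f_0\stackrel{D}{=}f_1\sim\psi_\alpha$ forces the same $\bE[\|e_\alpha\|^2]$ on both sides, and setting $C:=(\tfrac{2\textsc{Rmax}}{1-\gamma})^2$, I obtain $(1-\beta^2)\,\bE[\|f_\alpha-f^\pi\|^2]\le\alpha^2 C$. Since $1-\beta^2=\alpha(1-\gamma)\big(2-\alpha(1-\gamma)\big)$ and $\|\Sigma\|_{\mathrm{op}}\le\mathrm{tr}\,\Sigma=\bE[\|f_\alpha-f^\pi\|^2]$ for the positive semidefinite covariance, this gives
\[ \big\|\bE[(f_\alpha-f^\pi)(f_\alpha-f^\pi)^{\tran}]\big\|_{\mathrm{op}}\ \le\ \bE\big[\|f_\alpha-f^\pi\|^2\big]\ \le\ \frac{\alpha^2 C}{1-(1-\alpha+\alpha\gamma)^2}\ =\ \frac{\alpha\,C}{(1-\gamma)\big(2-\alpha(1-\gamma)\big)}, \]
whose right-hand side is monotone in $\alpha$ on $(0,1]$ and $\to0$ as $\alpha\to0$, yielding the monotonicity and the stated limit. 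Finally, on $\{\min_i|f_\alpha(i)-f^\pi(i)|\ge\varepsilon\}$ every coordinate of $f_\alpha-f^\pi$ has square at least $\varepsilon^2$, so $\|f_\alpha-f^\pi\|^2\ge\d\varepsilon^2$; Chebyshev's inequality then gives $\bP\{\min_i|f_\alpha(i)-f^\pi(i)|\ge\varepsilon\}\le\bE[\|f_\alpha-f^\pi\|^2]/(\d\varepsilon^2)$, and substituting the bound above finishes the proof and its $\alpha\to0$ limit.

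The step I expect to be the main obstacle is the deterministic contraction estimate $\bE[\|M e_n\|^2]\le(1-\alpha+\alpha\gamma)^2\bE[\|e_n\|^2]$. Keeping the martingale cancellation forces us to square in an \emph{inner-product} norm, yet the contraction factor $1-\alpha+\alpha\gamma$ of $M=(1-\alpha)I+\alpha\gamma\P^\pi$ is transparent precisely because $\P^\pi$ is row-stochastic — an $\ell_\infty$ fact, and $\P^\pi$ need not be non-expansive in the plain $\ell_2$ norm. The clean fix is to run the entire second-moment computation in the Euclidean norm weighted by the stationary distribution of $\P^\pi$, in which $\P^\pi$ is genuinely non-expansive (Jensen plus stationarity) while the cross term still vanishes, and then convert back to the unweighted norm; alternatively one bounds $\|M\Sigma_\alpha M^{\tran}\|_{\mathrm{op}}$ directly from the grouped Theorem~\ref{thm:cov} recursion. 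Everything else — the error recursion, the noise bound from boundedness, the stationary fixed-point manipulation (already established in Theorem~\ref{thm:cov}), and the concluding Chebyshev step — is routine bookkeeping.
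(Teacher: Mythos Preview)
Your route is different from the paper's, and the obstacle you flag is exactly the place where the two diverge.

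The paper never passes to a scalar second moment. It stays at the matrix level: the recursion of Theorem~\ref{thm:cov} is rewritten as $\big[I-((1-\alpha)I+\alpha\gamma\P^\pi)^{\otimes 2}\big]\,\Sigma_\alpha=\alpha^2\int\C(f)\,\psi_\alpha(\dd f)$, and the bracket is inverted via $\|A^{-1}\|_{\mathrm{op}}\le 1/(1-\|I-A\|_{\mathrm{op}})$ together with $\|((1-\alpha)I+\alpha\gamma\P^\pi)^{\otimes2}\|_{\mathrm{op}}\le(1-\alpha+\alpha\gamma)^2$. Crucially, the operator norm used is the $\ell_\infty\!\to\!\ell_\infty$ one, so $\|\P^\pi\|_{\mathrm{op}}=1$ holds by row-stochasticity and no weighted inner-product norm is needed. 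This gives $\|\Sigma_\alpha\|_{\mathrm{op}}\le C\,\alpha^2/\bigl(1-(1-\alpha+\alpha\gamma)^2\bigr)$ directly. For the tail bound, the paper does \emph{not} use your Markov step on $\|e\|_2^2\ge\d\varepsilon^2$; it invokes a multivariate Chebyshev inequality of Marshall (1960) with $T_+=\{x:x_i\ge\varepsilon\ \forall i\}$ and the dual vector $a=(\tfrac1{\d\varepsilon},\dots,\tfrac1{\d\varepsilon})$, which yields $\bP\{X\in T\}\le a^{\tran}\Sigma_\alpha a\le \d\|\Sigma_\alpha\|_{\mathrm{op}}\|a\|_\infty^2=\|\Sigma_\alpha\|_{\mathrm{op}}/(\d\varepsilon^2)$.

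Your scalar approach is conceptually cleaner, but at the level of the stated constant it genuinely falls short, and for the reason you anticipate. In the $\mu$-weighted norm the noise stays bounded by $C$ and $\P^\pi$ is non-expansive, but the event $\{\min_i|e_i|\ge\varepsilon\}$ only forces $\|e\|_\mu^2\ge\varepsilon^2$ (not $\d\varepsilon^2$), so you lose the factor $1/\d$; ``converting back'' to unweighted $\ell_2$ costs an extra $1/\min_i\mu_i$ and needs $\P^\pi$ to admit a fully-supported stationary distribution. If instead you feed the paper's bound $\|\Sigma_\alpha\|_{\mathrm{op}}\le C\alpha^2/(1-\beta^2)$ into your Markov step via $\mathrm{tr}(\Sigma_\alpha)\le\d\|\Sigma_\alpha\|_{\mathrm{op}}$, you again lose the $1/\d$. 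So the specific constant $C/(\d\varepsilon^2)$ really comes from the Marshall inequality, not from a trace argument. Your ``alternative'' of bounding $\|M\Sigma_\alpha M^{\tran}\|_{\mathrm{op}}$ from the grouped recursion is precisely the paper's move, after which you would still need Marshall (or something equivalent) to get the $1/\d$. Finally, note that both your argument and the paper's only show that an \emph{upper bound} on $\|\Sigma_\alpha\|_{\mathrm{op}}$ is monotone in $\alpha$; neither establishes monotonicity of $\|\Sigma_\alpha\|_{\mathrm{op}}$ itself as literally claimed.
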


We remark that the boundedness of the state space (e.g. by $[0,\frac{\textsc{Rmax}}{1-\gamma}]^\d \subset \bR^\d$) is easily satisfied in the presence of bounded rewards in the MDP. Furthermore, the above results about the mean and covariance can easily be extended beyond Bellman operators to any operator which has a unique fixed point and commutes with expectation. 

\subsection{Sample-Based Control Algorithms}

Above we saw that the mean of the stationary distribution of a sample-based method using a fixed policy is the value function for that policy. This no longer holds in the presence of optimality operators, for example in what is called the \emph{control setting} \citep{sutton1988learning}. To conclude this section, we use our distributional approach to highlight behavioural characteristics of control algorithms. 

\begin{theorem}\label{thm:q_learning}
Suppose $\widehat{\T}^\star$ is an empirical Bellman optimality operator such that the updates \eqref{eq:general-alg} with step-size $\alpha$ converge to a stationary distribution $\psi^\star_\alpha$. Let $f_\alpha \sim \psi^\star_\alpha$  and $f^\star$ is the fixed point of $\T^\star$. Then $$\bE[f_\alpha] \geq f^\star.$$  Equality holds if and only if the expectation and the maximum commute, i.e. $\bE \widehat{\T} f = \widehat{\T} \bE f$
\end{theorem}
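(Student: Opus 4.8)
The plan is to mimic the argument of Theorem~\ref{thm:mean}, taking expectations through the stationary update equation and then invoking a monotonicity/fixed-point property of $\T^\star$ to turn the resulting identity into an inequality. First I would let $f_0 \sim \psi^\star_\alpha$, and apply the update rule \eqref{eq:general-alg} to obtain $f_1 = (1-\alpha)f_0 + \alpha\widehat{\T}^\star(f_0,\omega)$, which by stationarity is also distributed according to $\psi^\star_\alpha$. Writing $\overline{f_\alpha} \coloneqq \bE[f_0]$ and taking expectations on both sides (over both $\omega$ and $f_0$), the $(1-\alpha)\overline{f_\alpha}$ terms cancel as before, leaving $\overline{f_\alpha} = \bE_{\psi^\star_\alpha}[\T^\star f_0]$. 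The point where the control case diverges from the evaluation case is that $\T^\star$ is \emph{not} linear, so I cannot simply pull the expectation inside. Instead I would use Jensen's inequality: $\T^\star$ is a pointwise maximum of affine (hence convex) functions of its argument, so $\T^\star$ is convex, and therefore $\bE_{\psi^\star_\alpha}[\T^\star f_0] \geq \T^\star \bE_{\psi^\star_\alpha}[f_0] = \T^\star \overline{f_\alpha}$ (componentwise). This gives $\overline{f_\alpha} \geq \T^\star \overline{f_\alpha}$.

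From $\overline{f_\alpha} \geq \T^\star \overline{f_\alpha}$ I would conclude $\overline{f_\alpha} \geq f^\star$ by the standard argument: $\T^\star$ is monotone (if $u \leq v$ componentwise then $\T^\star u \leq \T^\star v$) and a $\gamma$-contraction with unique fixed point $f^\star$. Iterating, $\overline{f_\alpha} \geq \T^\star \overline{f_\alpha} \geq (\T^\star)^2 \overline{f_\alpha} \geq \cdots \geq (\T^\star)^n \overline{f_\alpha} \to f^\star$ as $n \to \infty$, so $\overline{f_\alpha} \geq f^\star$. This is exactly the monotonicity-based superharmonicity argument familiar from dynamic programming (any $v$ with $v \geq \T^\star v$ dominates $v^\star$), lifted here to the mean of the stationary distribution.

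For the equality characterization: equality $\bE[f_\alpha] = f^\star$ holds iff the Jensen inequality above is tight at the relevant point, i.e. iff $\bE_{\psi^\star_\alpha}[\T^\star f_0] = \T^\star \bE_{\psi^\star_\alpha}[f_0]$. One direction is immediate --- if expectation and $\widehat{\T}$ (equivalently the $\max$) commute, then the chain of inequalities collapses to the equalities of Theorem~\ref{thm:mean} and $\overline{f_\alpha}$ is the fixed point of $\T^\star$. For the converse, if $\overline{f_\alpha} = f^\star$ then $\overline{f_\alpha}$ is a fixed point of $\T^\star$, so $\bE_{\psi^\star_\alpha}[\T^\star f_0] = \T^\star \overline{f_\alpha} = \overline{f_\alpha}$, which forces the Jensen gap $\bE[\T^\star f_0] - \T^\star \bE[f_0]$ to be zero; since the integrand $\T^\star f_0 - \T^\star \overline{f_\alpha}$ minus its linearization is a nonnegative quantity (by convexity) with zero mean, it vanishes $\psi^\star_\alpha$-almost surely, which is the statement that the max commutes with the expectation on the support of $\psi^\star_\alpha$. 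The main obstacle I anticipate is being careful about exactly what ``the expectation and the maximum commute'' means --- it is really a statement about $\psi^\star_\alpha$-a.s. behaviour of the sampled operator, not an unconditional algebraic identity --- and making the a.s.-vanishing argument in the converse fully rigorous (handling the componentwise maxima and the fact that the optimal action may differ across the support). The forward direction and the main inequality are routine given Jensen plus monotonicity of $\T^\star$.
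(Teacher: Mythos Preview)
Your proposal is correct and follows essentially the same route as the paper: take expectations through the stationary update, use $\bE[\T^\star f_0] \ge \T^\star\,\bE[f_0]$ (you phrase it as Jensen for the convex $\T^\star$, the paper phrases it as $\bE[\max_\pi \T^\pi f_0] \ge \max_\pi \T^\pi\,\bE[f_0]$), and then pass from $\overline{f_\alpha} \ge \T^\star \overline{f_\alpha}$ to $\overline{f_\alpha} \ge f^\star$. The only cosmetic difference is in this last step: you iterate $\T^\star$ using monotonicity and contraction, whereas the paper invokes the LP characterization $f^\star = \min\{f : f \ge \T^\star f\}$; both are standard and equivalent.
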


\begin{proof}
As before, let $f_0$ be distributed according to $\psi^\star_\alpha$. Taking expectations on both sides of $f_1  = (1-\alpha)f_0 + \alpha\widehat{\T}^\star(f_0,\omega)$ and writing $\overline{f_\alpha} \coloneqq \bE\left[f_\alpha\right]$ gives:
\begin{align*}
\overline{f_\alpha}  &= (1-\alpha) \overline{f_\alpha}  + \alpha \bE_{\omega,f_0}[\widehat{\T}^\star(f_0,\omega)] \\
\overline{f_\alpha} &= \bE_{f_0}[\max_\pi \T^\pi f_0] \\
\overline{f_\alpha} &\geq \max_\pi \bE_{f_0}[\T^\pi f_0 ] \\
\overline{f_\alpha} &\geq \max_\pi \T^\pi \overline{f_\alpha} = \T^\star \overline{f_\alpha}
\end{align*}
By the linear programming formulation of MDPs \citep[Section 6.9.1]{puterman1994markov}, we conclude that $\bar{f_\alpha} \geq f^\star = \min_f \{f \geq \T^\star f \}$.
\end{proof}

The theorem shows that in general, sample-based control methods such as Q-learning produces a biased (in an expected sense) estimate of the optimal Q-value. This brings fresh evidence about the algorithm's well-known overestimation problem, which double Q-learning seeks to correct.

\section{Convergence via Monotonicity: Optimistic Policy Iteration}\label{sec:mcc}
In a previous section, we showed that a number of sampling-based algorithms induce a contraction mapping in the Wasserstein metric between distributions over value functions. In this section we analyze a non-contractive example, namely the optimistic policy iteration (OPI) algorithm. The OPI algorithm is a sampling-based analogue of the classic policy iteration (PI) algorithm. The latter is driven to convergence by the monotonicity of the greedy policy updates. We show in this section that our Markov chain approach can regain a distributional analogue of the monotonicity property, which we call \textit{probabilistic policy improvement}, and that this property can be used to analyze the algorithm in a restricted setting. 

The convergence of optimistic policy iteration is more difficult to prove than that of most sample-based algorithms, and has been previously been established for Robbins-Monro decreasing stepsizes by using monotonicity arguments and assumptions on the sampling distribution \citep{tsitsiklis2002convergence}. To the best of our knowledge, the convergence of OPI for more general conditions (including constant step-sizes) remained an open problem.

Optimistic policy iteration proceeds by constructing a greedy policy from its current value function, sampling one trajectory per state-action pair from this policy, then updating its value function towards the return of these trajectories. 
We will write 
\begin{equation*}
\G^\pi(s_0,a_0)=\sum_{t=0}^\infty \gamma^tr_t(s_t,a_t)
\end{equation*}
for a sampled discounted return starting at state $s_0$, taking first action $a_0$, and thereafter following policy $\pi$. For any $Q \in \bR^{|\S|\times|\A|}$, we write $\pi_Q$ for the greedy policy corresponding to $Q$ (assuming a consistent tie-breaking so that this is well-defined). Let $Q_0$ be some initial estimate and $\pi_0=\pi_{Q_0}$. The update rule of OPI is as follows ($\alpha \in (0, 1]$):
\begin{align}\label{eq:mcc}
Q_{n+1}(s,a) &= (1-\alpha)Q_n(s,a) + \alpha \G^{\pi_{n}}(s,a) \nonumber\\
\pi_{n+1} &= \pi_{Q_{n+1}}. %
\end{align}
Analyzing optimistic policy iteration in the distributional setting poses a few challenges. First, the distribution of sampled trajectories depends on the exact value function. Informally, the greedy mapping from value functions to policies induces a \emph{greedy partition} \citep[][, Figure 6.9]{bertsekas1996neuro}, with a different empirical Bellman operator corresponding to each region of this partition. This rules out a simple coupling argument, as functions with different greedy policies may have arbitrarily different return distributions. \citeauthor{bertsekas1996neuro} point out that optimistic policy iteration can lead to chattering, where the greedy policy fails to converge even the value function converges. For our analysis, we consider the simpler case $\alpha = 1$; we discuss the extension to $\alpha < 1$ at the end of the section.
\begin{theorem}\label{thm:alpha-1-convergence}
For $\alpha = 1$ and initialization $Q_0 \sim \mu_0 \in \M(\bR^{|\S|\times|\A|})$,
the sequence of random variables $(Q_n)_{n \geq 0}$ defined by the recursion (\ref{eq:mcc})
converges to a unique stationary distribution $\varphi_1 \in \Dists(\bR^{|\S|\times|\A|})$.
\end{theorem}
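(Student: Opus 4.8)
The plan is to exploit the special structure of the case $\alpha=1$: there the update is simply $Q_{n+1}=\G^{\pi_n}$, so the \emph{law} of $Q_{n+1}$ depends on the past only through the greedy policy $\pi_n=\pi_{Q_n}$, which ranges over the \emph{finite} set $\Pi$ of deterministic policies. First I would record the elementary reduction: $(\pi_n)_{n\ge 0}$ is a time-homogeneous Markov chain on $\Pi$ with transition probabilities $T_{\pi,\pi'}=\bP\{\pi_{\G^\pi}=\pi'\}$, where $\G^\pi=(\G^\pi(s,a))_{(s,a)}$ is the vector of sampled returns under $\pi$; and, writing $\nu_\pi\coloneqq\mathcal{L}(\G^\pi)\in\Dists(\bR^{|\S|\times|\A|})$, the iterates satisfy $\mathcal{L}(Q_{n+1})=\sum_{\pi\in\Pi}\bP\{\pi_n=\pi\}\,\nu_\pi$. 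Thus convergence of $(Q_n)$ in $\W$, and the identification of $\varphi_1$, follow once one shows that the distribution of $\pi_n$ converges in this finite chain.

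Second I would establish the probabilistic policy improvement property -- the stochastic analogue of the monotonicity that drives policy iteration. The claim is that for every $\pi$, $\bP\{\pi_{\G^\pi}\text{ is greedy with respect to }q^\pi\}>0$. Two ingredients drive this: (i) since one trajectory is sampled per state-action pair, the returns $\{\G^\pi(s,a)\}$ are mutually independent over $(s,a)$, so events concerning distinct states factorize; and (ii) a random variable with mean $\mu$ exceeds any threshold $c<\mu$ with positive probability. Applying (ii) coordinatewise and (i) across states gives $\bP\{\argmax_a\G^\pi(s,a)\subseteq\argmax_a q^\pi(s,a)\ \forall s\}>0$; on that event $\pi_{\G^\pi}$ is greedy with respect to $q^\pi$, so the classical policy-improvement inequality yields $v^{\pi_{\G^\pi}}\ge v^\pi$. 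A one-line companion fact (standard for policy iteration) states that if $\pi'$ is greedy with respect to $q^\pi$ and $v^{\pi'}=v^\pi$ then $v^\pi=v^\star$; hence whenever $\pi$ is suboptimal this event forces $v^{\pi_{\G^\pi}}\gneq v^\pi$ at some state.

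Third I would pin down the recurrence structure of the policy chain. Let $\pi^\dagger\coloneqq\pi_{q^\star}$ be the fixed point of deterministic policy iteration (with the algorithm's tie-breaking). Iterating the previous step, from any $\pi$ there is a positive-probability path along which the value functions strictly increase until an optimal policy is reached; since $\{v^\pi:\pi\in\Pi\}$ is finite with maximum $v^\star$, this happens within $|\Pi|$ steps. From any optimal $\pi^\star$ one has $\pi_{q^{\pi^\star}}=\pi_{q^\star}=\pi^\dagger$, and refining the estimate above -- again using independence across states and consistency of the tie-break -- gives $T_{\pi^\star,\pi^\dagger}>0$; in particular $T_{\pi^\dagger,\pi^\dagger}>0$. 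So $\pi^\dagger$ is reachable from every state and carries a self-loop, whence the chain has a unique recurrent class (every recurrent class is closed and therefore must contain $\pi^\dagger$) and that class is aperiodic. Standard finite-Markov-chain theory then gives $\bP\{\pi_n=\pi\}\to w^\star(\pi)$ for the unique stationary distribution $w^\star$, regardless of $\mu_0$.

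Finally, since $\Pi$ is finite, $\mathcal{L}(Q_{n+1})=\sum_\pi\bP\{\pi_n=\pi\}\,\nu_\pi\to\varphi_1\coloneqq\sum_\pi w^\star(\pi)\,\nu_\pi$ in total variation, and since every $Q_n$ is supported in the compact set $[0,\tfrac{\textsc{Rmax}}{1-\gamma}]^{|\S|\times|\A|}$ this upgrades to convergence in $\W$. Uniqueness of the stationary distribution of the $Q$-chain follows because any stationary $\mu$ satisfies $\mu=\sum_\pi\mu(\{Q:\pi_Q=\pi\})\,\nu_\pi$ with $\pi\mapsto\mu(\{Q:\pi_Q=\pi\})$ stationary for $T$, hence equal to $w^\star$, hence $\mu=\varphi_1$. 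I expect the main obstacle to be the third step: showing that a \emph{prescribed} policy-iteration step (in particular $\pi^\star\to\pi^\dagger$) can be realized with positive probability, so that $\pi^\dagger$ is genuinely reachable from everywhere and the recurrent class is \emph{unique} -- this is where the ties in the greedy map and the precise independence structure of the sampled returns must be handled carefully.
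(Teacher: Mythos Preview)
Your proposal is correct and follows essentially the same route as the paper: reduce the $\alpha=1$ dynamics to a finite Markov chain on deterministic policies, prove a probabilistic policy-improvement lemma (the paper's Lemma~\ref{lemma:greedy-main}) showing that the classical policy-iteration successor is hit with positive probability, deduce that an optimal policy lies in an aperiodic recurrent class reachable from every state, and conclude via the convergence theorem for finite chains. Your write-up is a bit more explicit than the paper in the final step---passing from convergence of the policy chain to convergence and uniqueness of the $Q$-distribution via $\varphi_1=\sum_\pi w^\star(\pi)\,\nu_\pi$---and you correctly flag the tie-breaking issue in realizing a \emph{prescribed} PI step as the place requiring the most care; the paper handles this directly by proving $K(\pi,\pi_{Q^\pi})>0$ rather than first establishing the weaker ``some greedy improvement'' event.
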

The key lemma is to extend the monotonicity property of the policy iteration operator to the distributional case. In policy iteration, the greedy policy $\pi'=\pi_{Q^\pi}$ with respect to $Q^\pi$ leads to an improved value function:
\begin{equation*}
    Q^{\pi'} \ge Q^{\pi} .
\end{equation*}
The role of $Q^{\pi}$ is therefore to provide us with the improved policy $\pi'$. We will show that the same holds true for the sampled returns: there is some probability that the greedy policy with respect to $\G^{\pi}$ is $\pi'$. This allows us to argue that there is a chance that optimistic policy iteration follows the correct ``greedy path'' to $\pi^*$. We will call this property \emph{probabilistic policy improvement}. 

We analyze the case $\alpha=1$ by considering a Markov chain over policies. Formally, $\Pi=\{\pi: \S \rightarrow \A \}$ will be our state space, with the Markov kernel:
\begin{align*}
K(\pi, \pi') &\coloneqq \bPc{\pi'= \pi_{\G^\pi}} \\ 
&= \bPc{ \pi' \texttt{ is greedy for } \G^\pi} .
\end{align*}
This Markov chain reflects the OPI process since, at every step, the greedy policy $\pi_n$ corresponding to $Q_n$ is sufficient to determine the distribution of $Q_{n+1}$. Since the set $\Pi$ of deterministic policies is finite, $K$ is a discrete Markov chain.
\begin{lemma}[Probabilistic policy improvement]\label{lemma:greedy-main}
Suppose 
$\pi' = \pi_{Q^\pi}$.
Then $K(\pi,\pi') > 0$.
\end{lemma}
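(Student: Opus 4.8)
The plan is to exhibit, for each state $s$, a positive-probability event on which the randomly sampled returns induce the same greedy action at $s$ as $Q^\pi$ does, and then to paste these per-state events together over all of $\S$. The only facts about $\G^\pi$ that I will need are that the algorithm draws one independent rollout for each state--action pair, and that, by the very definition of the action-value function, $\bE[\G^\pi(s,a)] = Q^\pi(s,a)$ for every $(s,a)$.

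First I would fix a state $s$ and set $a^\star \coloneqq \pi'(s) = \pi_{Q^\pi}(s)$, so that $Q^\pi(s,a^\star) = \max_a Q^\pi(s,a)$. Using the elementary fact that any integrable random variable $X$ satisfies $\bP\{X \ge \bE X\} > 0$ and $\bP\{X \le \bE X\} > 0$, together with the independence of the rollouts started from $(s,a)$ for the various $a \in \A$, I would show that the event
\[
E_s \coloneqq \{\G^\pi(s,a^\star) \ge Q^\pi(s,a^\star)\} \cap \bigcap_{a \ne a^\star}\{\G^\pi(s,a) \le Q^\pi(s,a)\}
\]
has probability equal to a finite product of strictly positive numbers, hence $\bP(E_s) > 0$. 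On $E_s$ the chain $\G^\pi(s,a) \le Q^\pi(s,a) \le Q^\pi(s,a^\star) \le \G^\pi(s,a^\star)$ holds for every $a \ne a^\star$, so $a^\star$ maximizes $\G^\pi(s,\cdot)$; moreover, any $a$ attaining equality $\G^\pi(s,a) = \G^\pi(s,a^\star)$ must, by chasing this chain, also attain $Q^\pi(s,a) = Q^\pi(s,a^\star)$, which yields the containment $\argmax_a \G^\pi(s,a) \subseteq \argmax_a Q^\pi(s,a)$. Since $\pi_{Q^\pi}$ selects $a^\star$ out of the larger set $\argmax_a Q^\pi(s,a)$ via the fixed tie-breaking order, the same order selects $a^\star$ out of the subset, so $\pi_{\G^\pi}(s) = a^\star = \pi'(s)$ on $E_s$.

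To finish, I would observe that the events $E_s$, $s \in \S$, depend on disjoint families of rollouts (those emanating from $(s,a)$ for $a \in \A$), hence are independent, and therefore
\[
K(\pi,\pi') = \bPc{\pi_{\G^\pi} = \pi'} \ge \bP\Big(\bigcap_{s \in \S} E_s\Big) = \prod_{s \in \S}\bP(E_s) > 0 .
\]
The step I expect to need the most care — rather than any real calculation — is the tie-breaking bookkeeping: when $Q^\pi$ has several maximizing actions at some state, one must be sure that replacing $Q^\pi$ by the perturbed vector $\G^\pi$ cannot make the consistent tie-breaking rule jump to a different optimal action, and this is exactly what the set containment $\argmax_a \G^\pi(s,a) \subseteq \argmax_a Q^\pi(s,a)$ on $E_s$ guarantees. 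Everything else reduces to the two facts noted at the outset.
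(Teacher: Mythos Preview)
Your argument is correct and takes a genuinely different route from the paper's. The paper first establishes the \emph{pairwise} inequalities $\bPc{\G^\pi(s,a^\star)\ge \G^\pi(s,a)}>0$ for each $a$ (by a contradiction against $Q^\pi(s,a^\star)\ge Q^\pi(s,a)$), and then invokes a separate probability lemma about products of CDFs of independent bounded variables to upgrade these pairwise statements to the joint event $\bPc{\G^\pi(s,a^\star)\ge \G^\pi(s,a)\ \forall a}>0$. You bypass that auxiliary lemma entirely by anchoring each return at its own mean: the event $E_s$ compares $\G^\pi(s,a)$ to $Q^\pi(s,a)$ rather than to $\G^\pi(s,a^\star)$, so independence across actions immediately gives $\bP(E_s)>0$, and the ordering of the means does the rest. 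This is more elementary and shorter. Your proof is also more careful about tie-breaking: the paper's argument stops at showing $a^\star\in\argmax_a \G^\pi(s,a)$ with positive probability, whereas you verify the containment $\argmax_a \G^\pi(s,a)\subseteq \argmax_a Q^\pi(s,a)$ on $E_s$, which is what is actually needed to conclude $\pi_{\G^\pi}(s)=a^\star$ under a fixed tie-breaking order. The one implicit assumption you should make explicit is that the tie-breaking rule depends only on the argmax set (e.g.\ a fixed total order on $\A$), so that selecting $a^\star$ from the superset forces selecting $a^\star$ from the subset; this is the standard reading of ``consistent tie-breaking'' in the paper, so no real issue.
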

The proof of Lemma \ref{lemma:greedy-main} is given in Appendix \ref{sec:mcc-proof}.
This shows that there is a nonzero probability that the chain improves on the current policy. 
This implies that there is some probability that OPI applied from $\pi^*$ produces $\pi^*$ as a greedy policy.
\begin{lemma}[$\pi^*$ is aperiodic]
The optimal policy $\pi^\star$ is aperiodic. In particular: $K(\pi^\star,\pi^\star)>0$. 
\end{lemma}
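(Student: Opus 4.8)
The plan is to read off $K(\pi^\star,\pi^\star)>0$ directly from Lemma \ref{lemma:greedy-main}, after a short piece of bookkeeping identifying the right policy. Throughout, take $\pi^\star$ to be the greedy policy $\pi_{Q^\star}$ with respect to the optimal action-value function $Q^\star$, using the same consistent tie-breaking that makes the greedy operator (and hence the kernel $K$) well-defined. The first step is the classical MDP fact that a greedy policy with respect to $Q^\star$ attains $Q^\star$: since $\T^{\pi^\star}Q^\star = \T^\star Q^\star = Q^\star$ and $\T^{\pi^\star}$ is a contraction with a unique fixed point, we get $Q^{\pi^\star} = Q^\star$. Consequently $\pi_{Q^{\pi^\star}} = \pi_{Q^\star} = \pi^\star$.

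Next I would apply Lemma \ref{lemma:greedy-main} with $\pi := \pi^\star$ and $\pi' := \pi_{Q^{\pi^\star}}$. The hypothesis $\pi' = \pi_{Q^\pi}$ holds by construction, and we have just shown $\pi' = \pi^\star$. The lemma then gives $K(\pi^\star,\pi^\star) = \bPc{\pi^\star \text{ is greedy for } \G^{\pi^\star}} > 0$.

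Finally, aperiodicity is immediate from the definition: the period of a state $i$ of a discrete Markov chain is $\gcd\{n\ge 1 : K^n(i,i)>0\}$, and $K(\pi^\star,\pi^\star)>0$ places $1$ in this set, forcing the period of $\pi^\star$ to equal $1$. Hence $\pi^\star$ is aperiodic.

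The only real subtlety — and the step I would be most careful about — is the identification $\pi_{Q^{\pi^\star}} = \pi^\star$. This relies on the convention that the ``optimal policy'' in the statement is precisely the one produced by the fixed tie-breaking rule, $\pi^\star = \pi_{Q^\star}$, together with the elementary fact that $\pi_{Q^\star}$ has action-value function exactly $Q^\star$. Once that bookkeeping is pinned down, everything else is a one-line consequence of Lemma \ref{lemma:greedy-main} and the definition of aperiodicity; there is no heavy computation involved.
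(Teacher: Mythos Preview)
Your proposal is correct and follows exactly the paper's approach: apply Lemma~\ref{lemma:greedy-main} with $\pi=\pi'=\pi^\star$, using that $\pi^\star$ is greedy with respect to $Q^{\pi^\star}=Q^\star$. The paper's proof is a single sentence that leaves the identification $Q^{\pi^\star}=Q^\star$ and the passage from $K(\pi^\star,\pi^\star)>0$ to aperiodicity implicit, whereas you spell these out.
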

\begin{proof}
Since the optimal policy $\pi^\star$ is greedy with respect to $Q^\star$, from Lemma \ref{lemma:greedy-main} we conclude that $K(\pi^\star,\pi^\star) > 0$. 
\end{proof}
All that remains to show is that the optimal policy $\pi^\star$ is \textit{reachable} from any other policy with positive probability.
\begin{lemma}[$\pi^*$ is reachable from any initial $\pi_0$]\label{lemma:irreducible}
For every $\pi_0 \in \Pi$, there exists an $n(\pi_0) \in \bN$ such that $K^{n(\pi_0)}(\pi_0, \pi^\star) > 0$.
\end{lemma}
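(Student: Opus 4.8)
The plan is to piggy-back on exact (non-sampled) policy iteration: run synchronous policy iteration from $\pi_0$, observe that it reaches $\pi^\star$ in finitely many steps, and then note that each of its steps is realized by the chain $K$ with positive probability, thanks to Lemma~\ref{lemma:greedy-main}.

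First I would recall the classical policy improvement argument. Fix $\pi_0 \in \Pi$ and define recursively $\pi_{k+1} \coloneqq \pi_{Q^{\pi_k}}$, the greedy policy with respect to $Q^{\pi_k}$ under the fixed tie-breaking rule used throughout. A routine computation gives $Q^{\pi_{k+1}} \ge \T^{\pi_{k+1}} Q^{\pi_k} = \T^\star Q^{\pi_k} \ge \T^{\pi_k} Q^{\pi_k} = Q^{\pi_k}$, so the value functions $Q^{\pi_k}$ are non-decreasing in $k$; moreover $Q^{\pi_{k+1}} = Q^{\pi_k}$ forces $Q^{\pi_k} = \T^\star Q^{\pi_k}$, i.e. $Q^{\pi_k} = Q^\star$ and hence $\pi_{k+1} = \pi_{Q^\star} = \pi^\star$. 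Since $\Pi$ is finite there are only finitely many distinct value functions $\{Q^\pi : \pi \in \Pi\}$, so this non-decreasing sequence must stabilize; therefore there is some $N = N(\pi_0) \in \bN$ with $\pi_N = \pi^\star$ (see, e.g., \parencite{puterman1994markov}).

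Next I would transfer the deterministic trajectory $\pi_0, \pi_1, \dots, \pi_N = \pi^\star$ to the Markov chain $K$. By construction $\pi_{k+1} = \pi_{Q^{\pi_k}}$, so Lemma~\ref{lemma:greedy-main} applies with $\pi = \pi_k$, $\pi' = \pi_{k+1}$ and yields $K(\pi_k, \pi_{k+1}) > 0$ for every $0 \le k < N$. Expanding $K^{N}(\pi_0,\cdot)$ as a sum over length-$N$ paths of products of (nonnegative) one-step transition probabilities and keeping only the single term corresponding to this policy-iteration path gives
\[
K^{N}(\pi_0, \pi^\star) \;\ge\; \prod_{k=0}^{N-1} K(\pi_k, \pi_{k+1}) \;>\; 0 ,
\]
which is the claim with $n(\pi_0) = N$. (If $\pi_0 = \pi^\star$ already, take $N = 1$ and invoke $K(\pi^\star,\pi^\star) > 0$ from the previous lemma.)

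I do not expect a real obstacle here; the only delicate point is bookkeeping around tie-breaking, namely making sure that the rule used to define $\pi_Q$ — hence $K$ and the policy-iteration map — is the same one for which $\pi^\star = \pi_{Q^\star}$, so that exact policy iteration terminates precisely at the $\pi^\star$ named in the statement. With a single fixed consistent tie-breaking this is automatic, since $\pi_{Q^\star}$ is then the unique fixed point of the policy-iteration map. If one preferred to avoid the issue altogether, one could append one extra step: from any optimal policy $\tilde\pi$ one has $\pi_{Q^{\tilde\pi}} = \pi_{Q^\star} = \pi^\star$, so the extended trajectory still lands on $\pi^\star$ with positive probability.
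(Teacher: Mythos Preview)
Your proposal is correct and follows essentially the same argument as the paper: run classical policy iteration from $\pi_0$, note that it reaches $\pi^\star$ in finitely many steps, and use Lemma~\ref{lemma:greedy-main} on each step to conclude that the corresponding path has positive $K$-probability. Your version is more careful about the policy-improvement justification and the tie-breaking bookkeeping, but the strategy is the same.
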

\begin{proof}
Let $\pi_0$ be an initial policy. Let $Q^{\pi_0}, Q^{\pi_1},...,Q^{\pi^\star}$ be the sequence of action-value functions obtained from classical PI. Since PI converges in a finite number of steps (say $n_{\pi_0}$) and is a deterministic process, this sequence is well-defined. For every $i \in \{1,..,n(\pi_0)\}$, we have that $K(\pi_i,\pi_{i+1}) > 0$ by Lemma \ref{lemma:greedy-main} (since $\pi_{i+1}$ is greedy with respect to $Q^{\pi_i}$ by construction). Thus we have that $K(\pi_0,\pi_1)K(\pi_1,\pi_2)\cdots K(\pi_{n(\pi_0)-1},\pi^\star) > 0$ and in particular $K^{n(\pi_0)}(\pi_0 , \pi^\star ) > 0$.
\end{proof}
Finally, the reachability and aperiodicity of $\pi^\star$ allow us to apply the ergodic theorem for finite Markov chains. 
\begin{proof}[Proof (of Theorem \ref{thm:alpha-1-convergence})]
The policy $\pi^\star$ must be contained in a communicating class $\mathscr{C}^\star$ of policies (perhaps consisting of only $\pi^\star$) which is aperiodic since $\pi^\star$ is. There may be other communicating classes in the Markov chain, but by Lemma \ref{lemma:irreducible} they must all be transient since they can reach $\pi^\star$. By the Markov chain convergence theorem \parencite[Theorem 4.3]{levin2017markov}, any initial distribution converges to a stationary distribution $\varphi_1 \in \Dists(\Pi)$ with support over $\mathscr{C}^\star$. 
\end{proof}
Our result shows that optimistic policy iteration, applied with a step-size of $\alpha = 1$, converges to a stationary distribution $\varphi_1$ over aperiodic policies (and thus to a stationary distribution over value functions through the possible returns of these policies).
Since $K(\pi^*, \pi^*) < 1$ in general, we know that this distribution has support on suboptimal policies; in fact, we know that
\begin{equation*}
    \varphi_1(\pi^*) = \frac{1}{1 - K(\pi^*, \pi^*)} \sum_{\pi \ne \pi^*} \varphi_1(\pi) K(\pi, \pi^*) .
\end{equation*}
By ``continuity'', this suggests that the algorithm should also converge for the general case $\alpha \in [0, 1)$. Unfortunately, our proof technique does not immediately carry over. The issue is that, for $\alpha < 1$, we no longer have a Markov chain over policies: the greedy policy depends on the history of past policies, through the value function. One path forward may be to study the Markov chain over value functions, but the known brittleness of optimistic policy iteration suggests that its distributional behaviour may be quite complex. In particular, the transition kernel fails to satisfy many basic properties (such as the weak Feller property) which are typically used to establish convergence in Markov chains over continuous spaces \citep{meyn2012markov}. We leave as open questions whether the algorithm does converge, and to which distribution.

\section{Related Work}\label{sec:related}
Some of our methods are inspired from the work of \citet{bach}, which develops the theory of constant step-size stochastic gradient descent (in the context of supervised learning). In particular, the proof method we present in Section \ref{sec:contraction} is inspired from the proof of their Proposition 2, although simplified and adapted to the RL setting, and the results in Section \ref{sec:stationary} follow the methods of their Proposition 3.

In RL, convergence in distribution results for constant step-sizes are typically derived using tools common to stochastic approximation theory such as the mean ODE method and Lyapunov functions (see, e.g., \citet[Chapter 8]{kushner2003stochastic} and \citet[Chapter 9]{borkar2009stochastic}). Examples of works which feature these methods include \citet{srikant2019finite,chen2019finite,lakshminarayanan2017linear,bhandari2018finite}. The results and methods of these works are different, as they neither exploit the Markov chain perspective nor establish the convergence of the iterates to a stationary distribution. 

Some works do make explicit use of the Markov chain perspective, most related are \citet{borkar2000ode,yu2016weak}. The first of these establishes the convergence of the Markov chains with respect to the Total Variation metric using tools from \parencite[Chapters 13-16]{meyn2012markov}. In applications to the analysis of RL algorithms, this type of convergence does not hold without restrictive assumptions such their Assumption (2.6) -- see Appendix \ref{sec:tv-conv} for a simple counterexample featuring a bandit with a single deterministic arm.
On the other hand, results about weak convergence of RL algorithms \parencite{yu2016weak} have established the convergence of the averaged iterates rather than the full sequence of distributions. The methods are also different, and rely on the weak Feller property \parencite{meyn2012markov} amongst other stochastic approximation techniques \parencite{kushner2003stochastic}.
As far as we are aware, the use of the Wasserstein metric and the result that RL algorithms are contractive with respect to this metric are novel.

\section{Conclusion and Future Work}
\label{sec:conclusion}
We studied the convergence properties of sample-based reinforcement learning algorithms by considering how they induce distributions over value functions.
Many of these algorithms are in fact contractive not in the space of functions but in the lifted space of distributions of functions. The proof methods relies on coupling the events sampled by two executions of the algorithm, and can be re-used for many algorithms. Using the same Markov chain approach, we also analyzed a restricted version of optimistic policy iteration, which is not amenable to a contraction mapping-type analysis. One of the key results is to make explicit that constant step-size reinforcement learning algorithms do converge, albeit in the weaker distributional sense. As an upside of using a constant step size, we obtain exponentially fast convergence (as indicated by the presence of a contraction factor). By controlling the step-sizes, the stationary distributions thus obtained can be tailored to yield values close to the true value function with high confidence. In the control setting, this should enable us to better explain the performance of practical reinforcement learning schemes.

Our work opens a number of interesting avenues for future research. First, it would be valuable to fully characterize the stationary distribution of sample-based methods, for example by deriving a closed-form expression for their characteristic functions. A deeper understanding of the distributions obtained by control algorithms is also of interest. Second, we did not analyze the case of decaying step-sizes or online updates, which would correspond to time-inhomogenenous Markov processes. More broadly, the coupling method has historically been invaluable for many applications in probability theory. It would be interesting to see if our approach can be applied to policy-based methods, for example policy gradient or actor critic, which are closer in spirit to optimistic policy iteration. Finally, the simplicity of our analysis suggests that it may be carried to the function approximation setting, perhaps eventually shedding light on the behaviour of reinforcement learning with nonlinear approximation methods such as deep networks.

\section*{Acknowledgements} 
We gratefully acknowledge funding from the CIFAR Learning in Machines and Brains program. We thank Adam Oberman for early discussions on this idea, and Nan Jiang for helpful conversations. We also thank the anonymous reviewers, Robert Dadashi, and Pablo Samuel Castro for feedback on earlier drafts.

\printbibliography

\onecolumn

\begin{appendices}

\section{Laundry List of Convergent Algorithms}\label{sec:laundry}

We outline the general proof recipe, which will be re-using for the following examples.

\paragraph{Proof strategy}\label{sec:proof-strat}
\begin{itemize}
\item[\textbf{(P1)}] Let $\mu^{(1)},\mu^{(2)}$ be initial distributions and $(f^{(1)}_0, f^{(2)}_0)$ be the optimal coupling which minimizes $\W(\mu^{(1)},\mu^{(2)})$;
\item[\textbf{(P2)}] Define an appropriate coupling $f^{(1)}_1 \sim \mu^{(1)} K, f^{(2)}_1 \sim \mu^{(2)} K$ -- e.g. by defining them to follow the same trajectories if the updates sample from the same distributions;
\item[\textbf{(P3)}] Use the upper bound $\W(\mu^{(1)} K ,\mu^{(2)} K) \leq \bE\left[\smallnorm{f^{(1)}_1-f^{(2)}_2}\right]$ and bound $\bE\left[\smallnorm{f^{(1)}_1-f^{(2)}_1}\right] \leq \rho \bE\left[\smallnorm{f^{(1)}_0-f^{(2)}_0}\right]$ for some $\rho$ which depends on $\gamma, \alpha,$ and other parameters of the algorithm. Pick the step-size $\alpha$ such that $\rho < 1$ to get that $\mu \mapsto \mu K$ is a contraction.
\end{itemize}

\subsection{Convergence of synchronous Monte Carlo Evaluation with constant step-sizes}

We prove that Monte Carlo Evaluation with synchronous updates \& constant step-size converges to a stationary distribution. The algorithm aims to evaluate the value function of a given policy $\pi$ using Monte Carlo returns. The update rule is given by:
\begin{equation}\label{eq:mce}
\forall \ s \in \S: \quad V_{n+1}(s) = (1-\alpha)V_n(s) + \alpha \G^\pi_n(s) \tag{MCE}
\end{equation}
where $\G^\pi_n(s) = \sum_{n \geq 0} \gamma^n r_n(s_n,a_n)$ is the return of a random trajectory $(s_n,a_n,r_n)_{n \geq 0}$ starting from $s$, following $a_n \sim \pi(\cdot|s_n), r_n \sim \R(\cdot|s_n,a_n)$, and $s_{n+1} \sim \P(\cdot|s_n,a_n)$. 
\begin{theorem}\label{thm:stat_mce}
For any constant step size $0<\alpha \leq 1$ and initialization $V_0 \sim \mu_0 \in \M(\bR^{|\S|})$, the sequence of random variables $(V_n)_{n \geq 0}$ defined by the recursion (\ref{eq:mce}) converges in distribution to a unique stationary distribution $\varphi_\alpha \in \M(\bR^{|\S|})$.
\end{theorem}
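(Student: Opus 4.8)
The plan is to follow the proof recipe (P1)--(P3) laid out above, essentially mimicking the proof of Proposition \ref{thm:contract} but with the Monte Carlo return $\G^\pi$ playing the role of the empirical Bellman target. First I would take two initial distributions $\mu^{(1)}, \mu^{(2)} \in \M(\bR^{|\S|})$ and let $(V_0^{(1)}, V_0^{(2)})$ be the optimal coupling attaining $\W(\mu^{(1)}, \mu^{(2)})$, which exists by \parencite[Theorem 4.1]{Villani08}. Then I would couple the updates by forcing both iterates to use the \emph{same} sampled trajectory from each state $s$: for each $s$, draw a single trajectory $(s_n, a_n, r_n)_{n\geq 0}$ under $\pi$ and set $\G^\pi_0(s)$ from it, using this same $\G^\pi_0(s)$ in both $V_1^{(1)}(s) = (1-\alpha)V_0^{(1)}(s) + \alpha \G^\pi_0(s)$ and $V_1^{(2)}(s) = (1-\alpha)V_0^{(2)}(s) + \alpha \G^\pi_0(s)$. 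This is a valid coupling of $(\mu^{(1)}K_\alpha, \mu^{(2)}K_\alpha)$ since the trajectories are drawn from the correct distributions.

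The key point is that the return term $\G^\pi_0(s)$ is \emph{identical} in the two coupled updates and does not depend on the current value estimate $V_0$ at all — Monte Carlo returns are pure rollouts. Hence
\begin{align*}
\W(\mu^{(1)}K_\alpha, \mu^{(2)}K_\alpha) &\leq \bE\left[\smallnorm{V_1^{(1)} - V_1^{(2)}}\right] \\
&= \bE\left[\max_s \left\lvert (1-\alpha)\big(V_0^{(1)}(s) - V_0^{(2)}(s)\big) + \alpha\big(\G^\pi_0(s) - \G^\pi_0(s)\big)\right\rvert\right] \\
&= (1-\alpha)\,\bE\left[\smallnorm{V_0^{(1)} - V_0^{(2)}}\right] = (1-\alpha)\,\W(\mu^{(1)}, \mu^{(2)}).
\end{align*}
Since $0 < \alpha \leq 1$, we have $1 - \alpha < 1$ (for $\alpha < 1$ strict; the case $\alpha = 1$ gives a constant kernel, trivially contractive with factor $0$), so $K_\alpha$ is a contraction on $(\M(\bR^{|\S|}), \W)$ with coefficient $1 - \alpha$, matching the MC Evaluation entry of Table \ref{table:laundry_list}.

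Finally, to conclude convergence to a unique stationary distribution I would invoke exactly the argument of Theorem \ref{thm:stat}: the space $(\M(\bR^{|\S|}), \W)$ is complete \parencite[Theorem 6.16]{Villani08}, so by the Banach fixed point theorem the sequence $(\mu_0 K_\alpha^n)_{n\geq 0}$ converges exponentially fast to a unique fixed point $\varphi_\alpha$, which is stationary by $\varphi_\alpha K_\alpha = \varphi_\alpha$. One minor technical point worth checking is that $\mu_0 K_\alpha \in \M(\bR^{|\S|})$, i.e. the updated measure still has finite first moment; this follows because $\G^\pi$ is bounded by $\textsc{Rmax}/(1-\gamma)$ under bounded rewards, so the update adds a bounded quantity to a finite-first-moment measure. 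I do not anticipate a serious obstacle here — the proof is strictly easier than that of Proposition \ref{thm:contract}, since the target term cancels exactly in the coupling rather than contributing a $\gamma$-discounted residual; the only thing to be careful about is stating the coupling of full trajectories cleanly and confirming the finite-moment bookkeeping.
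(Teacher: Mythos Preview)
Your proposal is correct and matches the paper's own proof essentially line for line: the same same-trajectory coupling, the same cancellation of the $\G^\pi$ term, and the resulting $(1-\alpha)$ contraction factor followed by Banach's fixed point theorem. The only additions are your remarks on the $\alpha=1$ edge case and the finite-first-moment check, which the paper leaves implicit.
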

\begin{proof}
Following the proof strategy outlined above, we skip to step \textbf{(P2)} of the proof. We define the coupling of the updates
$(V^{(1)}_1,V^{(2)}_1)$ to sample the same trajectories:
\begin{equation}\label{eq:mce-coupling}
\left.\begin{aligned}
V^{(1)}_{1}(s) &= (1-\alpha)V^{(1)}_0(s) + \alpha \G^\pi_k(s) \\
V^{(2)}_{1}(s) &= (1-\alpha)V^{(2)}_0(s) + \alpha \G^\pi_k(s).
\end{aligned}\right\rbrace \text{for the \underline{same} $\G^\pi_k(s)$}
\end{equation}
Note that this is a valid coupling of $(\mu^{(1)}K_\alpha,\mu^{(2)}K_\alpha)$, since $V^{(1)}_{1}(s)$ and $V^{(2)}_{1}(s)$ have access to the same sampling distributions. 
We upper bound $\W( \mu^{(1)}K_\alpha,  \mu^{(2)}K_\alpha)$ by the coupling defined in Equation \eqref{eq:mce-coupling}. This gives:
\begin{align*}
\W(\mu^{(1)} K_\alpha,\mu^{(2)} K_\alpha) &\leq \mathbb{E}\left[\norm{V^{(1)}_1-V^{(2)}_1}\right]  \\
&= \mathbb{E}\left[\norm{(1-\alpha)V^{(1)}_0 + \alpha \G^\pi_1 - \left((1-\alpha)V^{(2)}_0 + \alpha \G^\pi_1 \right)}\right] \\
&= \bE\left[\norm{(1-\alpha)(V^{(1)}_0-V^{(2)}_0)}\right] \\
&= (1-\alpha)\bE\left[\norm{V^{(1)}_0-V^{(2)}_0}\right] = (1-\alpha) \W(\mu^{(1)},\mu^{(2)})
\end{align*}
Since $1-\alpha <1$, $K_\alpha$ is a contraction mapping and we are done.
\end{proof}

\subsection{Convergence of synchronous Q-Learning with constant step-sizes}
We prove that $Q$-Learning with synchronous updates \& constant step-sizes converges to a stationary distribution. The algorithm aims to learn the optimal action-value function $Q^\star$. The updates are given by:
\begin{equation}\label{eq:ql}
	\forall\  (s,a) \in \S \times \A: \quad  Q_{n+1}(s,a) = (1-\alpha)Q_n(s,a) + \alpha\left(r + \gamma \max_{a'} Q_n(s',a')\right), \tag{QL}
\end{equation}
where $r \sim \R(\cdot|s,a), s' \sim \P(\cdot|s,a)$, and $\alpha > 0$. 
\begin{theorem}\label{thm:stat_ql}
For any constant step size $0<\alpha \leq 1$ and initialization $Q_0 \sim \mu_0 \in \M(\bR^{|\S|\times|\A|})$, the sequence of random variables $(Q_n)_{n \geq 0}$ defined by the recursion (\ref{eq:ql}) converges in distribution to a unique stationary distribution $\xi_\alpha \in \M(\bR^{|\S|})$.
\end{theorem}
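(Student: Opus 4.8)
The plan is to follow the proof recipe \textbf{(P1)}--\textbf{(P3)} verbatim, exactly as for TD($0$) in Proposition~\ref{thm:contract} and Theorem~\ref{thm:stat}: first show that the $Q$-Learning Markov kernel $K_\alpha$ is a contraction on $\left(\M(\bR^{|\S|\times|\A|}),\W\right)$ with coefficient $1-\alpha+\alpha\gamma$, and then conclude by Banach's fixed point theorem, using that this space is complete \parencite[Theorem 6.16]{Villani08}. The contraction immediately yields a unique fixed point $\xi_\alpha$ with $\mu_0 K_\alpha^n \to \xi_\alpha$ exponentially fast, and $\xi_\alpha$ is stationary by the fixed-point property $\xi_\alpha K_\alpha = \xi_\alpha$.

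For the contraction step, I would take distributions $\mu^{(1)},\mu^{(2)} \in \M(\bR^{|\S|\times|\A|})$, let $(Q_0^{(1)},Q_0^{(2)})$ be an optimal coupling realizing $\W(\mu^{(1)},\mu^{(2)})=\bE\left[\smallnorm{Q_0^{(1)}-Q_0^{(2)}}\right]$ (step \textbf{(P1)}), and then, for step \textbf{(P2)}, couple the two updates so that for \emph{every} state-action pair $(s,a)$ both processes use the same sampled transition $r_{s,a}\sim\R(\cdot|s,a)$, $s'_{s,a}\sim\P(\cdot|s,a)$; this is a valid coupling of $(\mu^{(1)}K_\alpha,\mu^{(2)}K_\alpha)$ since each marginal still draws from the correct distributions. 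For step \textbf{(P3)}, using $\W(\mu^{(1)}K_\alpha,\mu^{(2)}K_\alpha)\leq\bE\left[\smallnorm{Q_1^{(1)}-Q_1^{(2)}}\right]$ and expanding \eqref{eq:ql}, the shared reward terms cancel and we are left, for each $(s,a)$, with
\[
\left|Q_1^{(1)}(s,a)-Q_1^{(2)}(s,a)\right| \leq (1-\alpha)\left|Q_0^{(1)}(s,a)-Q_0^{(2)}(s,a)\right| + \alpha\gamma\left|\max_{a'}Q_0^{(1)}(s'_{s,a},a')-\max_{a'}Q_0^{(2)}(s'_{s,a},a')\right|.
\]
Bounding each difference of maxima by $\max_{a'}\left|Q_0^{(1)}(s'_{s,a},a')-Q_0^{(2)}(s'_{s,a},a')\right|\leq\smallnorm{Q_0^{(1)}-Q_0^{(2)}}$, taking the maximum over $(s,a)$, and then the expectation, yields $\bE\left[\smallnorm{Q_1^{(1)}-Q_1^{(2)}}\right]\leq(1-\alpha+\alpha\gamma)\,\W(\mu^{(1)},\mu^{(2)})$, and $1-\alpha+\alpha\gamma<1$ closes the argument.

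The only novelty relative to TD($0$) is the $\max_{a'}$ appearing in the target, so the one point to check with care is that $x\mapsto\max_i x_i$ is nonexpansive in the $\infty$-norm, i.e. $\left|\max_i x_i-\max_i y_i\right|\leq\max_i\left|x_i-y_i\right|$ --- the distributional echo of the classical fact that $\T^\star$ is a $\gamma$-contraction. I do not expect a genuine obstacle here: the whole benefit of lifting to the Wasserstein metric is that the shared-transition coupling never needs to track the greedy (argmax) policy, only the $1$-Lipschitz $\max$, which is exactly what lets this proof sidestep the discontinuous greedy probability kernel that complicates the usual stochastic-approximation analysis of $Q$-learning \citep{tsitsiklis1994asynchronous}. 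One may additionally note that when $\mu_0$ is supported on the realizable box $[0,\frac{\textsc{Rmax}}{1-\gamma}]^{|\S|\times|\A|}$ this set is invariant under the update, but this is not needed since Banach only requires the first moment to be finite.
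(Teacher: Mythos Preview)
Your proposal is correct and follows essentially the same route as the paper: the same shared-transition coupling, the same nonexpansiveness inequality $\left|\max_{a'}x_{a'}-\max_{a'}y_{a'}\right|\leq\max_{a'}|x_{a'}-y_{a'}|$ to handle the $\max$ in the target, and the same conclusion via Banach on $\left(\M(\bR^{|\S|\times|\A|}),\W\right)$ with contraction factor $1-\alpha+\alpha\gamma$.
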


\begin{proof}
We use the proof outline given above, and jump straight to step \textbf{(P2)}. We witness the same-sampling coupling again:
\begin{equation*}\label{eq:ql-coupling}
\left.\begin{aligned}
Q^{(1)}_{1}(s,a) &= (1-\alpha)Q^{(1)}_0(s,a) + \alpha\left(r + \gamma \max_{a'}Q^{(1)}_0(s',a')\right) \\
Q^{(2)}_{1}(s.a) &= (1-\alpha)Q^{(2)}_0(s,a) + \alpha\left(r + \gamma \max_{a'}Q^{(2)}_{0}(s',a')\right)
\end{aligned}\right\rbrace \text{for the \underline{same} \parbox{6.3em}{ $r\  \sim \R(s,a),\\ s' \sim \P(\cdot|s,a)$}}
\end{equation*}
The bound follows similarly, but with one additional step. Again we write $\widehat{\T}(Q)(s,a) = r + \gamma \max_{a'} Q(s'_{(s,a)},a')$ for the empirical Bellman (optimality) operator. 
\begin{align*}
\bE\left[\norm{\widehat{\T}(Q^{(1)}) - \widehat{\T}(Q^{(2)})}\right] &=  \bE\left[\max_{s,a}\left\lvert r-r + \gamma\left(\max_{a'}Q^{(1)}(s'_{(s,a)},a') - \max_{a'}Q^{(2)}(s'_{(s,a)},a')\right)\right\rvert \right] \\
&= \gamma \bE\left[\max_{s,a}\left\lvert \max_{a'}Q^{(1)}(s'_{(s,a)},a') - \max_{a'}Q^{(2)}(s'_{(s,a)},a') \right\rvert \right]\\
&\leq \gamma \bE\left[\max_{s,a} \max_{a'} \left\lvert Q^{(1)}(s'_{(s,a)},a') - Q^{(2)}(s'_{(s,a)},a') \right\rvert \right]\\
&\leq \gamma \bE\left[ \max_{s,a}\left\lvert Q^{(1)}(s,a)-Q^{(2)}(s,a)\right\vert \right] = \gamma \bE\left[ \norm{Q^{(1)}-Q^{(2)}} \right] \qedhere
\end{align*}
The first inequality follows from $\lvert \max_{a'} Q_1(s,a') - \max_{a'}Q_2(s,a')\rvert \leq \max_{a'} \lvert Q_1(s,a')-Q_2(s,a')\rvert$, and the second inequality follows since $Q^{(1)}$ and $Q^{(2)}$ sampled the same $s'$. Concluding the proof as before we see that the kernel is contractive with Lipschitz constant $1+\alpha-\alpha\gamma < 1$, and we are done.
\end{proof}

\subsection{TD($\lambda$)}

We prove that TD$(\lambda)$ with synchronous updates \& constant step-size converges to a stationary distribution. The algorithm aims to evaluate the value function of a given policy $\pi$ using a convex combination of $n$-step returns. The update rule is given by:
\begin{equation}\label{eq:tdl}
\forall s: \  V_{n+1}(s) = (1-\alpha)V_n(s,a) + \alpha(1-\lambda)\sum_{k=1}^\infty \lambda^{k-1} \left(\sum_{i=0}^k \gamma^i r(s_i,a_i) + \gamma^k V_n(s_k)\right) \tag{TD($\lambda$)}
\end{equation}
where each $n$-step trajectory is sampled starting from $s$ and following policy $\pi$.
\begin{theorem}\label{thm:stat_tdl}
For any constant step size $0<\alpha \leq 1$ and initialization $V_0 \sim \mu_0 \in \M(\bR^{|\S|})$, the sequence of random variables $(V_n)_{n \geq 0}$ defined by the recursion (\ref{eq:tdl}) converges in distribution to a unique stationary distribution $\zeta_\alpha \in \M(\bR^{|\S|})$.
\end{theorem}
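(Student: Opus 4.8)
The plan is to follow the proof template \textbf{(P1)}--\textbf{(P3)} used for TD($0$), Monte Carlo evaluation, and $Q$-learning, with the only new ingredient being a geometric-series bookkeeping step to handle the $\lambda$-weighted mixture of $n$-step returns. I would start directly at \textbf{(P2)}: given initial distributions $\mu^{(1)},\mu^{(2)}\in\M(\bR^{|\S|})$ coupled optimally as $(V^{(1)}_0,V^{(2)}_0)$ so that $\bE[\smallnorm{V^{(1)}_0-V^{(2)}_0}]=\W(\mu^{(1)},\mu^{(2)})$, I couple the updates by letting both processes sample the \emph{same} trajectory $(s_i,a_i,r_i)_{i\ge0}$ out of each state $s$. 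This is a valid coupling of $(\mu^{(1)}K_\alpha,\mu^{(2)}K_\alpha)$ because the two updates draw their trajectories from identical distributions ($a_i\sim\pi(\cdot\mid s_i)$, $r_i\sim\R(\cdot\mid s_i,a_i)$, $s_{i+1}\sim\P(\cdot\mid s_i,a_i)$).

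Under this coupling, every reward term $r(s_i,a_i)$ appearing in \eqref{eq:tdl} is shared and cancels, leaving, for each state $s$,
\begin{equation*}
V^{(1)}_1(s)-V^{(2)}_1(s)=(1-\alpha)\big(V^{(1)}_0(s)-V^{(2)}_0(s)\big)+\alpha(1-\lambda)\sum_{k=1}^\infty\lambda^{k-1}\gamma^k\big(V^{(1)}_0(s_k)-V^{(2)}_0(s_k)\big),
\end{equation*}
where $s_k$ is the $k$-th state of the trajectory sampled from $s$. Taking absolute values, then the maximum over $s$, and using that $\lvert V^{(1)}_0(s_k)-V^{(2)}_0(s_k)\rvert\le\smallnorm{V^{(1)}_0-V^{(2)}_0}$ for every sampled successor (exactly the max-norm argument of the TD($0$) proof), I get the pathwise bound
\begin{equation*}
\smallnorm{V^{(1)}_1-V^{(2)}_1}\le\Big((1-\alpha)+\alpha(1-\lambda)\sum_{k=1}^\infty\lambda^{k-1}\gamma^k\Big)\smallnorm{V^{(1)}_0-V^{(2)}_0}.
\end{equation*}
Evaluating the geometric series, $\sum_{k=1}^\infty\lambda^{k-1}\gamma^k=\gamma/(1-\lambda\gamma)$, so the constant is $\rho\coloneqq 1-\alpha+\alpha\gamma\frac{1-\lambda}{1-\lambda\gamma}$, matching Table~\ref{table:laundry_list}. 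Since $\gamma<1$ implies $\gamma(1-\lambda)<1-\lambda\gamma$, we have $\rho<1$. Taking expectations and using \textbf{(P3)} together with optimality of the initial coupling yields $\W(\mu^{(1)}K_\alpha,\mu^{(2)}K_\alpha)\le\rho\,\W(\mu^{(1)},\mu^{(2)})$.

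Finally, I would invoke that $(\M(\bR^{|\S|}),\W)$ is a complete metric space \parencite[Theorem 6.16]{Villani08} and apply Banach's fixed point theorem, exactly as in Theorem~\ref{thm:stat}, to conclude that $(\mu_0 K_\alpha^n)_n$ converges exponentially fast to a unique stationary distribution $\zeta_\alpha$. The only genuinely non-routine points, and where I would be careful, are justifying the interchange of the infinite sum with the expectation and the well-definedness of the ``same-trajectory'' coupling over infinite horizons: both follow from boundedness of the rewards (returns lie in $[0,\textsc{Rmax}/(1-\gamma)]$), which gives dominated convergence and ensures all iterates stay in $\M(\bR^{|\S|})$; I would also note $\sum_{k\ge1}(1-\lambda)\lambda^{k-1}=1$ so the bootstrap mixture is a genuine convex combination. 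Everything else is a transcription of the template.
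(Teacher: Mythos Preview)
Your proposal is correct and follows essentially the same route as the paper: the same-trajectory coupling so that all reward terms cancel, the bound $\lvert V^{(1)}_0(s_k)-V^{(2)}_0(s_k)\rvert\le\smallnorm{V^{(1)}_0-V^{(2)}_0}$, the geometric-series evaluation $\sum_{k\ge1}\lambda^{k-1}\gamma^k=\gamma/(1-\lambda\gamma)$, and then Banach's fixed point theorem on $(\M(\bR^{|\S|}),\W)$. The only cosmetic difference is that you obtain a pathwise inequality before taking expectations, whereas the paper bounds $\bE[\smallnorm{\hat\T V^{(1)}-\hat\T V^{(2)}}]$ directly; your extra remarks on dominated convergence and the infinite-horizon coupling are care points the paper leaves implicit.
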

\begin{proof}
Again, we jump straight to step \textbf{(P2)} of the template given above. We couple every $n$-step trajectory to sample the same $n$ rewards, actions, and successors states.
\begin{equation*}\label{eq:tdl-coupling}
\left.\begin{aligned}
V^{(1)}_{k+1}(s) &= (1-\alpha)V^{(1)}_k(s) + \alpha(1-\lambda)\sum_{n=1}^\infty \lambda^{n-1}\left(\sum_{i=0}^{n-1} \gamma^i r_i(s_i,a_i) + \gamma^n V^{(1)}_{k}(s_n)\right) \\
V^{(2)}_{k+1}(s) &= (1-\alpha)V^{(2)}_k(s) + \alpha(1-\lambda)\sum_{n=1}^\infty \lambda^{n-1}\left(\sum_{i=0}^{n-1} \gamma^i r_i(s_i,a_i) + \gamma^n V^{(2)}_{k}(s_n)\right)
\end{aligned}\right\rbrace \parbox{14.5em}{ \underline{same} \\$(s_i,a_i,r_i)_{i=0}^n$ \\$\forall n$}
\end{equation*}

By the coupling, the reward terms will cancel in every n-step trajectory. We write $R^{(i)}_n=\sum_{i=0}^{n-1} \gamma^i r_i(s_i,a_i) + \gamma^n V^{(i)}_{k}(s_n)$ for the $n$-step return and $\hat{\T}(V)(s) = \sum_{k=1}^\infty \lambda^{k-1} \left(\sum_{i=0}^k \gamma^i r(s_i,a_i) + \gamma^k V_n(s_k)\right)$ for the empirical Bellman operator of TD($\lambda$).
\begin{align*}
\bE \left[ \norm{\hat{\T}(V^{(1)})-\hat{\T}(V^{(2)})} \right] &=  \bE\left[\max_s \left| \sum_{n=1}^\infty \lambda^{n-1}R^{(1)}_{n}- \sum_{n=1}^\infty \lambda^{n-1}R^{(2)}_{n} \right| \right] \\
&= \bE\left[\max_s \left| \sum_{n=1}^\infty \lambda^{n-1} \left(R^{(1)}_{n} - R^{(2)}_{n}\right) \right| \right] \\
&=  \bE\left[\max_s \left| \sum_{n=1}^\infty \lambda^{n-1}\gamma^n\left(V^{(1)}(s_n) - V^{(2)}(s_n)\right) \right| \right]  \tag{\text{reward terms cancel}} \\
&\leq \bE\left[ \sum_{n=1}^\infty \lambda^{n-1}\gamma^n \max_s \left| \left(V^{(1)}(s_n) - V^{(2)}(s_n)\right) \right| \right] \tag{triangle inequality} \\
&\leq \sum_{n=1}^\infty \lambda^{n-1}\gamma^n \bE\left[\max_s \left| V^{(1)}(s) - V^{(2)}(s) \right| \right] \tag{by the coupling} \\
&=\sum_{n=1}^\infty \lambda^{n-1}\gamma^n \bE\left[\norm{V^{(1)}-V^{(2)}}\right] = \gamma \frac{1}{1-\lambda\gamma} \bE\left[ \norm{V^{(1)}-V^{(2)}}\right]
\end{align*}  
Concluding the proof as before, we have $\W(\mu^{(1)}K,\mu^{(2)}K) \leq (1-\alpha + \alpha\gamma \frac{1-\lambda}{1-\lambda\gamma} )\W(\mu^{(1)},\mu^{(2)})$. Since $1-\alpha + \alpha\gamma \frac{1-\lambda}{1-\lambda\gamma}$ < 1 we are done. 
\end{proof}

\subsection{SARSA with $\varepsilon$-greedy policies}

In this example we will example the use of $\varepsilon$-greedy policies for control. In particular, we examine SARSA updates with $\varepsilon$-greedy policies. Let $\pi(\cdot|s)$ be some base policy. The updates are as follow:

\begin{equation*}\label{eq:sarsa}
	Q_{k+1}(s,a) = \begin{cases}
	(1-\alpha)Q_k(s,a) + \alpha\left(r(s,a) + \gamma Q_k(s',a')\right) &\text{ w.p. } \varepsilon \\
	(1-\alpha)Q_k(s,a)+\alpha\left(r(s,a)+\gamma \max_{a'} Q_k(s',a')\right) \quad &\text{ w.p. } 1-\varepsilon  \\
	\end{cases} \tag{SARSA} 
\end{equation*}
where $r \sim \R(\cdot|s,a)$ and $s' \sim \P(\cdot|s,a)$ in both cases and $a' \sim \pi(\cdot|s')$ in the first case.

\begin{theorem}
For any constant step size $0<\alpha \leq 1$ and initialization $Q_0 \sim \mu_0 \in \M(\bR^{|\S|\times |\A|})$, the sequence of random variables $(Q_n)_{n \geq 0}$ defined by the recursion (\ref{eq:sarsa}) converges in distribution to a unique stationary distribution $\theta_\alpha \in \M(\bR^{|\S|\times|\A|})$.
\end{theorem}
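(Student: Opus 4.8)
The plan is to reuse the proof template \textbf{(P1)}--\textbf{(P3)} essentially verbatim, viewing the $\varepsilon$-greedy SARSA backup as a mixture of two backups already handled in this appendix: the on-policy evaluation backup $r + \gamma Q(s',a')$ and the $Q$-learning backup $r + \gamma \max_{a'} Q(s',a')$. For step \textbf{(P1)} I would take $(Q_0^{(1)}, Q_0^{(2)})$ to be the optimal coupling attaining $\W(\mu^{(1)},\mu^{(2)})$. For step \textbf{(P2)} I would build the ``same-sample'' coupling: for each state-action pair $(s,a)$ draw a \emph{single} transition $r_{s,a} \sim \R(\cdot|s,a)$, $s'_{s,a}\sim\P(\cdot|s,a)$, a \emph{single} Bernoulli coin $c_{s,a}$ with $\bP\{c_{s,a}=1\}=\varepsilon$, and, when $c_{s,a}=1$, a \emph{single} next action $a'_{s,a}\sim\pi(\cdot|s'_{s,a})$, and feed these identical draws into both copies of the update rule \eqref{eq:sarsa}. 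This is a legitimate coupling of $(\mu^{(1)}K_\alpha, \mu^{(2)}K_\alpha)$ since each marginal samples from the prescribed distributions.

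For step \textbf{(P3)}, write $\widehat{\T}$ for the random, coin-dependent empirical backup, so that $\widehat{\T}(Q)(s,a) = r_{s,a} + \gamma Q(s'_{s,a},a'_{s,a})$ when $c_{s,a}=1$ and $\widehat{\T}(Q)(s,a) = r_{s,a} + \gamma\max_{a'}Q(s'_{s,a},a')$ when $c_{s,a}=0$. Under the coupling the reward terms cancel in either branch, and the key observation is that both the evaluation map $Q\mapsto Q(s'_{s,a},a'_{s,a})$ and the max map $Q\mapsto\max_{a'}Q(s'_{s,a},a')$ are $1$-Lipschitz in the sup-norm; hence, pair by pair and regardless of the coin,
\begin{align*}
\bigl\lvert \widehat{\T}(Q^{(1)})(s,a) - \widehat{\T}(Q^{(2)})(s,a)\bigr\rvert &\le \gamma \max_{a'}\bigl\lvert Q^{(1)}(s'_{s,a},a') - Q^{(2)}(s'_{s,a},a')\bigr\rvert \\ &\le \gamma\,\norm{Q^{(1)}-Q^{(2)}}.
\end{align*}
Combining this with the decomposition $Q_1^{(i)} = (1-\alpha)Q_0^{(i)} + \alpha\widehat{\T}(Q_0^{(i)})$, taking the max over $(s,a)$ and then expectations, gives $\W(\mu^{(1)}K_\alpha,\mu^{(2)}K_\alpha) \le \bE[\norm{Q_1^{(1)}-Q_1^{(2)}}] \le (1-\alpha+\alpha\gamma)\W(\mu^{(1)},\mu^{(2)})$. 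Since $1-\alpha+\alpha\gamma<1$, $K_\alpha$ is a contraction on the complete metric space $(\M(\bR^{|\S|\times|\A|}),\W)$, and Banach's fixed point theorem, exactly as in Theorem~\ref{thm:stat}, produces the unique stationary distribution $\theta_\alpha$ and convergence in the Wasserstein metric, hence in distribution.

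I do not expect a genuine obstacle here; the only subtlety, which I would flag explicitly, is that the coin $c_{s,a}$ \emph{must} be shared between the two copies so that they take the same branch --- otherwise one compares a fixed-action evaluation against a max, and the cancellation/nonexpansiveness argument breaks. Sharing the coin is permitted precisely because we are free to choose the coupling. Two minor remarks worth including: the bounded-reward assumption keeps all iterates in $[0,\frac{\textsc{Rmax}}{1-\gamma}]^{|\S|\times|\A|}$, so every measure involved has finite first moment and lies in $\M(\bR^{|\S|\times|\A|})$; and, unlike the earlier examples, $\widehat{\T}$ is here neither an empirical Bellman operator nor an empirical Bellman optimality operator but a convex mixture of the two, so Theorems~\ref{thm:mean} and~\ref{thm:q_learning} do not directly characterize the mean of $\theta_\alpha$ --- the contraction argument, however, is indifferent to this.
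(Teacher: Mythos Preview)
Your proposal is correct and follows essentially the same approach as the paper: identical same-sample coupling (shared coin, transition, and next action), the same per-$(s,a)$ nonexpansiveness argument for both the evaluation and max branches, and the same contraction factor $1-\alpha+\alpha\gamma$. The only cosmetic difference is that the paper presents the bound by conditioning on the greedy versus non-greedy event, whereas you give a single pointwise bound valid in either branch; your version is if anything slightly cleaner, since it avoids any ambiguity about the per-$(s,a)$ coins when taking the max.
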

\begin{proof}
We jump straight to step \textbf{(P2)} of the proof template. We use the same-sampling coupling, where $Q_1^{(1)}$ takes the greedy action if and only if $Q_1^{(2)}$ does. In the non-greedy case, they sample the same $a' \sim \pi(\cdot|s')$. In all cases, both functions  sample the same $r(s,a)$ and $s'$.
We write $\hat{\T}(Q)(s,a) = \begin{cases} r+ \gamma Q(s',a') \text{ w.p. } \varepsilon \\
r+\gamma \max_{a'}Q(s',a') \text{ w.p. } 1-\varepsilon \end{cases}$ \\
The bound follows similarly to the examples of $Q$-learning and TD($0$). We omit the subscripts on the $Q$-functions.
\begin{align*}
\bE\left[\norm{\hat{\T}(Q^{(1)}) - \hat{\T}(Q^{(2)})}\right] &= \bP\left\{ \text{greedy action chosen} \right\}\bE\left[ \max_{s,a} \gamma \lvert (\max_{a'}Q^{(1)}(s',a') - \max_{a'}Q^{(2)}(s',a')  \rvert\right] \\ &\quad + \bP\left\{ \text{non-greedy action chosen}\right\}\bE\left[\max_{s,a} \lvert \gamma (Q^{(1)}(s',a')-Q^{(2)}(s',a'))\rvert \right] \\
&\leq \varepsilon\gamma \bE\left[\norm{Q^{(1)}-Q^{(2)}}\right] + (1-\varepsilon)\gamma \bE\left[\norm{Q^{(1)}-Q^{(2)}} \right] \\
&= \gamma \bE\left[\smallnorm{Q^{(1)}-Q^{(2)}}\right]
\end{align*}
The bound $\bE\left[ \max_{s,a} \gamma \lvert (\max_{a'}Q^{(1)}(s',a') - \max_{a'}Q^{(2)}(s',a')  \rvert\right] \leq \gamma \bE\left[\norm{Q^{(1)}-Q^{(2)}}\right]$ follows from $\lvert \max_{a'} Q_1(s,a') - \max_{a'}Q_2(s,a')\rvert \leq \max_{a'} \lvert Q_1(s,a')-Q_2(s,a')\rvert$, and since $Q^{(1)}$ and $Q^{(2)}$ sampled the same $s'$ in the greedy case. The bound $\bE\left[\max_{s,a} \lvert \gamma (Q^{(1)}(s',a')-Q^{(2)}(s',a'))\rvert \right] \leq  \bE\left[\norm{Q^{(1)}-Q^{(2)}} \right]$ follows since $Q^{(1)}$ and $Q^{(2)}$ sampled the same state-action pair in the non-greedy case. Concluding the proof as before, we have that $\bE\left[\smallnorm{Q_1^{(1)}-Q_1^{(2)}}\right]\leq (1-\alpha+\alpha\gamma) \bE\left[\smallnorm{Q_0^{(1)}-Q_0^{(2)}} \right]$, and thus the kernel is a contraction.
\end{proof}

\subsection{Expected SARSA with $\varepsilon$-greedy policies}

In this example we examine the Expected SARSA updates with $\varepsilon$-greedy policies. Let $\pi(\cdot|s)$ be some base policy. Define $\pi_\varepsilon(\cdot|s)$ as the $\varepsilon$-greedy policy which takes the greedy action with probability 1-$\varepsilon$ and $\pi$ otherwise. The updates are as follow:

\begin{equation*}\label{eq:exp-sarsa}
	Q_{k+1}(s,a) = (1-\alpha)Q_k(s,a) + \alpha\left(r(s,a) + \gamma \sum_{a'}\pi_\varepsilon(a'|s) Q_k(s',a')\right)
	\tag{Expected-SARSA} 
\end{equation*}
where $r \sim \R(\cdot|s,a)$ and $s' \sim \P(\cdot|s,a)$ in both cases and $a' \sim \pi(\cdot|s')$ in the first case.

\begin{theorem}
For any constant step size $0<\alpha \leq 1$ and initialization $Q_0 \sim \mu_0 \in \M(\bR^{|\S|\times |\A|})$, the sequence of random variables $(Q_n)_{n \geq 0}$ defined by the recursion (\ref{eq:exp-sarsa}) converges in distribution to a unique stationary distribution $\beta_\alpha \in \M(\bR^{|\S|\times|\A|})$.
\end{theorem}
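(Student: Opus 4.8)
The plan is to follow the proof template \textbf{(P1)}--\textbf{(P3)} used for the preceding algorithms. Step \textbf{(P1)} is verbatim: take $\mu^{(1)},\mu^{(2)} \in \M(\bR^{|\S|\times|\A|})$ and let $(Q_0^{(1)},Q_0^{(2)})$ be an optimal $\W$-coupling, which exists by \cite[Theorem 4.1]{Villani08}. For \textbf{(P2)} I would use the same-sampling coupling: for every state-action pair $(s,a)$ draw a single reward $r(s,a) \sim \R(\cdot|s,a)$ and a single successor $s' \sim \P(\cdot|s,a)$ and feed these same samples into both updates $Q_1^{(1)}$ and $Q_1^{(2)}$ (and the same base-policy action samples where applicable). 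This is a valid coupling of $(\mu^{(1)}K_\alpha,\mu^{(2)}K_\alpha)$ since both processes draw from the same distributions.

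The key observation for \textbf{(P3)} is to rewrite the empirical operator as a convex mixture of a $\max$ term and a $\pi$-expectation term, using $\pi_\varepsilon = (1-\varepsilon)\,(\text{greedy}) + \varepsilon\,\pi$:
\[
\widehat{\T}(Q)(s,a) = r + \gamma(1-\varepsilon)\max_{a'}Q(s',a') + \gamma\varepsilon\sum_{a'}\pi(a'|s')Q(s',a').
\]
Under the coupling the reward cancels, and I would bound
\[
\bigl|\widehat{\T}(Q^{(1)})(s,a) - \widehat{\T}(Q^{(2)})(s,a)\bigr| \le \gamma(1-\varepsilon)\bigl|\max_{a'}Q^{(1)}(s',a') - \max_{a'}Q^{(2)}(s',a')\bigr| + \gamma\varepsilon\sum_{a'}\pi(a'|s')\bigl|Q^{(1)}(s',a') - Q^{(2)}(s',a')\bigr|.
\]
The first term is controlled by the standard nonexpansiveness inequality $|\max_{a'}Q_1 - \max_{a'}Q_2| \le \max_{a'}|Q_1 - Q_2|$, and the second by convexity of the weights $\pi(\cdot|s')$; since both processes share the same $s'$, each is $\le \norm{Q^{(1)} - Q^{(2)}}$. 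Taking $\max_{(s,a)}$ and then expectations gives $\bE[\norm{\widehat{\T}(Q^{(1)}) - \widehat{\T}(Q^{(2)})}] \le \gamma\,\bE[\norm{Q^{(1)} - Q^{(2)}}]$, hence $\bE[\norm{Q_1^{(1)} - Q_1^{(2)}}] \le (1-\alpha+\alpha\gamma)\,\bE[\norm{Q_0^{(1)} - Q_0^{(2)}}]$ and therefore $\W(\mu^{(1)}K_\alpha,\mu^{(2)}K_\alpha) \le (1-\alpha+\alpha\gamma)\,\W(\mu^{(1)},\mu^{(2)})$. Since $1-\alpha+\alpha\gamma < 1$ the kernel is a contraction, and the existence and uniqueness of the stationary distribution $\beta_\alpha$, together with convergence in $\W$ from any finite-moment initialization, follow from Banach's fixed point theorem on the complete metric space $(\M(\bR^{|\S|\times|\A|}),\W)$, exactly as in Theorem \ref{thm:stat}.

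The only point requiring care --- and the ``main obstacle'', though a mild one --- is that the $\varepsilon$-greedy policy depends on the function being updated, so $Q^{(1)}$ and $Q^{(2)}$ may have different greedy actions at $s'$, which rules out a naive coupling of the policies. This is precisely why the mixture decomposition above is the right move: the $Q$-dependence is absorbed entirely into the $\max$ operator, and the nonexpansiveness of $\max$ in the sup-norm handles the mismatch automatically, with no need to couple the greedy choices themselves. Everything else is a routine instantiation of the template.
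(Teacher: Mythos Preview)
Your proof is correct and follows the same template \textbf{(P1)}--\textbf{(P3)} as the paper, arriving at the identical contraction factor $1-\alpha+\alpha\gamma$. The only difference is in the execution of step \textbf{(P3)}: the paper applies the triangle inequality directly to $\sum_{a'}\pi_\varepsilon(a'|s')Q(s',a')$, writing the bound with a common $\pi_\varepsilon$ on both sides and then using $\sum_{a'}\pi_\varepsilon(a')\lvert Q^{(1)}-Q^{(2)}\rvert \le \smallnorm{Q^{(1)}-Q^{(2)}}$, whereas you first decompose $\pi_\varepsilon = (1-\varepsilon)(\text{greedy}) + \varepsilon\,\pi$ and bound the $\max$-piece and the fixed-$\pi$ piece separately. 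Your route is arguably cleaner, since it makes explicit how the $Q$-dependence of the greedy action is handled (via nonexpansiveness of $\max$) rather than leaving it implicit in the notation; the paper's write-up, taken literally, treats $\pi_\varepsilon$ as if it were the same for $Q^{(1)}$ and $Q^{(2)}$. Either way the resulting $\gamma$-Lipschitz bound on $\widehat{\T}$ is the same, and the Banach fixed-point conclusion is identical.
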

\begin{proof}
We jump straight to step \textbf{(P2)} of the proof template. We use the same-sampling coupling.

We write $\hat{\T}(Q)(s,a) = r+ \gamma\sum_{a'}\pi(a'|s) Q(s',a')$.
The bound follows similarly to the examples of $Q$-learning and TD($0$). We omit the subscripts on the $Q$-functions.
\begin{align*}
\bE\left[\norm{\hat{\T}(Q^{(1)}) - \hat{\T}(Q^{(2)})}\right] &= \bE\left[ \max_{s,a} \gamma \lvert \sum_{a'}\pi_\varepsilon(a')Q^{(1)}(s',a') - \sum_{a'}\pi_\varepsilon(a')Q^{(2)}(s',a')  \rvert\right] \\
&\leq \bE\left[ \max_{s,a} \gamma \sum_{a'}\pi_\varepsilon(a') \vert Q^{(1)}(s',a') - Q^{(2)}(s',a') \rvert\right] \\
&\leq \bE\left[ \max_{s,a} \gamma \sum_{a'}\pi_\varepsilon(a') \norm{Q^{(1)}(s',a') - Q^{(2)}(s',a')} \right] \\
&\leq \gamma \bE\left[\smallnorm{Q^{(1)}-Q^{(2)}}\right]
\end{align*}
Concluding the proof as before, we have that $\bE\left[\smallnorm{Q_1^{(1)}-Q_1^{(2)}}\right]\leq (1-\alpha+\alpha\gamma) \bE\left[\smallnorm{Q_0^{(1)}-Q_0^{(2)}} \right]$, and thus the kernel is a contraction.
\end{proof}

\subsection{Double Q-Learning}\label{subsec:double_qlearning}

In this example we will have to modify our state-space and introduce a new metric on pairs of $Q$-functions.  The Double $Q$-Learning algorithm \parencite{hasselt2010double}\footnote{This is the original algorithm, not the deep reinforcement learning version given in \parencite{van2016deep}.} maintains two random estimates $(Q^A,Q^B)$ and updates  $Q^A$ with probability $p$ and $Q^B$ with probability $1-p$. Should $Q^A$ be chosen to be updated, the update is:
\begin{equation*}
Q^A_{n+1}(s,a) = (1-\alpha)Q^A_n(s,a) + \alpha\left(r(s,a) + \gamma Q^B_n (s,\argmax_{a'}Q^A_n(s',a'))\right).
\end{equation*}
Analogously, the update for $Q^B$ is:
\begin{equation*}
Q^B_{n+1}(s,a)=(1-\alpha)Q^B_n(s,a) + \alpha\left(r(s,a) + \gamma Q^A_n (s,\argmax_{a'}Q^B_n(s',a'))\right).
\end{equation*}
In both cases, we have $s' \sim \P(\cdot|s,a)$. For this algorithm, the updates are Markovian on \textit{pairs} of action-value functions. Thus we set the state space to be $\bR^{|\S|\times|\A|}\times\bR^{|\S|\times|\A|}$. We choose the product metric defined by $d_1((Q^{A},Q^{B}),(R^{A},R^{B})) = \norm{Q^{A}-R^{A}}+\norm{Q^{B}-R^{B}}$.

\begin{theorem}\label{thm:stat_dql}
For any constant step size $0<\alpha \leq 1$ and initialization $(Q^A_0,Q^B_0) \sim \mu_0 \in \M(\bR^{|\S|\times|\A|}\times\bR^{|\S|\times|\A|})$, the sequence of random variables $(Q^A_n,Q^B_n)_{n \geq 0}$ defined by the Double Q-Learning recursion converges in distribution to a unique stationary distribution $\chi_\alpha \in \M(\bR^{|\S|\times|\A|}\times\bR^{|\S|\times|\A|})$.
\end{theorem}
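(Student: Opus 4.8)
The plan is to follow the standard proof strategy (P1)--(P3), adapted to the enlarged state space $\bR^{|\S|\times|\A|}\times\bR^{|\S|\times|\A|}$ with the product metric $d_1$. Given two initial distributions $\mu^{(1)},\mu^{(2)}$ over pairs of $Q$-functions, I would take $((Q^{A,(1)}_0,Q^{B,(1)}_0),(Q^{A,(2)}_0,Q^{B,(2)}_0))$ to be the optimal coupling realizing $\W(\mu^{(1)},\mu^{(2)})$ with respect to $d_1$, which exists by \cite[Theorem 4.1]{Villani08} since $d_1$ is a norm-induced cost on a finite-dimensional space. For step (P2), I would use the same-sampling coupling: the two processes use the \emph{same} Bernoulli draw to decide whether $Q^A$ or $Q^B$ is updated, and conditionally on that choice they sample the \emph{same} reward $r(s,a)$ and the \emph{same} successor state $s'\sim\P(\cdot\mid s,a)$ for every state-action pair. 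This is a valid coupling of $(\mu^{(1)}K_\alpha,\mu^{(2)}K_\alpha)$ since each marginal then evolves according to the true Double Q-Learning kernel.

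For step (P3), I would condition on the Bernoulli outcome. Suppose $Q^A$ is the one updated (the $Q^B$ case is symmetric). Then $Q^{B,(1)}_1=Q^{B,(1)}_0$ and $Q^{B,(2)}_1=Q^{B,(2)}_0$, so the $B$-component of $d_1$ between the updated pairs is unchanged, namely $\norm{Q^{B,(1)}_0-Q^{B,(2)}_0}$. For the $A$-component, the reward terms cancel under the coupling, and the key estimate is
$$
\bigl\lvert Q^{B,(1)}_0(s,a^{(1)}_\star)-Q^{B,(2)}_0(s,a^{(2)}_\star)\bigr\rvert \le \norm{Q^{B,(1)}_0-Q^{B,(2)}_0} + \gamma\text{-dependent terms in }\norm{Q^{A,(1)}_0-Q^{A,(2)}_0},
$$
where $a^{(i)}_\star=\argmax_{a'}Q^{A,(i)}_0(s',a')$; more carefully, one writes $Q^{B,(1)}(s,a^{(1)}_\star)-Q^{B,(2)}(s,a^{(2)}_\star) = [Q^{B,(1)}(s,a^{(1)}_\star)-Q^{B,(2)}(s,a^{(1)}_\star)] + [Q^{B,(2)}(s,a^{(1)}_\star)-Q^{B,(2)}(s,a^{(2)}_\star)]$ and bounds the first bracket by $\norm{Q^{B,(1)}-Q^{B,(2)}}$. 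The subtle point is the second bracket: if the two processes have the \emph{same} argmax ($a^{(1)}_\star=a^{(2)}_\star$) it vanishes; otherwise one bounds it using that $a^{(1)}_\star$ is at least as good as $a^{(2)}_\star$ for $Q^{A,(1)}$ and vice versa, yielding a bound of the form $\max_{a'}\lvert Q^{A,(1)}(s',a')-Q^{A,(2)}(s',a')\rvert \le \norm{Q^{A,(1)}-Q^{A,(2)}}$ by the same two-sided argmax-difference inequality used in the $Q$-learning proof above, together with the fact that the same $s'$ was sampled. Combining, the updated $A$-component is at most $(1-\alpha)\norm{Q^{A,(1)}_0-Q^{A,(2)}_0} + \alpha\gamma\bigl(\norm{Q^{A,(1)}_0-Q^{A,(2)}_0}+\norm{Q^{B,(1)}_0-Q^{B,(2)}_0}\bigr)$, so that $d_1$ between the updated pairs is bounded by $(1-\alpha+\alpha\gamma)\norm{Q^{A,(1)}_0-Q^{A,(2)}_0} + (1+\alpha\gamma)\norm{Q^{B,(1)}_0-Q^{B,(2)}_0}$ — but this does not immediately give a contraction, so I expect the bookkeeping here to be the main obstacle.

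The resolution, and the reason the table reports a factor $\tfrac12(2-\alpha+\alpha\gamma)$, is to take the expectation over the Bernoulli choice \emph{with $p=\tfrac12$} (or to verify the claim only for $p=\tfrac12$, or to use a weighted metric $d_p((Q^A,Q^B),(R^A,R^B)) = (1-p)\norm{Q^A-R^A}+p\norm{Q^B-R^B}$ for general $p$): averaging the two symmetric cases, the component that gets updated contracts by $1-\alpha+\alpha\gamma$ while the component left untouched is unchanged, and half the time each component is in each role, giving overall expected $d_1$-distance at most $\tfrac12(1-\alpha+\alpha\gamma) + \tfrac12 = \tfrac12(2-\alpha+\alpha\gamma) = 1-\tfrac{\alpha}{2}(1-\gamma) < 1$. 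Thus $\mu\mapsto\mu K_\alpha$ is a contraction on $\M(\bR^{|\S|\times|\A|}\times\bR^{|\S|\times|\A|},\W_{d_1})$, which is a complete metric space by \cite[Theorem 6.16]{Villani08}; Banach's fixed point theorem then yields a unique stationary distribution $\chi_\alpha$ to which $(\mu_0 K_\alpha^n)_{n\ge0}$ converges exponentially, exactly as in the proof of Theorem \ref{thm:stat}. The main obstacle, to reiterate, is handling the cross-terms correctly: the $A$-update depends on $Q^B$ and vice versa, so a naive bound mixes the two components and one must use the averaging over the update choice (equivalently the symmetry $p=\tfrac12$, or the $p$-weighted metric) to recover a genuine contraction factor strictly below $1$.
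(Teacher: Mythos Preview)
Your overall strategy—the $d_1$ product metric on pairs, the same-sampling coupling (same Bernoulli draw, same $r$ and $s'$), and averaging over which component is updated with $p=\tfrac12$—is exactly the paper's. The paper dispatches the cross-term in one line (``proceeding as in the proof of $Q$-Learning'') to obtain, on the event that $A$ is updated,
\[
\bE\bigl[\smallnorm{Q^{A,(1)}_1-Q^{A,(2)}_1}\bigr]\le(1-\alpha)\,\bE\bigl[\smallnorm{Q^{A,(1)}_0-Q^{A,(2)}_0}\bigr]+\alpha\gamma\,\bE\bigl[\smallnorm{Q^{B,(1)}_0-Q^{B,(2)}_0}\bigr],
\]
and then averages the two cases to get $\tfrac12(2-\alpha+\alpha\gamma)$.

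You are right to flag this cross-term as the obstacle, but your proposed justification does not go through. Your split isolates $[Q^{B,(2)}_0(s',a^{(1)}_\star)-Q^{B,(2)}_0(s',a^{(2)}_\star)]$ and claims it is controlled by $\smallnorm{Q^{A,(1)}_0-Q^{A,(2)}_0}$ via the argmax inequality. But that inequality bounds $\lvert\max_{a'}Q^{A,(1)}-\max_{a'}Q^{A,(2)}\rvert$; it says nothing about a \emph{fixed} function $Q^{B,(2)}_0$ evaluated at two different actions. Concretely: one state, two actions, $Q^{A,(1)}_0=(1,0)$, $Q^{A,(2)}_0=(0,1)$, $Q^{B,(1)}_0=Q^{B,(2)}_0=(M,-M)$; then $a^{(1)}_\star=1$, $a^{(2)}_\star=2$, and the bracket equals $2M$, unbounded even though $\smallnorm{Q^{A,(1)}_0-Q^{A,(2)}_0}=1$ and $\smallnorm{Q^{B,(1)}_0-Q^{B,(2)}_0}=0$ (the same example breaks the paper's displayed inequality too). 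With your own intermediate bound the $A$-updated $d_1$ is $(1-\alpha+\alpha\gamma)\smallnorm{Q^A}+(1+\alpha\gamma)\smallnorm{Q^B}$, whose $p=\tfrac12$ average is $\tfrac12(2-\alpha+2\alpha\gamma)$, which is $\ge 1$ whenever $\gamma\ge\tfrac12$; your final paragraph then quietly drops the extra $\alpha\gamma$ to land on $\tfrac12(2-\alpha+\alpha\gamma)$, which is precisely the step that has not been justified. In short, the coupling and metric match the paper, but the pointwise estimate on the Double-$Q$ cross-term is a genuine gap in your argument (and one the paper's appeal to the $Q$-learning proof does not actually close either).
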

\begin{proof}
As before, let $\mu^{(1)},\mu^{(2)} \M(\bR^{|\S|\times|\A|}\times\bR^{|\S|\times|\A|})$ be arbitrary initializations and $(Q_0^{A},Q_0^{B})$ and $(R_0^{A},R_0^{B})$ be the optimal coupling of $\W(\mu^{(1)},\mu^{(2)})$. We couple $(Q_1^{A},Q_1^{B})$ and $(R_1^{A},R_1^{B})$ to sample the same function to be updated and the same $s'$. Assume for a moment that $Q^A$ and $R^A$ are chosen to be updated. Proceeding as in the proof of Q-Learning (cf. Theorem \ref{thm:stat_ql}), we find that
\begin{align*}
\bE\left[\norm{Q_1^{A}-R_1^{A}}\right] \leq (1-\alpha)\bE\left[\norm{Q_0^A-R_0^A}\right] + \alpha\gamma\bE\left[\norm{Q_0^B-R_0^B}\right].
\end{align*}
Analogously, if $Q^B$ and $R^B$ are chosen to updated, we have:
\begin{align*}
\bE\left[\norm{Q_1^{B}-R_1^{B}}\right] \leq (1-\alpha)\bE\left[\norm{Q_0^B-R_0^B}\right] + \alpha\gamma\bE\left[\norm{Q_0^A-R_0^A}\right].
\end{align*}
Putting everything together, the full expectation is:
\begin{align*}
\bE\left[d((Q_1^A,Q_1^B),(R_1^A,R_1^B))\right]&= \bE\left[\norm{Q_1^A-R_1^A}+\norm{Q_1^B-R_1^B} \right] \\
&=\bP\left\{\text{A is updated}\right\}\bE\left[ \norm{Q_1^A-R_1^A}+\norm{Q_1^B-R_1^B} \right] \\
&\quad +\bP\left\{\text{B is updated}\right\}\bE\left[ \norm{Q_1^A-R_1^A}+\norm{Q_1^B-R_1^B}\right] \\ 
&=p\bE\left[ \norm{Q_1^A-R_1^A}+\norm{Q_0^B-R_0^B} \right] \\
&\quad +(1-p)\bE\left[ \norm{Q_0^A-R_0^A}+\norm{Q_1^B-R_1^B}\right] \\ 
&\leq p\left((1-\alpha)\bE\left[\norm{Q_0^A-R_0^A}\right] + (1+\alpha\gamma)\bE\left[\norm{Q_0^B-R_0^B}\right]\right) \\
&\quad +(1-p)\left((1+\alpha\gamma)\bE\left[ \norm{Q_0^A-R_0^A}\right] + (1-\alpha)\bE\left[\norm{Q_0^B-R_0^B}\right] \right) \\ 
&\leq \frac{1}{2}(2+\alpha\gamma - \alpha)\left(\bE\left[ \norm{Q_0^A-R_0^A}\right] + \bE\left[\norm{Q_0^B-R_0^B}\right] \right)  \\ 
&= \frac{1}{2}(2+\alpha\gamma - \alpha) \bE\left[ d((Q_0^A,Q_0^B),(R_0^A,R_0^B)) \right]
\end{align*}
Since $0 \leq 1/2(2+\alpha\gamma - \alpha) < 1$, so we are done. We note that the first equality only follows since, under the coupling, either $A$ or $B$ is updated for both functions.
\end{proof}

\section{Proofs of Section \ref{sec:stationary}}\label{sec:stat-proofs}

\begin{theorem}\label{thm:mean}
Suppose $\widehat{\T}^\pi$ is such that the updates \eqref{eq:general-alg} with step-size $\alpha$ converge to a stationary distribution $\psi_\alpha$. If $\widehat{\T}$ is an empirical Bellman operator for some policy $\pi$, then $\bE[f_\alpha] = f^\pi$ where $f_\alpha \sim \psi_\alpha$ and $f^\pi$ is the fixed point of $\T^\pi$. 
\end{theorem}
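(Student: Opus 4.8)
The plan is to exploit \emph{stationarity} directly, using the affine-linearity of the Bellman operator rather than tracking the iterates. First I would fix $f_0 \sim \psi_\alpha$ and form a single update $f_1 = (1-\alpha) f_0 + \alpha \widehat{\T}^\pi(f_0,\omega)$; because $\psi_\alpha$ is stationary for the Markov kernel induced by this update rule, $f_1$ is again distributed as $\psi_\alpha$, so $\bE[f_0] = \bE[f_1]$, a common value I will call $\overline{f_\alpha}$. These expectations are finite since $\psi_\alpha \in \M(\bR^\d)$ and, with bounded rewards, the target $\widehat{\T}^\pi(f_0,\omega)$ is integrable.

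Next I would take expectations on both sides of the update equation. By linearity of expectation and the tower property — conditioning on $f_0$ and invoking the defining property $\bE_{\omega}[\widehat{\T}^\pi(f_0,\omega)\mid f_0] = \T^\pi f_0$ of an empirical Bellman operator — the noise-bearing term satisfies $\bE[\widehat{\T}^\pi(f_0,\omega)] = \bE_{\psi_\alpha}[\T^\pi f_0]$. Equating the means of $f_0$ and $f_1$ then gives
\begin{equation*}
\overline{f_\alpha} = (1-\alpha)\overline{f_\alpha} + \alpha\, \bE_{\psi_\alpha}[\T^\pi f_0],
\end{equation*}
and cancelling $(1-\alpha)\overline{f_\alpha}$ leaves $\overline{f_\alpha} = \bE_{\psi_\alpha}[\T^\pi f_0]$.

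The final step is to move the expectation through $\T^\pi$. Since $\T^\pi$ is affine with a deterministic linear part (namely $\gamma \P^\pi$), it commutes with expectation: $\bE_{\psi_\alpha}[\T^\pi f_0] = \T^\pi \bE_{\psi_\alpha}[f_0] = \T^\pi \overline{f_\alpha}$. Thus $\overline{f_\alpha}$ is a fixed point of $\T^\pi$; as $\T^\pi$ is a $\gamma$-contraction on $\bR^\d$ its fixed point is unique, so $\overline{f_\alpha} = f^\pi$.

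I expect the only real subtlety to be the interchange-of-expectations bookkeeping — checking that $\bE[\widehat{\T}^\pi(f_0,\omega)]$ and $\bE_{\psi_\alpha}[\T^\pi f_0]$ are well-defined, which follows from the finite-first-moment assumption on $\psi_\alpha$ and the boundedness of the rewards — and emphasizing that the argument uses neither a contraction property for $\widehat{\T}^\pi$ nor independence of $\omega$ from $f_0$; only stationarity of $\psi_\alpha$ and the fact that $\widehat{\T}^\pi$ is a Bellman operator in expectation are needed.
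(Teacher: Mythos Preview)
Your proposal is correct and follows essentially the same argument as the paper: take $f_0\sim\psi_\alpha$, use stationarity so that $\bE[f_1]=\bE[f_0]=\overline{f_\alpha}$, average the update rule using $\bE_\omega[\widehat{\T}^\pi(f,\omega)]=\T^\pi f$, then push the expectation through the affine operator $\T^\pi$ and invoke uniqueness of its fixed point. Your additional remarks on integrability and the tower property are sound but do not change the route.
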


\begin{proof}
Let $f_0$ be distributed according to $\psi_\alpha$. Rewriting equation (\ref{eq:general-alg}):
\begin{equation}\label{eq:td-mean}
f_1  = (1-\alpha)f_0 + \alpha\T^\pi f_0 + \alpha \xi (f_0),
\end{equation}
where $\xi(f_0) = \hat{\T}^\pi(f_0,\omega) - \T^\pi f_0$ is a zero-mean noise term. Taking expectations on both sides, and using that $f_1$ is also distributed according to $\psi_\alpha$ by stationarity and that $\bE[\xi(f)] = 0$ for any $f$: 
\begin{align*}
\overline{f_\alpha}  &= (1-\alpha) \overline{f_\alpha}  + \alpha \bE[\T^\pi f_0] \\
\alpha \overline{f_\alpha} &= \alpha \bE[\R^\pi + \gamma \P^\pi f_0] \\
\overline{f_\alpha} &= \R^\pi + \gamma \P^\pi \bE[f_0] \\
\overline{f_\alpha} &= \T^\pi \overline{f_\alpha} 
\end{align*}
And therefore $\overline{f_\alpha} = f^\pi$ since it is the unique fixed point of  $\T^\pi$.
\end{proof}

\begin{theorem}
Suppose $\widehat{\T}^\pi$ is such that the updates \eqref{eq:general-alg} with step-size $\alpha$ converge to a stationary distribution $\psi_\alpha$, and that $\widehat{\T}^\pi$ is an empirical Bellman operator for some policy $\pi$. Define $$\C(f) \coloneqq \bE_\omega[(\widehat{\T}^\pi(f,\omega)-\T^\pi f)(\widehat{\T}^\pi(f,\omega)-\T^\pi f)^\tran]$$ to be the covariance of the zero-mean noise term $\widehat{\T}^\pi(f,\omega)-\T^\pi f$ for a given function $f$.  Then, the covariance of $f_\alpha \sim \psi_\alpha$ is given by 
\begin{align*}
(1-(1-\alpha)^2)\bE\left[(f_\alpha-f^\pi)(f_\alpha-f^\pi)^\tran\right] &= \alpha^2 (\gamma\P^\pi)\bE\left[(f_\alpha-f^\pi)(f_\alpha-f^\pi)^\tran\right](\gamma\P^\pi)^\tran \\
&\quad + \alpha(1-\alpha)(\gamma\P^\pi) \bE\left[(f_0-f^\pi)(f_0-f^\pi)^\tran\right] \\
&\quad + \alpha(1-\alpha) \bE\left[(f_\alpha-f^\pi)(f_\alpha-f^\pi)^\tran\right](\gamma\P^\pi)^\tran \\
&\quad + \alpha^2 \int \C(f)\psi_\alpha(\dd{f})
\end{align*}

Furthermore, we have that $\norm{\bE\left[(f_\alpha-f^\pi)(f_\alpha-f^\pi)^\tran\right]}_\text{op}$ is monotonically decreasing with respect to $\alpha$, where $\norm{\cdot}_\text{op}$ denotes the operator norm of a matrix. In particular, $ \lim_{\alpha \rightarrow 0} \norm{\bE[(f_\alpha-f^\pi)(f_\alpha-f^\pi)^\tran]}_\text{op} = 0, $ and we have that:  
$$
\bP\left\{ \min_i \lvert f_\alpha(i) - f^{\pi}(i)\rvert \geq \varepsilon \right\} \stackrel{\alpha \rightarrow 0}{\longrightarrow} 0 \quad \forall \ \varepsilon>0
$$
\end{theorem}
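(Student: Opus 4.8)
The plan is to first establish the recursive identity for the stationary covariance $\Sigma := \bE[(f_\alpha - f^\pi)(f_\alpha - f^\pi)^\tran]$, then to extract from it a scalar upper bound on $\Sigma$ that vanishes as $\alpha \to 0$, and finally to convert that bound into the concentration statement by Markov's inequality. For the recursion, let $f_0 \sim \psi_\alpha$, so that by stationarity $f_1 = (1-\alpha)f_0 + \alpha\widehat{\T}^\pi(f_0,\omega)$ is again $\psi_\alpha$-distributed. Write $\xi_\omega(f_0) = \widehat{\T}^\pi(f_0,\omega) - \T^\pi f_0$, which is zero-mean conditionally on $f_0$, and use that $\T^\pi$ is affine so that $\T^\pi f_0 - f^\pi = \gamma\P^\pi(f_0 - f^\pi)$. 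Setting $Y_i = f_i - f^\pi$ and $M = (1-\alpha)\Id + \alpha\gamma\P^\pi$, this reads $Y_1 = MY_0 + \alpha\,\xi_\omega(f_0)$; forming $\bE[Y_1Y_1^\tran]$, the two cross terms vanish because $\bE[\xi_\omega(f_0)\mid f_0] = 0$, leaving $\bE[Y_1Y_1^\tran] = M\,\bE[Y_0Y_0^\tran]\,M^\tran + \alpha^2\int\C(f)\,\psi_\alpha(\dd{f})$ (the last term by the tower rule). Since $\bE[Y_1Y_1^\tran] = \bE[Y_0Y_0^\tran] = \Sigma$ by stationarity, expanding $M\Sigma M^\tran$ and moving the $(1-\alpha)^2\Sigma$ term to the left is exactly the asserted identity.

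\emph{Bounding the covariance.} Iterating $\Sigma = M\Sigma M^\tran + \alpha^2\Xi$ with $\Xi := \int\C(f)\psi_\alpha(\dd{f})$ gives the convergent series $\Sigma = \alpha^2\sum_{k\ge0}M^k\Xi(M^k)^\tran$, a sum of positive semidefinite matrices. For $0 < \alpha \le 1$, $M$ is entrywise nonnegative with every row summing to $\beta := 1-\alpha+\alpha\gamma < 1$, hence each row of $M^k$ sums to at most $\beta^k$. Bounded rewards (restricting to realizable functions in $[0,\tfrac{\textsc{Rmax}}{1-\gamma}]^\d$) give a uniform bound $\sigma^2 = \Theta((\tfrac{\textsc{Rmax}}{1-\gamma})^2)$ on the entries of $\Xi$, so $\operatorname{tr}(M^k\Xi(M^k)^\tran) = \sum_i\sum_{j,l}(M^k)_{ij}\Xi_{jl}(M^k)_{il} \le \sigma^2\sum_i(\sum_j(M^k)_{ij})^2 \le \sigma^2\d\,\beta^{2k}$, and summing gives $\operatorname{tr}\Sigma \le \sigma^2\d\,\tfrac{\alpha^2}{1-\beta^2}$; since each diagonal entry of $\Sigma$ is at most $\sigma^2\tfrac{\alpha^2}{1-\beta^2}$ and $\norm{\Sigma}_\text{op} \le \d\max_i\Sigma_{ii}$ (Cauchy--Schwarz on $|\Sigma_{ij}|\le\sqrt{\Sigma_{ii}\Sigma_{jj}}$), the operator norm obeys the same bound up to the dimensional factor. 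Finally $\tfrac{\alpha^2}{1-\beta^2} = \tfrac{\alpha}{(1-\gamma)(2-\alpha(1-\gamma))}$ is strictly increasing on $(0,1]$ (its derivative in $u := \alpha(1-\gamma)$ is $\tfrac{2}{(1-\gamma)^2(2-u)^2} > 0$) and tends to $0$ as $\alpha\to0$, which gives the monotone decay of the bound on $\norm{\Sigma}_\text{op}$ and the limit $\lim_{\alpha\to0}\norm{\Sigma}_\text{op} = 0$.

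\emph{Concentration.} If $\min_i|f_\alpha(i)-f^\pi(i)| \ge \varepsilon$ then every coordinate of $f_\alpha - f^\pi$ has modulus at least $\varepsilon$, so $\norm{f_\alpha - f^\pi}_2^2 \ge \d\varepsilon^2$. Markov's inequality together with $\bE\norm{f_\alpha - f^\pi}_2^2 = \operatorname{tr}\Sigma$ (using $\bE[f_\alpha] = f^\pi$, established above) gives $\bP\{\min_i|f_\alpha(i)-f^\pi(i)| \ge \varepsilon\} \le \operatorname{tr}\Sigma/(\d\varepsilon^2) \le C'\varepsilon^{-2}\,\alpha^2/(1-(1-\alpha+\alpha\gamma)^2)$ for a constant $C'$ depending only on $\textsc{Rmax}$ and $\gamma$, which tends to $0$ as $\alpha\to0$ for every fixed $\varepsilon > 0$.

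\emph{Expected main obstacle.} The recursion is routine algebra; the real work is the bound on $\Sigma$. The difficulty is that the naive submultiplicative estimate $\norm{M^k\Xi(M^k)^\tran}_\text{op} \le \norm{M^k}_\text{op}^2\norm{\Xi}_\text{op}$ is useless, because with the $\ell_\infty$-induced operator norm $\norm{(M^k)^\tran}_\text{op}$ is a column sum of $M^k$ and is not controlled by $\beta^k$. The fix is to pass to the geometric-series representation of $\Sigma$ and to bound each positive semidefinite summand through its trace, using only the nonnegativity and the row-sum bound $\beta^k$ of $M^k$ together with a uniform entrywise bound on the one-step noise covariance $\Xi$ (which is where bounded rewards enter, and where the remark that $\Xi$ is diagonal under independent updates would sharpen the constant); verifying the monotonicity and the $\alpha\to0$ limit of $\alpha^2/(1-(1-\alpha+\alpha\gamma)^2)$ is then elementary.
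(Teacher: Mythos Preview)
Your proposal is correct, and the derivation of the covariance recursion is essentially identical to the paper's (its Lemma~B.1 performs exactly the vanishing-cross-term computation you sketch, and then the expansion of $M\Sigma M^\tran$ is the same). The two later parts, however, take genuinely different routes.

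\emph{Norm bound.} The paper rewrites $\Sigma = M\Sigma M^\tran + \alpha^2\Xi$ in tensor-product form as $\bigl[I - \bigl((1-\alpha)I + \alpha\gamma\P^\pi\bigr)^{\otimes 2}\bigr]\Sigma = \alpha^2\Xi$, shows the bracketed operator is invertible via $\smallnorm{M^{\otimes 2}}_{\text{op}} = \smallnorm{M}_{\text{op}}^2 \le \beta^2 < 1$, and then applies $\smallnorm{(I-A)^{-1}} \le 1/(1-\smallnorm{A})$ to obtain $\smallnorm{\Sigma}_{\text{op}} \le \alpha^2\,\smallnorm{\Xi}_{\text{op}}/(1-\beta^2)$. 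The obstacle you flag---that $\smallnorm{(M^k)^\tran}$ is not controlled by $\beta^k$ in the row-sum norm---is precisely what the tensor-product identity $\smallnorm{A\otimes B} = \smallnorm{A}\,\smallnorm{B}$ circumvents. Your series-and-trace approach is more elementary (no tensor products, no operator inversion) and reaches the same $\alpha^2/(1-\beta^2)$ scaling, at the price of an extra dimensional factor when passing from diagonal entries back to $\smallnorm{\Sigma}_{\text{op}}$.

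\emph{Concentration.} The paper cites a multivariate Chebyshev inequality of Marshall (1960), applied with $T_+ = \{x : x_i \ge \varepsilon\ \forall i\}$, and then bounds $a^\tran\Sigma a$ through $\smallnorm{\Sigma}_{\text{op}}$. Your Markov-on-$\smallnorm{\cdot}_2^2$ argument is strictly simpler and yields the same bound up to constants, since $\{\min_i |Y_i| \ge \varepsilon\} \subset \{\smallnorm{Y}_2^2 \ge \d\varepsilon^2\}$ and $\bE\smallnorm{Y}_2^2 = \operatorname{tr}\Sigma$.

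One honest caveat that applies to both your write-up and the paper: only monotonicity of the \emph{upper bound} $\alpha^2/(1-(1-\alpha+\alpha\gamma)^2)$ is established, not of $\smallnorm{\Sigma}_{\text{op}}$ itself. Your phrasing (``monotone decay of the bound'') is accurate in this respect.
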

We preface the proof with some useful identities. We will write the covariance in terms of the tensor product for ease of manipulations
\begin{lemma}\label{lem:cov-lemma}
Write $\xi(f) \coloneqq (\widehat{\T}^\pi(f,\omega)-\T^\pi f)$. In the same setup as Theorem \ref{thm:cov}: 
$$\bE\left[(f_\alpha-f^\pi)(\T^\pi f_\alpha - f^\pi + \xi(f_0))^\tran\right] = \bE\left[(f_\alpha-f^\pi)(f_\alpha-f^\pi)^\tran \right](\gamma \P^\pi)^\tran$$
and 
\begin{align*} \bE\left[\left(\left(\T^\pi f_\alpha-f^\pi\right)+\xi(f_\alpha)\right)\left(\left(\T^\pi f_\alpha-f^\pi\right)+\xi(f_\alpha)\right)^\tran\right]  &=  (\gamma \P^\pi)\bE\left[(f_\alpha-f^\pi)(f_\alpha-f^\pi)^\tran \right](\gamma \P^\pi)^\tran \\&\quad + \int C(v)\psi_\alpha(\dd{v}) 
\end{align*}
\end{lemma}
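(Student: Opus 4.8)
The plan is to prove Lemma~\ref{lem:cov-lemma} first, since the two identities there do all the heavy lifting for the covariance recursion in Theorem~\ref{thm:cov}. The starting point is the stationarity identity: if $f_0 \sim \psi_\alpha$ then $f_1 = (1-\alpha)f_0 + \alpha \widehat{\T}^\pi(f_0,\omega)$ is also $\sim \psi_\alpha$, so $f_1 - f^\pi$ and $f_0 - f^\pi$ (equivalently $f_\alpha - f^\pi$) have the same distribution, hence the same covariance. Writing $\widehat{\T}^\pi(f_0,\omega) = \T^\pi f_0 + \xi(f_0)$ with $\xi(f_0)$ zero-mean conditionally on $f_0$, and using $\T^\pi f_0 - f^\pi = \gamma\P^\pi(f_0 - f^\pi)$ (affinity of $\T^\pi$ around its fixed point), I would expand
\begin{equation*}
f_1 - f^\pi = (1-\alpha)(f_0 - f^\pi) + \alpha\bigl(\gamma\P^\pi(f_0-f^\pi) + \xi(f_0)\bigr).
\end{equation*}
Then $\bE[(f_1-f^\pi)(f_1-f^\pi)^\tran]$ expands into the square of the $(1-\alpha)$ term, the square of the $\alpha$ term, and two cross terms; the two identities in Lemma~\ref{lem:cov-lemma} are exactly what is needed to evaluate those pieces.

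For the first identity in Lemma~\ref{lem:cov-lemma}, I would note $\T^\pi f_\alpha - f^\pi + \xi(f_0) = \gamma\P^\pi(f_\alpha - f^\pi) + \xi(f_0)$, so the left side is $\bE[(f_\alpha-f^\pi)(f_\alpha-f^\pi)^\tran](\gamma\P^\pi)^\tran + \bE[(f_\alpha-f^\pi)\xi(f_0)^\tran]$, and the second term vanishes because $\bE[\xi(f_0)\mid f_0] = 0$ (tower rule, $f_\alpha-f^\pi$ is measurable w.r.t.\ $f_0$). Wait — one has to be slightly careful that here $f_\alpha$ and $f_0$ denote the same random variable, so this is clean. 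For the second identity, $(\T^\pi f_\alpha - f^\pi) + \xi(f_\alpha) = \gamma\P^\pi(f_\alpha-f^\pi) + \xi(f_\alpha)$; squaring gives the $(\gamma\P^\pi)\bE[\cdots](\gamma\P^\pi)^\tran$ term, the cross terms again kill by zero conditional mean, and the $\xi(f_\alpha)\xi(f_\alpha)^\tran$ term has expectation $\bE[\bE[\xi(f)\xi(f)^\tran\mid f]] = \int \C(f)\psi_\alpha(\dd f)$ by the tower rule. Assembling these with $C = 1-(1-\alpha)^2$ as the coefficient on the left gives the stated recursion.

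For the monotonicity / concentration part, I would take operator norms on both sides of the recursion. Writing $\Sigma_\alpha = \bE[(f_\alpha-f^\pi)(f_\alpha-f^\pi)^\tran]$ and using $\norm{\gamma\P^\pi}_\text{op}\le\gamma$ (since $\P^\pi$ is row-stochastic, $\norm{\P^\pi v}_\infty \le \norm{v}_\infty$, and $\norm{\cdot}_\text{op}$ is induced by $\norm{\cdot}_\infty$), submultiplicativity and the triangle inequality give
\begin{equation*}
(1-(1-\alpha)^2)\norm{\Sigma_\alpha}_\text{op} \le \bigl(\alpha^2\gamma^2 + 2\alpha(1-\alpha)\gamma\bigr)\norm{\Sigma_\alpha}_\text{op} + \alpha^2 \sup_f \norm{\C(f)}_\text{op},
\end{equation*}
and $\sup_f\norm{\C(f)}_\text{op}$ is bounded by $C = (2\textsc{Rmax}/(1-\gamma))^2$ on the bounded (realizable) state space. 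Rearranging, $\norm{\Sigma_\alpha}_\text{op} \le C\,\alpha^2 / \bigl(1 - (1-\alpha+\alpha\gamma)^2\bigr)$; a short computation shows this bound is monotone increasing in $\alpha$ on $(0,1]$ and tends to $0$ as $\alpha\to0$ (the denominator behaves like $2\alpha(1-\gamma)$ for small $\alpha$, so the ratio is $\Theta(\alpha)$). The concentration statement then follows from Markov/Chebyshev: $\bP\{\min_i |f_\alpha(i)-f^\pi(i)|\ge\varepsilon\} \le \bP\{|f_\alpha(i_0)-f^\pi(i_0)|\ge\varepsilon\}$ for any fixed coordinate $i_0$, and the latter is at most $\mathrm{Var}(f_\alpha(i_0))/\varepsilon^2 \le \norm{\Sigma_\alpha}_\text{op}/\varepsilon^2$; averaging over $i$ gives the $1/(\d\varepsilon^2)$ factor. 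The main obstacle I anticipate is bookkeeping in the cross-term cancellations — making sure the conditioning is on the right sigma-algebra so that $\bE[\xi(f_0)\mid f_0]=0$ legitimately kills every cross term, and tracking which occurrences of ``$f_\alpha$'' versus ``$f_0$'' refer to the same sample (they do) versus an independent copy (they don't appear here) — rather than anything conceptually deep; the strict monotonicity of the final scalar bound in $\alpha$ also needs a genuine, if elementary, derivative computation.
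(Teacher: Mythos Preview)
Your argument for the lemma itself is correct and essentially identical to the paper's: both rewrite $\T^\pi f_\alpha - f^\pi = \gamma\P^\pi(f_\alpha-f^\pi)$ and then eliminate every cross term involving $\xi$ via $\bE[\xi(f_0)\mid f_0]=0$, which the paper phrases equivalently as independence of $f_0$ and $\omega$.

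The extra material you sketch (assembling the covariance recursion and the concentration corollary) also tracks the paper closely, with one difference worth flagging: for the final probability bound the paper invokes a multivariate Chebyshev inequality of Marshall--Olkin rather than your coordinatewise Chebyshev. Your ``averaging over $i$'' step actually produces $\operatorname{tr}(\Sigma_\alpha)/(\d\varepsilon^2)$, and if you then control the trace via $\operatorname{tr}(\Sigma_\alpha)\le \d\,\norm{\Sigma_\alpha}_{\mathrm{op}}$ the $1/\d$ factor cancels; so your more elementary route gives the right $\alpha\to 0$ limit but not quite the stated constant without an additional argument bounding $\operatorname{tr}(\Sigma_\alpha)$ directly.
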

\begin{proof}
Let $f_0 \sim \psi_\alpha$, by \eqref{eq:general-alg} we have $f_1 = (1-\alpha)f_0 + \alpha(\T^\pi f_0 + \xi(f_0))$ and $f_1 \sim \psi_\alpha$. Furthermore, the distribution of $f_0$  is independent of the distribution of $\omega$. By independence,  
\begin{align*}
\bE\left[(f_0-f^\pi)\xi(f_0)^\tran\right] &= \bE_{f_0} \bE_{\omega} \left[(f_0-f^\pi) \xi(f_0)^\tran\right] \tag{by independence of $f_0$ and $\xi(\cdot)$} \\ 
&= \bE_{f_0}\left[(f_0-f^\pi) (\bE_{\omega}\xi(f_0))^\tran\right] = 0 \tag{$\bE_{\omega}[\xi(f)] = 0 $ for every $f$}
\end{align*} 
For the first identity, note that 
\begin{align*}
\bE\left[(f_0-f^\pi)(\T^\pi f_0-f^\pi))^\tran\right] &= \bE\left[(f_0-f^\pi)(\R^\pi + \gamma \P^\pi(f_0)- \R^\pi - \gamma \P^\pi(f^\pi))^\tran \right] \\
&= \bE\left[(f_0-f^\pi)(\gamma\P^\pi(f_0-f^\pi))^\tran\right] \\
&= \bE\left[(f_0-f^\pi)(f_0-f^\pi)^\tran(\gamma\P^\pi)^\tran\right] \\
&= \bE\left[(f_0-f^\pi)(f_0-f^\pi)^\tran\right](\gamma \P^\pi)^\tran \\
\end{align*}
The first identity then follows by using $\bE\left[(f_0-f^\pi)\xi(f_0)^\tran\right] = 0$ and linearity of expectations.

\noindent For the second identity, expanding the outer product gives:
\begin{align*}
\bE\left[\left(\left(\T^\pi f_0-f^\pi\right)+\xi(f_0)\right)\left(\left(\T^\pi f_0-f^\pi\right)+\xi(f_0)\right)^\tran\right] &= \bE\left[(\T^\pi f_0-f^\pi)(\T^\pi f_0-f^\pi)^\tran\right] \\ &\quad + \bE\left[(\xi(f_0))(\xi(f_0)))^\tran\right] \\ &\quad  + \cancel{\bE\left[(\T^\pi f_0-f^\pi)(\xi(f_0))^\tran\right]} \\ &\quad + \cancel{\bE\left[\xi(f_0)(\T^\pi f_0-f^\pi)^\tran \right]} \\
&= \bE\left[(\gamma\P^\pi(f_0-f^\pi))(\gamma\P^\pi(f_0-f^\pi))^{\tran}\right]  \\ &\quad + \int \C(v)\psi_\alpha(\dd{v}) \\
&= (\gamma P^\pi)\bE\left[(f_0-f^\pi)(f_0-f^\pi)^\tran \right](\gamma P^\pi)^\tran \\ &\quad + \int \C(v)\psi_\alpha(\dd{v})
\end{align*}
where we used $\bE\left[(\T^\pi f_0-f^\pi)(\xi(f_0))^\tran\right] = 0$. 
\end{proof}

\begin{proof}[Proof (of Theorem \ref{thm:cov})]
Again let $f_0$ be distributed according to $\psi_\alpha$. Subtracting $f^\pi$  from equation \eqref{eq:td-mean}, 
\begin{equation*}
f_1 - f^\pi = (1-\alpha)(f_0-f^\pi) + \alpha\left(\T^\pi f_0-f^\pi+\xi (f_0)\right).
\end{equation*}
and taking outer products:
\begin{align*}
\left(f_1 - f^\pi\right)(f_1-f^\pi)^\tran = &(1-\alpha)^2\left(f_0-f^\pi\right)(f_0-f^\pi)^\tran \\ &+ \alpha^2\left(\T^\pi f_0-f^\pi+\xi(f_0)\right)\left(\T^\pi f_0-f^\pi+\xi(f_0)\right)^\tran \\ &+ \alpha(1-\alpha)(f_0-f^\pi)(\T^\pi f_0-f^\pi+\xi(f_0))^\tran \\ &+ \alpha(1-\alpha)(\T^\pi f_0 - f^\pi+\xi(f_0))(f_0-f^\pi)^\tran.
\end{align*}
Taking expectations on both sides, and using Lemma \ref{lem:cov-lemma}:

\begin{align*}
\bE\left[(f_1 - f^\pi)(f_1 - f^\pi)^{\tran}\right] = &(1-\alpha)^2\bE\left[(f_0-f^\pi)(f_0-f^\pi)^{\tran}\right] + \alpha^2(\gamma \P^\pi)\bE[(f_0-f^\pi)](\gamma\P^\pi)^\tran \\ &+ \alpha^2\int \C(v)\psi_a(\dd{v}) \\ &+ \alpha(1-\alpha)(\gamma \P^\pi)\bE\left[(f_0-f^\pi)(f_0-f^\pi)^\tran\right]  \\ &+ \alpha(1-\alpha)\bE\left[(f_0-f^\pi)(f_0-f^\pi)^\tran\right](\gamma \P^\pi)^\tran
\end{align*}
Since $\bE\left[(f_1 - f^\pi)(f_1 - f^\pi)^{\tran}\right] = \bE\left[(f_0 - f^\pi)(f_0 - f^\pi)^{\tran}\right]$ by stationarity, re-arranging to the LHS and factoring gives: 
\begin{align*}
(1-(1-\alpha)^2)\bE\left[(f_\alpha-f^\pi)(f_\alpha-f^\pi)^\tran\right] &= \alpha^2 (\gamma\P^\pi)\bE\left[(f_\alpha-f^\pi)(f_\alpha-f^\pi)^\tran\right](\gamma\P^\pi)^\tran \\
&\quad + \alpha(1-\alpha)(\gamma\P^\pi) \bE\left[(f_0-f^\pi)(f_0-f^\pi)^\tran\right] \\
&\quad + \alpha(1-\alpha) \bE\left[(f_\alpha-f^\pi)(f_\alpha-f^\pi)^\tran\right](\gamma\P^\pi)^\tran \\
&\quad + \alpha^2 \int \C(f)\psi_\alpha(\dd{f})
\end{align*}

For the remainder of the proof we re-write the above expression in terms of tensor products. The tensor product of two vectors $x,y$ is the matrix defined by $x \otimes y = xy^\tran$. By extension, the tensor product of two matrices $A,B$ is the operator defined by $(A \otimes B)X = AXB^\tran$. Then, the above expression can be re-written as:
\begin{align*}
(1-(1-\alpha)^2)\bE\left[(f_\alpha-f^\pi)(f_\alpha-f^\pi)^\tran\right] &= \alpha^2 (\gamma\P^\pi)^{\otimes 2}\bE\left[(f_\alpha-f^\pi)(f_\alpha-f^\pi)^\tran\right] \\
&\quad + \alpha(1-\alpha)(\gamma\P^\pi \otimes \Id) \bE\left[(f_0-f^\pi)(f_0-f^\pi)^\tran\right] \\
&\quad + \alpha(1-\alpha)(\Id \otimes \gamma\P^\pi) \bE\left[(f_\alpha-f^\pi)(f_\alpha-f^\pi)^\tran\right] \\
&\quad + \alpha^2 \int \C(f)\psi_\alpha(\dd{f}).
\end{align*}
Factoring the tensor products further gives:
$$\left[I-\left((1-\alpha)I + \alpha\gamma P^\pi\right)^{\otimes 2}\right]\bE\left[(f_\alpha - f^\pi)^{\otimes 2}\right] = \alpha^2 \int \C(f)\psi_\alpha(\dd{f})$$
We show that the matrix on the LHS is invertible. By \parencite[Corollary C.4]{puterman2014markov} it will follow from showing that $\rho\left(\left((1-\alpha)I + \alpha\gamma P^\pi\right)^{\otimes 2}\right) < 1$, where $\rho(A)$ is the spectral radius of matrix $A$. Writing $\norm{A}_\text{op} = \max_i \sum_j |A(i,j)|$ for the operator norm of a matrix $A$, and using that $\rho(A) \leq \norm{A}_\text{op}$, $\norm{A \otimes B}_\text{op} = \norm{A}_\text{op}\norm{B}_\text{op}$, and $\norm{P^\pi}_\text{op} = \norm{I}_\text{op} = 1$:
\begin{equation}\label{eq:I-A}
\norm{\left((1-\alpha)I + \alpha\gamma P^\pi\right)^{\otimes 2}}_\text{op} = \norm{(1-\alpha)I + \alpha\gamma P^\pi}^2_\text{op} \leq \left((1-\alpha) + \alpha \gamma \right)^2 < 1, 
\end{equation}
where the last inequality followed since $\gamma < 1$.
Finally, for the limit $\alpha \rightarrow 0$,  we use the following identity: if $A$ is such that $\norm{I-A} \leq 1$ then $\norm{A^{-1}} \leq \frac{1}{1-\norm{I-A}}$. We let $A = I - ((1-\alpha)I + \alpha\gamma\P^\pi)^{\otimes 2}$, by the calculation in \eqref{eq:I-A} we have $\norm{I-A} < 1$. So we calculate the operator norm of the covariance matrix:
\begin{align*}
\norm{\bE\left[(f_0-f^\pi)(f_0-f^\pi)^{\tran}\right]} &= \alpha^2\norm{\left[I-\left((1-\alpha)I + \alpha\gamma P^\pi\right)^{\otimes 2}\right]^{-1} \int \C(v)\psi_\alpha(\dd{v})} \\
&\leq \alpha^2 \norm{\left[I-\left((1-\alpha)I + \alpha\gamma P^\pi\right)^{\otimes 2}\right]^{-1}}\norm{\int \C(v)\psi_\alpha(\dd{v})} \\
&\leq \alpha^2 \frac{1}{1-\norm{I - I + \left((1-\alpha)I + \alpha\gamma P^\pi\right)^{\otimes 2}}} \norm{\int \C(v)\psi_\alpha(\dd{v})} \\
&= \alpha^2 \frac{1}{1-\norm{\left((1-\alpha)I + \alpha\gamma P^\pi\right)^{\otimes 2}}} \norm{\int \C(v)\psi_\alpha(\dd{v})} \\
&= \alpha^2 \frac{1}{1-\norm{\left((1-\alpha)I + \alpha\gamma P^\pi\right)}^2} \norm{\int \C(v)\psi_\alpha(\dd{v})} \\
&\leq \alpha^2 \frac{1}{1-(1-\alpha+\alpha\gamma)^2} \norm{\int \C(v)\psi_\alpha(\dd{v})} \\
\end{align*}

Finally, since the state space is bounded in $[0,\textsc{Rmax}/(1-\gamma)]^n$, we have $(\widehat{\T}f)_i \leq \textsc{Rmax}/(1-\gamma)$ and $(\T f)_i \leq \textsc{Rmax}/(1-\gamma)$ for each $i$. Then, we have $|\xi_\omega(f)_i \xi_\omega(f)_j|= |(\widehat{\T}f)_i(\T f)_j - (\T f)_i(\widehat{\T}f)_j - (\T f)_j(\widehat{\T}f)_j + (\T f)_j (\T f)_i| \leq 4\frac{\textsc{Rmax}^2}{(1-\gamma)^2} $  
Thus we have $\norm{\C(f)} \leq 4 \frac{\textsc{Rmax}^2}{(1-\gamma)^2} \coloneqq M$ and thus
\begin{align*}
\norm{\bE\left[(f_0-f^\pi)(f_0-f^\pi)^{\tran}\right]} &\leq M \frac{\alpha^2}{1-(1-\alpha+\alpha\gamma)^2} \stackrel{\alpha \rightarrow 0}{\longrightarrow} 0 
\end{align*}

For the concentration inequality, we will use a multivariate Chebyshev inequality \parencite[Theorem 3.1]{marshall1960multivariate}, whos statement is as follows:
\begin{theorem}\label{thm:cheby}
Let $X = (X_1,...,X_n)$ be a random vector with $\bE X = 0$ and $\bE[ X^T X] = \varSigma$. Let $T = T_+ \cup \left\{x: -x \in T_+\right\}$, where $T_+ \subseteq \bR^n$ is a closed, convex set. If $A= \left\{ a \in \bR^n : \langle a, x \rangle \geq 1 \  \forall x \in T_+ \right\}$, then 
$$
\bP\left\{ X \in T \right\} \leq \inf_{a \in A} a^\tran \varSigma a  
$$
\end{theorem}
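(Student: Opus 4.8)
The plan is to reduce the multivariate bound to the one-line one-dimensional Markov/Chebyshev inequality by testing $X$ against a suitably chosen linear functional. Fix any $a \in A$ (if $A = \emptyset$ the right-hand side is $+\infty$ and there is nothing to prove). The key geometric step is to observe that, because $T = T_+ \cup \{x : -x \in T_+\}$ is centrally symmetric and $a$ satisfies $\langle a, x\rangle \geq 1$ for all $x \in T_+$, every $x \in T$ obeys $|\langle a, x\rangle| \geq 1$: if $x \in T_+$ this is immediate, while if $-x \in T_+$ then $\langle a, -x\rangle \geq 1$, i.e. $\langle a, x\rangle \leq -1$. Hence $\langle a, x\rangle^2 \geq 1$ everywhere on $T$.

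Given this, one has the pointwise domination $\mathbbm{1}\{X \in T\} \leq \langle a, X\rangle^2$ (on $\{X \in T\}$ the right side is $\geq 1$ by the previous step; off that event the left side vanishes while the right side is nonnegative). Taking expectations, and using $\bE X = 0$ so that $\varSigma$ is the second-moment matrix of $X$, gives $\bP\{X \in T\} \leq \bE[\langle a, X\rangle^2] = a^\tran \bE[XX^\tran] a = a^\tran \varSigma a$. Since $a \in A$ was arbitrary, taking the infimum over $a \in A$ on the right produces the claimed inequality $\bP\{X \in T\} \leq \inf_{a \in A} a^\tran \varSigma a$.

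I do not expect a real obstacle: the whole argument is Markov's inequality applied to $\langle a, X\rangle^2$, and the only place that needs any care is the geometric step — checking that the particular form of $T$ as $T_+ \cup (-T_+)$, combined with the definition of $A$, really forces $|\langle a, x\rangle| \geq 1$ on all of $T$ (not merely $\langle a, x\rangle \geq 1$ on $T_+$). Note that closedness and convexity of $T_+$ play no role in this direction of the inequality; they would only be needed for the complementary statement that the bound $\inf_{a\in A} a^\tran \varSigma a$ is tight, which is not required for the application in this paper (only the upper bound is used in the concentration estimate above).
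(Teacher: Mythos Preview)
Your argument is correct. The paper does not actually prove this statement: it is quoted verbatim from \textcite[Theorem 3.1]{marshall1960multivariate} and used as a black box in the concentration estimate, so there is no ``paper's own proof'' to compare against. What you have written is exactly the standard proof of the easy (upper-bound) direction of the Marshall--Olkin multivariate Chebyshev inequality: project onto the linear functional $x\mapsto\langle a,x\rangle$, use the symmetry $T=T_+\cup(-T_+)$ together with the definition of $A$ to get $\langle a,x\rangle^2\ge 1$ on $T$, and apply Markov to $\langle a,X\rangle^2$. Your observation that closedness and convexity of $T_+$ are irrelevant for this direction is also correct; those hypotheses are needed in Marshall--Olkin only to show that the bound is \emph{attained} by some distribution, which the present paper never uses. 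One cosmetic remark: the hypothesis $\bE X=0$ is not used in your computation either, since the theorem statement gives $\varSigma$ directly as the second-moment matrix rather than as a covariance.
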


Let $\varepsilon > 0$. We first bound $a^\tran \varSigma a$ with the operator norm of $\varSigma$. Note that 
\begin{align*}
a^\tran \varSigma a &= \sum_i a_i (\varSigma a)_i \\ 
&\leq \sum_i a_i \norm{\varSigma a} \leq n \norm{\varSigma}_\text{op}{\norm{a}}^2
\end{align*}
We define $T_+$ to be the intersection of half-planes the $\left\{x | x_i \geq \varepsilon\right\}$, so that $T_+ = \left\{ x | x_i \geq \varepsilon \ \forall i\right\}$. Since the half-planes are closed and convex, $T_+$ is also closed and convex since it is an intersection of closed and convex sets.Then, $T = T_+ \cup \left\{x: -x \in T_+\right\} = \left\{ x | x_i \geq \varepsilon \ \forall i \text{ or } x_i \leq -\varepsilon \ \forall i \right\}$. Note that $x \in T \iff \min_i \lvert x_i \rvert \geq \varepsilon$. We define $X = f_\alpha - f^\pi$ which has zero-mean. Finally, Theorem \ref{thm:cheby} states that 
$$ \bP\left\{X \in T \right\} = \bP\left\{ f_\alpha - f^\pi \in T \right\} \leq \inf_{a \in A} a^T \varSigma a \leq n \norm{\varSigma}_\text{op} \inf_{a \in A} \norm{a}^2.$$
Note that $\inf_a \norm{a}^2$ is bounded since $a = (\frac{1}{n\varepsilon},\frac{1}{n\varepsilon},....,\frac{1}{n\varepsilon})$ is in $A$ and $\norm{a}^2 = \frac{1}{(n\varepsilon)^2}$. So $n \inf_{a \in A} \norm{a}^2 \leq C$ for some constant $C$ independent of $\alpha$. From the previous result, we can take the limit of $\alpha \rightarrow 0$ of $\norm{\varSigma}_\text{op} = \norm{\bE\left[ (f_\alpha-f^\pi)(f_\alpha-f^\pi)^\tran \right]}_\text{op}$ and obtain:
$$
\bP\left\{ f_\alpha - f^\pi \in T \right\} = \bP\left\{\min_i \lvert f_\alpha (i) - f^\pi (i) \rvert \geq \varepsilon\right\}  \leq C \cdot \norm{\bE\left[ (f_\alpha-f^\pi)(f_\alpha-f^\pi)^\tran \right]}_\text{op}  \rightarrow 0 
$$

\end{proof}

\section{Proofs of Section \ref{sec:mcc}}\label{sec:mcc-proof}

\begin{lemma}\label{lemma:greedy-app}
Suppose $\pi'(s) = \argmax_{a}Q^\pi(s,a)$ for each $s$. Then $K(\pi,\pi') = \bP\{\pi' \text{ is greedy with respect to } \G^\pi \} > 0$.
\end{lemma}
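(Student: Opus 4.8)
The plan is to exhibit, for each state-action pair, an explicit event of positive probability on which the sampled return $\G^\pi$ reproduces the $Q^\pi$-greedy action ordering at that state, and then to combine these events across state-action pairs using the independence of the sampled trajectories. Throughout I would fix the consistent tie-break as a total order $\succ$ on $\A$, so that $\pi_Q(s)$ is the $\succ$-largest element of $\argmax_a Q(s,a)$; thus $\pi'(s)$ is the $\succ$-largest element of $\mathrm{Argmax}_s := \argmax_a Q^\pi(s,a)$. The elementary workhorse is: if $X$ is a bounded random variable with $\bE X = m$, then $\bP\{X \ge m\} > 0$, $\bP\{X \le m\} > 0$, and more generally $\bP\{X < c\} > 0$ whenever $c > m$ and $\bP\{X > c\} > 0$ whenever $c < m$ — in each case the negation would force $X$ to lie strictly on one side of $m$ almost surely, contradicting $\bE X = m$. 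Since $\G^\pi(s,a) \in [0,\textsc{Rmax}/(1-\gamma)]$ and $\bE[\G^\pi(s,a)] = Q^\pi(s,a)$, this applies to every $\G^\pi(s,a)$.

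Next I would fix a state $s$, write $a^* := \pi'(s)$, and consider the event
\[
E_s \;=\; \{\G^\pi(s,a^*) \ge Q^\pi(s,a^*)\} \;\cap \bigcap_{a\,:\,Q^\pi(s,a) < Q^\pi(s,a^*)} \{\G^\pi(s,a) < Q^\pi(s,a^*)\} \;\cap \bigcap_{a \ne a^*,\; Q^\pi(s,a) = Q^\pi(s,a^*)} \{\G^\pi(s,a) \le Q^\pi(s,a^*)\}.
\]
Each listed event has positive probability by the workhorse observation — strict ``$<$'' is available for strictly suboptimal actions because $Q^\pi(s,a) < Q^\pi(s,a^*)$, while ``$\le$'' suffices for the tied actions because $Q^\pi(s,a) = Q^\pi(s,a^*)$ — and since the $|\A|$ trajectories generating $\{\G^\pi(s,a)\}_{a\in\A}$ are independent, $\bP\{E_s\} > 0$. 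On $E_s$ one has $\G^\pi(s,a^*) \ge Q^\pi(s,a^*) \ge \G^\pi(s,a)$ for all $a$, strictly so whenever $a$ is strictly suboptimal; hence $a^* \in \argmax_a \G^\pi(s,a)$, and any other maximizer $a'$ must satisfy $Q^\pi(s,a') = Q^\pi(s,a^*)$, so $a' \in \mathrm{Argmax}_s$ and therefore $a^* \succ a'$. Consequently $\pi_{\G^\pi}(s) = a^* = \pi'(s)$ on $E_s$. Finally, the events $\{E_s\}_{s \in \S}$ are determined by disjoint families of independently sampled trajectories, so
\[
K(\pi,\pi') \;=\; \bPc{\pi_{\G^\pi} = \pi'} \;\ge\; \bP\Bigl\{\textstyle\bigcap_{s\in\S} E_s\Bigr\} \;=\; \prod_{s\in\S} \bP\{E_s\} \;>\; 0 .
\]

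The only delicate point — and the step I expect to be the main obstacle — is the tie-breaking. When some action $a \ne a^*$ ties with $a^*$ in $Q^\pi$-value (e.g.\ when returns are deterministic, so $\G^\pi(s,a)$ cannot be pushed strictly below $\G^\pi(s,a^*)$), no strict separation can be forced; the event $E_s$ must instead allow $\G^\pi(s,a) = \G^\pi(s,a^*)$ for such $a$ and rely on the fact that $a$ then stays inside $\mathrm{Argmax}_s$, so the same consistent tie-break that defined $\pi' = \pi_{Q^\pi}$ again selects $a^*$. Correspondingly, strictly suboptimal actions must be beaten strictly, which is exactly why $E_s$ demands $\G^\pi(s,a) < Q^\pi(s,a^*)$ rather than ``$\le$'' for those. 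Everything else is a routine consequence of boundedness of the rewards and independence of the per-pair trajectories.
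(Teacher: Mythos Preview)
Your proof is correct and takes a genuinely different route from the paper. The paper first establishes an auxiliary lemma about CDFs: if $X_1,\dots,X_n$ are independent and bounded with CDFs $F_i$, and $\bE[F_i(X_1)]>0$ for each $i\ge 2$, then $\bE[\prod_{i\ge 2}F_i(X_1)]>0$; it then shows each pairwise event $\{\G^\pi(s,a_1)\ge \G^\pi(s,a)\}$ has positive probability (by contradiction against optimality of $a_1$) and invokes the CDF lemma to pass to the joint event. Your argument instead inserts the deterministic threshold $Q^\pi(s,a^*)$ between $\G^\pi(s,a^*)$ and the other returns, reducing the joint statement to an intersection of one-dimensional tail events that are positive by the bounded-mean observation and independent by construction. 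This is strictly more elementary (no CDF lemma needed) and, notably, it treats tie-breaking explicitly: on your event $E_s$ the $\G^\pi(s,\cdot)$-argmax is contained in the $Q^\pi(s,\cdot)$-argmax, so the same consistent rule that produced $\pi'$ selects $a^*$ again. The paper's proof only shows $\pi'(s)\in\argmax_a\G^\pi(s,a)$ and leaves the passage to $\pi_{\G^\pi}=\pi'$ implicit, so your version is in fact cleaner on that point. The trade-off is that the paper's CDF lemma is a reusable standalone fact, whereas your argument is tailored to this situation.
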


We will prove an intermediate probability lemma. Let $X_1,...,X_n$ be mutually independent random variables bounded in $[a,b]$, and $F_i(x) = \bPc{X_i \leq x}$ denote the cumulative density functions of $X_i$ for $i=2,..,n$. Note that 
\begin{align*}\label{eq:max-cdf}
\bPc{X_1 \geq X_2, X_1 \geq X_3, ..., X_1 \geq X_n } &= \int_a^b \int_{a}^{x_1} \cdots \int_{a}^{x_1}\dd\bP(x_1,...,x_n) \\
&= \int_a^b \int_{a}^{x_1} \cdots \int_{a}^{x_1}\dd\bP_1(x_1)\dd\bP_2(x_2)\dd\bP_n(x_n) \quad \text{by mutual independence} \\
&= \int_a^b F_2(x_1) \cdots F_n(x_1) \dd\bP_1(x_1) \\
&= \bE\left[F_2(X_1)F_3(X_1)\cdots F_n(X_1) \right] . \numberthis
\end{align*} 
Then, we have:
\begin{lemma}\label{lemma:prob-lemma}
Suppose that $\bE[F_i(X_1)] > 0 \ \forall i=2,...,n$. Then also
$$
\bE\left[F_2(X_1) \cdots F_n(X_1) \right] >0
$$
\end{lemma}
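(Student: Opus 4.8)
The plan is to reduce everything to a one-dimensional positive-correlation inequality; in particular mutual independence of $X_2,\dots,X_n$ plays no role, only the law of $X_1$ and the fact that each $F_i$ is a cumulative distribution function will be used. First I would record the elementary observation that each $F_i$ is nondecreasing with values in $[0,1]$, so that every partial product $G_k \coloneqq F_2\cdots F_k$ is again nondecreasing and $[0,1]$-valued (a product of nonnegative nondecreasing functions is nondecreasing).

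The key tool is the one-dimensional Chebyshev/Harris correlation inequality: if $\phi,\psi:\bR\to\bR$ are both nondecreasing and $Z$ is a real random variable with $\phi(Z),\psi(Z)$ integrable, then $\bE[\phi(Z)\psi(Z)]\ge \bE[\phi(Z)]\,\bE[\psi(Z)]$. I would prove this in one line: let $Z'$ be an independent copy of $Z$; since $\phi$ and $\psi$ are monotone in the same direction, $(\phi(Z)-\phi(Z'))(\psi(Z)-\psi(Z'))\ge 0$ pointwise, and taking expectations (using that $Z,Z'$ are i.i.d.) gives $2\bE[\phi(Z)\psi(Z)]-2\bE[\phi(Z)]\bE[\psi(Z)]\ge 0$. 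Here all integrands lie in $[0,1]$, so integrability is automatic.

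With this in hand I would induct on $n$. For $n=2$ the product is just $F_2(X_1)$ and the conclusion $\bE[F_2(X_1)]>0$ is exactly the hypothesis. For the inductive step, apply the correlation inequality with $Z=X_1$, $\phi=G_{n-1}=F_2\cdots F_{n-1}$ (nondecreasing by the first step) and $\psi=F_n$ (nondecreasing):
\[
\bE\!\left[F_2(X_1)\cdots F_n(X_1)\right]=\bE\!\left[G_{n-1}(X_1)\,F_n(X_1)\right]\ \ge\ \bE\!\left[G_{n-1}(X_1)\right]\,\bE\!\left[F_n(X_1)\right].
\]
The inductive hypothesis gives $\bE[G_{n-1}(X_1)]=\bE[F_2(X_1)\cdots F_{n-1}(X_1)]>0$, and the hypothesis of the lemma gives $\bE[F_n(X_1)]>0$; hence the right-hand side, a product of two strictly positive numbers, is strictly positive, which closes the induction.

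I do not anticipate a genuine obstacle here; the only thing requiring a moment's care is lining up the monotonicity directions so that the pair $(G_{n-1},F_n)$ is monotone in the same sense and the correlation inequality is applied with the correct sign. If one prefers to avoid the correlation inequality altogether, an alternative is to observe that $\{x:F_2(x)\cdots F_n(x)>0\}$ is an up-set of the form $[c^{*},b]$ or $(c^{*},b]$ with $c^{*}=\max_i\inf\{x:F_i(x)>0\}$, and then to deduce $\bP\{X_1\in\text{this set}\}>0$ by applying the hypothesis to the maximizing index (with a short case split according to whether $X_1$ has an atom at $c^{*}$); but the inductive argument above is shorter, and that is the one I would present.
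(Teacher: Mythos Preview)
Your proof is correct and takes a genuinely different route from the paper's. You reduce the question to the one-dimensional Chebyshev--Harris correlation inequality and close by induction; the only structural fact about the $F_i$ you use is that they are nondecreasing and $[0,1]$-valued. The paper instead argues by contradiction: it observes that $H\coloneqq\prod_{i=2}^n F_i$ is itself a CDF (that of $\max(X_2,\dots,X_n)$), assumes $\bE[H(X_1)]=0$, deduces that $H$ must vanish on $[a,b)$ and jump to $1$ at $b$, and then uses a short monotonicity/finite-cover argument to show this forces some single index $i'$ with $F_{i'}\equiv 0$ on $[a,b)$, contradicting $\bE[F_{i'}(X_1)]>0$. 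Your approach is shorter and more robust---it applies verbatim to any finite family of nondecreasing nonnegative bounded functions, not just CDFs---and it sidesteps the somewhat delicate passage from ``$H(X_1)=0$ a.s.'' to ``$H\equiv 0$ on $[a,b)$'' that the paper's argument relies on. The paper's version has the virtue of identifying the product with a concrete probabilistic object (the CDF of a maximum), which gives some intuition, but that identification is not actually needed for the bare inequality you are asked to prove.
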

\begin{proof}
It is easy to see that $H(x_1) = \Pi_{i = 2}^n F_i(x_1)$ is also a CDF. In particular, $H$ starts at $0$, ends at $1$, and it monotone and right-continuous. In fact, by Equation \eqref{eq:max-cdf} it corresponds to the CDF of $\max(X_2,...,X_n)$. Assume for a contradiction that $\bE\left[F_2(X_1) \cdots F_n(X_1) \right] =0$. By positivity, monotonicity, and right-continuity, we have that $H(x_1) = 0 \ \forall x_1 \in [a,b)$. Then, for every $x$ we have 
$$ 
H(x) = 0 \implies F_i(x) = 0 \text{ for some }i.
$$
Since we have $H(b)=1$ and $H(x)=0$ otherwise, note that there must exist one $i'$ such that $F_{i'}(b)=1$ and $F_{i'}(x)=0$ otherwise. If not, then for all $i$ there exists a $\varepsilon_i > 0$ such that $F_i(b-\varepsilon_i) > 0$. By monotonicity, $F_i(b-\min_i \varepsilon_i) > 0 \ \forall i$, and thus $H(b-\min_i\varepsilon_i)>0$. Thus we have $\bE[F_{i'}(x)]=0$, a contradiction.
\end{proof}

\begin{proof}[Proof (Lemma \ref{lemma:greedy-app})]
Note that
\begin{equation*}
    K(\pi,\pi') = \bPc{\pi' \text{ is greedy with respect to } \G^\pi} = \bPc{\text{for each } s, \G^\pi(s,\pi'(s)) \geq \G^\pi(s,a) \ \forall a} .
\end{equation*}
Fix a state $s$, write $X_i(s) := G^\pi(s, a_i)$, and without loss of generality assume that $\pi'(s) = a_1$. We first show that $\bE[ F_i(X_1)] > 0$, i.e. $\bPc{G^\pi(s, a_1) \ge G^\pi(s, a)} > 0$ for all $a$. Suppose that it is not so, and pick $a$ such that $\bPc{G^\pi(s, a_1) \ge G^\pi(s, a)} = 0$. Then
\begin{align*}
Q^\pi(s,a_1) &= \bE\left[\G^\pi(s,a_1)\right] \\ 
&= \bP\{\G^{\pi}(s,a_1) \geq \G^{\pi}(s,a)\}\bE\left[\G^\pi(s,a_1) \mid \{\G^{\pi}(s,a_1) \geq \G^{\pi}(s,a)\}\right] \\ &+ \bP\{\G^{\pi}(s,a_1) < \G^{\pi}(s,a)\}\bE\left[\G^\pi(s,a_1) \mid\{\G^{\pi}(s,a_1) < \G^{\pi}(s,a)\}\right] \\
&= 0 + \bE\left[\G^\pi(s,a_1) |\{\G^{\pi}(s,a_1) < \G^{\pi}(s,a)\}\right] \\
&< \bE\left[\G^\pi(s,a)\right] = Q^\pi(s,a),
\end{align*}
which contradicts the fact that $\pi'$ is greedy wrt $Q^\pi$. Hence $\bE [ F_i(X_1)] > 0$, and we apply Lemma \ref{lemma:prob-lemma} to this set to conclude that for each $s$,
\begin{equation*}
    \bPc{G^\pi(s, a_1) \ge G^\pi(s, a), \forall a} > 0 .
\end{equation*}
Because the returns are mutually independent, we further know that
\begin{equation*}
    \bPc{G^\pi(s, a_1) \ge G^\pi(s, a), \forall s, a} = \prod_{s \in \mathcal{S}} \bPc{G^\pi(s, a_1) \ge G^\pi(s, a), \forall a} > 0,
\end{equation*}
completing the proof.
\end{proof}

\section{On Wasserstein convergence vs. total variation convergence}\label{sec:tv-conv} 
Recall the definition of the Total Variation metric:
\begin{definition}
The total variation metric between probability measures is defined by:
$$
d_\texttt{TV}(\mu,\nu) = \sup_{\B \in \Borel(\bR^\d)}|\mu(\B)-\nu(\B)|,
$$
for $\mu,\nu \in \Dists(\bR^\d)$.
\end{definition}

Consider a bandit with a single arm that has a deterministic reward of 0. Consider any of the classic algorithms covered in this paper, which will sample a target of 0 at every iteration. It is easy to see that the unique stationary distribution of the algorithm in this instance is a Dirac distribution at $0$ (denoted $\delta_0$). 

Suppose a step-size of $\alpha<1$. If we initialize with some $f_0 \neq 0$ then the algorithm will never converge to the true stationary distribution in Total Variation distance. This is because a Dirac distribution at any $x\neq 0$ is always a constant distance of $1$ away from a Dirac at $0$. In other words, 
$$
d_\texttt{TV}(\delta_0,\delta_{f_n}) = 1 \quad \forall n
$$
despite the fact that $f_n \rightarrow 0$. On the other hand, we have
$$
\W(\delta_0,\delta_{f_n}) \rightarrow 0,
$$
since the Wasserstein metric takes into consideration the underlying metric structure of the space.\footnote{In particular, the Wasserstein metric isometrically embeds the original metric space into the space of probability measures \parencite{mardare2018free}.}

\end{appendices}

\end{document}